\documentclass[]{article}
\usepackage{proceed2e}

\usepackage[T1]{fontenc}
\usepackage[utf8x]{inputenc}
\usepackage[english]{babel}

\usepackage[numbers]{natbib}
\usepackage{times}

\usepackage{amsmath}
\usepackage{amssymb}
\usepackage{amsthm}
\usepackage{algorithm}
\usepackage{algorithmic}
\usepackage{float}
\newfloat{Subroutine}{thp}{lop}
\usepackage{eqparbox} 
\usepackage{placeins}

\usepackage{setspace}
\usepackage{xspace}

\newcommand{\todod}[1]{}
\newcommand{\todom}[1]{}
\newcommand{\todomi}[1]{}
\newcommand{\todoa}[1]{}
\newcommand{\todoaout}[1]{}

\usepackage[colorlinks,
           linkcolor=red,
           citecolor=blue,
           urlcolor=magenta,
           linktocpage,
           plainpages=false]{hyperref}

\usepackage{xcolor} 
\usepackage{wrapfig}
\usepackage{array}
\usepackage{enumitem}
\usepackage{mdframed}
\usepackage{bm}
\usepackage{amsfonts} 

\def\:#1{\protect \ifmmode {\mathbf{#1}} \else {\textbf{#1}} \fi}

\newcommand{\indfunc}{\mathbb{I}}
\newcommand{\condbar}{\;\middle|\;}
\renewcommand{\epsilon}{\varepsilon}
\renewcommand{\Re}{\mathbb{R}}
\newcommand{\X}{\mathcal{X}}
\newcommand{\bphi}{\boldsymbol{\phi}}
\newcommand{\nystrom}{Nystr\"{o}m\xspace}
\newcommand{\inkoracle}{\texttt{\textcolor[rgb]{0.5,0.2,0}{INK-Oracle}}\xspace}
\newcommand{\inkestimate}{\texttt{\textcolor[rgb]{0.5,0.2,0}{INK-Estimate}}\xspace}
\newcommand{\batchexact}{\texttt{Batch-Exact}\xspace}
\newcommand{\directsample}{\texttt{Direct-sample}\xspace}

\newcommand{\shrinkexpandop}{\texttt{\textcolor[rgb]{0.5,0.2,0}{Shrink-Expand}}\xspace}
\newcommand{\shrinkop}{\texttt{\textcolor[rgb]{0.5,0.2,0}{Shrink}}\xspace}
\newcommand{\expandop}{\texttt{\textcolor[rgb]{0.5,0.2,0}{Expand}}\xspace}

\newcommand{\wt}[1]{\widetilde{#1}}
\newcommand{\wh}[1]{\widehat{#1}}

\newcommand{\wb}[1]{\overline{#1}}
\newcommand{\transp}{\mathsf{T}}
\DeclareMathOperator*{\argmin}{arg\,min}
\DeclareMathOperator*{\argmax}{arg\,max}
\DeclareMathOperator*{\Tr}{Tr}

\DeclareMathOperator*{\Diag}{Diag}

\DeclareMathOperator*{\Rank}{Rank}

\newcommand{\norm}[2]{\left\Vert #1 \right\Vert_{#2}}
\newcommand{\normsmall}[1]{\Vert #1 \Vert}

\newcommand{\probability}{\mathbb{P}}

\DeclareMathOperator*{\expectedvalue}{\mathbb{E}}

\newcommand{\Real}{\mathbb{R}}

\newcommand{\kerfunc}{\mathcal{K}}
\newcommand{\kermatrix}{{\:K}}
\newcommand{\featkermatrix}{{\:\Phi}}

\newcommand{\deff}[1]{d_{\text{eff}}(#1)}
\newcommand{\adeff}[1]{\wt{d}_{\text{eff}}(#1)}
\newcommand{\atau}{\wt{\tau}}
\newcommand{\akermatrix}{\:{\wt{K}}}
\newcommand{\bkermatrix}{\:{\wb{K}}}

\newcommand{\selmatrix}{{\:S}}

\newcommand{\dataset}{\mathcal{D}}

\newcommand{\bigotime}{\mathcal{O}}

\newcommand{\R}{\mathcal{R}}

\newtheorem{theorem}{Theorem}
\newtheorem{corollary}{Corollary}
\newtheorem{definition}{Definition}
\newtheorem{lemma}{Lemma}
\newtheorem{proposition}{Proposition}

\title{Analysis of Nystr\"om Method with Sequential Ridge Leverage Score Sampling}

\author{ {\bf Daniele Calandriello} \\
SequeL team \\
INRIA Lille - Nord Europe \\
\And
{\bf Alessandro Lazaric}  \\
SequeL team \\
INRIA Lille - Nord Europe \\
\And
{\bf Michal Valko}   \\
SequeL team \\
INRIA Lille - Nord Europe \\
}

\begin{document}

\maketitle

\begin{abstract}
    Large-scale kernel ridge regression (KRR) is limited by the need to store
    a large kernel matrix~$\kermatrix_t$. To avoid storing the entire matrix $\kermatrix_t$, Nystr\"{o}m methods subsample a
    subset of columns of the kernel matrix, and efficiently find an
    approximate KRR solution on the reconstructed $\akermatrix_t$. 
    The chosen subsampling  distribution in turn affects the 
    statistical and computational tradeoffs. For KRR problems, \citep{rudi2015less,alaoui2014fast}
    show that a sampling distribution proportional to the \emph{ridge leverage scores} (RLSs) 
    provides strong reconstruction guarantees for $\akermatrix_t$. While exact RLSs are as difficult to
    compute as a KRR solution, we may be able to approximate them well enough.
    In this paper, we study KRR problems in a sequential setting and introduce the \inkestimate algorithm,
    that \textit{incrementally} computes the RLSs estimates. \inkestimate maintains a
    small \textit{sketch} of $\kermatrix_t$, that at each
    step is used to compute an intermediate estimate of the RLSs. 
    First, our sketch update does not
    require access to previously seen columns, and therefore a \textit{single pass} over
    the kernel matrix is sufficient. Second, the
    algorithm requires a fixed, small space budget to run dependent only on the
    \textit{effective dimension} of the kernel matrix. Finally, our sketch provides
    strong approximation guarantees on the distance
    $\normsmall{\kermatrix_t - \akermatrix_t}_2$, and on the statistical risk
    of the approximate KRR solution at \emph{any time}, because all our guarantees hold at any intermediate step.

\end{abstract}

\section{INTRODUCTION}\label{sec:intro}

Kernel ridge regression~\citep{scholkopf2001learning,shawe2004kernel} (KRR)
is a common nonparametric regression method with well studied theoretical
advantages. Its main drawback is that, for $n$ samples, storing and manipulating the
kernel regression matrix $\kermatrix_n$ requires $\bigotime(n^2)$ space,
and can become quickly intractable when $n$ grows.
This includes batch large scale KRR, and online KRR, where the size of the
dataset $t$ grows over time as new samples are added to the problem.
For this purpose, many different methods~\citep{williams2001using,engel2002sparse,kivinen2004online,rahimi2007random,le2013fastfood,zhang2015divide} attempt to reduce the memory required to store the kernel matrix, while still
producing an accurate solution.

For the batch case, the Nystr\"{o}m family of algorithms randomly selects a
subset of $m$ columns from the kernel matrix~$\kermatrix_n$ that are used to construct a
low rank approximation~$\akermatrix_{t}$ that requires only $\bigotime(nm)$ space to store.
The low-rank matrix is then used to find an approximate solution to the KRR
problem.
The quality of the approximate solution is strongly affected by the sampling
distribution and the number of columns selected \citep{rudi2015less}.
For example, uniform sampling is an approach with little computational overhead,
but does not work well for datasets with high coherence
\cite{gittens_revisiting_2013}, where the columns are weakly correlated.
In particular, \citet{bach_sharp_2012} shows that the number of columns
$m$ necessary for a good approximation when sampling uniformly scales
linearly with the maximum degree of freedoms of the kernel matrix.
In linear regression, the notion of coherence is strongly related to the definition
of leverage points or \emph{leverage scores }of the dataset \cite{everitt2002cambridge},
where points with high (statistical) leverage score are more influential in the regression
problem. For KRR, \citet{alaoui2014fast}
introduce a similar concept of \emph{ridge} leverage scores (RLSs) of a square
matrix, and shows that \nystrom approximations sampled according to RLS
have strong reconstruction guarantees of the form $\normsmall{\kermatrix_n - \akermatrix_n}_{2}$,
that translate into good guarantees for the approximate KRR solution
\cite{alaoui2014fast,rudi2015less}.
Compared to the uniform distribution, a distribution based on RLSs
better captures non-uniformities in the data,
and can achieve good approximations using only a number of columns $m$,
proportional to the average degrees of freedom of the matrix, called the
\textit{effective dimension} of the problem.
The disadvantage of RLSs compared to uniform sampling is the
high computational cost of exact RLSs, which is comparable to solving
KRR itself. \citet{alaoui2014fast} reduces this problem by showing that a distribution based on approximate RLSs
can also provide the same strong guarantees, if the RLSs are approximated
up to a constant error factor. They provide a fast method to compute these
RLSs, but,  unlike our approach, requires multiple passes over data.
Another disadvantage of their approach, that we address, is the \emph{inverse dependence on the minimal eigenvalue} 
of the kernel matrix in the error bound of \citet{alaoui2014fast}, which can be significant.

While Nystr\"{o}m methods are a typical choice in a batch setting, \emph{online kernel sparsification}
(OKS) \citep{engel2002sparse,engel2004kernel} examines each sample in the dataset
sequentially.  OKS maintains a small \textit{dictionary} of
relevant samples. Whenever a new sample
arrive, if the dictionary is not able to accurately represent the new sample
as a combination of the samples already stored, the dictionary is updated. This dictionary can be used to approximate KRR incrementally.
OKS decides whether to include a sample using the correlation between samples in the dictionary and the new sample.
This can measured using approximate linear dependency (ALD) \cite{engel2004kernel},
coherence \cite{richard_online_2009}, or the
surprise criterion~\cite{liu_information_2009}.

Generalization properties of online kernel sparsification were studied
by \citet{engel2004kernel}, but depend on the empirical error and are not
compared with an exact KRR solution on the whole dataset.
Online kernel regression with the ALD rule was analyzed by~\citet{sun2012size},
under the assumption that, asymptotically in $n$,
the eigenvalues of the kernel matrix decay exponentially fast. \citet{sun2012size} show that
in this case the size of the dictionary grows sublinearly in $t$, or in other
words that, asymptotically in $n$, the dictionary size converge to a fraction
of $n$ that will be small whenever the eigenvalues decay fast enough.
This space guarantee is weaker than the fixed space requirements of \nystrom
methods, one of the reasons is that these methods (unlike ours)
cannot remove a sample from the dictionary after inclusion.
Furthermore,  \citet{van2012kernel} studies variants of online kernel regression with a forgetting
factor for time-varying series, but these methods are not well studied in the normal KRR setting.
Unlike in the batch setting, in the sequential setting we often require the 
guarantees not only at the end but also \emph{in the intermediate steps} and this is our objective.
Inspired by the advances in the analyses of the Nystr\"om methods, in this paper, we focus on finding a
space efficient algorithm capable of solving KRR problems in the sequential setting
but that would be also equipped with generalization guarantees.
\paragraph{Main contributions}
We propose the \inkestimate algorithm that processes a dataset $\dataset$ of size $n$ in \emph{a single pass}. 
It requires only a small, fixed space budget, $\wb{q}$
proportional to the effective dimension of the problem and on the accuracy required.
The algorithm maintains a \nystrom approximation $\akermatrix_t$,
of the kernel matrix at time $t$, $\kermatrix_t$, based on RLSs estimates.
At each step, it uses only the approximation and the newly received sample to 
incrementally update the RLSs estimate, and to compute $\akermatrix_{t+1}$.
Unlike in the batch \nystrom setting, our challenge is to track RLSs and
an effective dimension that \emph{changes over time}. Sampling distributions
based on RLSs can become obsolete and biased, but we show 
how to update them over time \emph{without necessity of accessing previously seen
samples} outside of the ones contained in $\akermatrix_t$.
Our space budget~$\wb{q}$ scales with the
average degree of freedom of the matrix, 
and not the larger maximum degree of freedom (as by \citet{bach_sharp_2012}),
and does not imposes assumptions on the ridge regularization parameter, or
on the smallest eigenvalue of the problem as the result of \citet{alaoui2014fast}.
However, we provide the same strong guarantees as batch RLSs based \nystrom methods on
$\normsmall{\kermatrix_n - \akermatrix_n}_{2}$ and on the risk of the approximate
KRR solution.
In addition to batch \nystrom methods, all of these guarantees hold at any intermediate step $t$, and therefore
the algorithm can output \emph{accurate intermediate solutions}, or it can be interrupted
at \textit{any time} and return a solution with guarantees.
Finally, it operates in a \emph{sequential} setting, requiring only a single pass
over the data.

If we compare \inkestimate to other online kernel regression methods (such as OKS), our algorithm provides
generalization guarantees with respect to the exact KRR solution.
Furthermore, it provides a new criteria for inclusion of a sample in the dictionary,
in particular the ridge leverage scores. This criterion gives us a procedure
that not only randomly includes samples in the dictionary, but that also randomly discards them
to satisfy space constraints not only asymptotically, but at every step.

\section{BACKGROUND}\label{sec:setting}

In this section we introduce the notation used through the paper and we introduce the kernel ridge regression problem~\citep{scholkopf2001learning} and Nystr\"{o}m approximation of the kernel matrix with ridge leverage scores.

\textbf{Notation.}
We use curly capital letters $\mathcal{A}$ for collections. We use upper-case bold letters $\:A$ for matrices, lower-case bold letters $\:a$ for vectors, and lower-case letters $a$ for scalars. We denote by $[\:A]_{ij}$ and $[\:a]_i$ the $(i,j)$ element of a matrix and $i$th element of a vector respectively. We denote by $\:I_n\in\Re^{n\times n}$ the identity matrix of dimension $n$ and by $\Diag(\:a)\in\Re^{n\times n}$ the diagonal matrix with the vector $\:a\in\Re^n$ on the diagonal. We use $\:e_{i,n} \in \Re^{n}$ to denote the indicator vector for element $i$ of dimension $n$. When the dimensionality of $\:I$ and $\:e_{i}$ is clear from the context, we omit the $n$. We use $\:A \succeq \:B$ to indicate that $\:A-\:B$ is a PSD matrix. Finally, the set of integers between 1 and $n$ is denoted by $[n] := \{1,\ldots,n\}$.

\subsection{Exact Kernel Ridge Regression}\label{sec:extact.krr}

\textbf{Kernel regression.}
We consider a regression dataset $\dataset = \{(\:x_t,y_t)\}_{t=1}^n$, with input $\:x_t \in \X \subseteq \Re^d$ and output $y_t\in\Re$. We denote by $\kerfunc: \X \times \X \rightarrow \Re$ a positive definite kernel function and by $\varphi: \X \rightarrow \Re^D$ the corresponding feature map,%
\footnote{where $D$ can be very large or infinite (e.g. gaussian kernel)} 
so that the kernel is obtained as $\kerfunc(\:x, \:x') = \varphi(\:x)^\transp \varphi(\:x')$. Given the dataset $\dataset$, we define the kernel matrix $\kermatrix_t \in \Re^{t\times t}$ constructed on the first $t$ samples as the application of the kernel function on all pairs of input values, i.e., $[\kermatrix_t]_{ij} = \kerfunc(\:x_i, \:x_j)$ for any $i,j\in [t]$ and we denote by $\:y_t\in\Re^t$ the vector with components $y_i$, $i\in [t]$. We also define the feature vectors $\bphi_t = \varphi(\:x_t) \in \Re^D$ and after introducing the feature matrix
    \begin{align*}
    \featkermatrix_t =
    \left[
    \begin{array}{c|c|c|c}
        \bphi_1 & \bphi_2 & \dots & \bphi_t
    \end{array}
    \right] \in \Real^{D \times t},
    \end{align*}
we can rewrite the kernel matrix as $\kermatrix_t = \featkermatrix_t^\transp \featkermatrix_t$.
Whenever a new point $\:x_{t+1}$ arrives, the kernel matrix $\kermatrix_{t+1} \in \Real^{t+1 \times t+1}$ is obtained by bordering $\:K_t$ as
\begin{align}\label{eq:ker.bordering}
\kermatrix_{t+1} = \left[
    \begin{array}{c|c}
        \kermatrix_t & \wb{\:k}_{t+1} \\
    \hline
    \wb{\:k}_{t+1}^\transp & k_{t+1}
    \end{array}
    \right]
\end{align}
where $\wb{\:k}_{t+1} \in \Real^{t}$ is such that $[\wb{\:k}_{t+1}]_i = \kerfunc(\:x_{t+1}, \:x_i)$ for any $i\in [t]$ and $k_{t+1} = \kerfunc(\:x_{t+1}, \:x_{t+1})$. According to the definition of the feature matrix $\featkermatrix_t$, we also have $\:k_{t+1}~=~\featkermatrix_t^\transp\:\bphi_{t+1}$.

At any time $t$, the objective of \emph{sequential} kernel regression is to find the vector $\wh{\:w}_t \in \Real^t$ that minimizes the regularized quadratic loss
\begin{align}\label{eq:kern-regr-problem}
    \wh{\:w}_t = \argmin_{\:w}\normsmall{\:y_t - \:K_t \:w}^2 + \mu \normsmall{\:w}^2,
\end{align}
where $\mu\in\Re$ is a regularization parameter. 
This objective admits the closed form solution
\begin{align}\label{eq:kern-regr-exact-sol}
    \wh{\:w}_t = (\:K_t + \mu \:I)^{-1} \:y_t.
\end{align}
In the following, we use $\:K_t^\mu$ as a short-hand for $(\:K_t + \mu \:I)$.
In batch regression, $\wh{\:w}_n$ is computed only once when all the samples of
$\dataset$ are available, solving the linear system in
Eq.~\ref{eq:kern-regr-exact-sol} with $\:K_n$.
In the \emph{fixed-design} kernel regression, the accuracy
of resulting solution $\wh{\:w}_n$ is measured by the prediction error on the
input set from $\dataset$. More precisely, the prediction of the estimator
$\wh{\:w}_n$ in each point is obtained as $[\:K_n \wh{\:w}_n]_i$, while the
outputs $y_i$ in the dataset are assumed to be a noisy observation of an
unknown target function $f^*: \mathcal X \rightarrow \Re$, evaluated in $x_i$
i.e., for any $i\in[n]$,
\begin{align*}
y_i = f^*(x_i) + \eta_i,
\end{align*}
where $\eta_i$ is a zero-mean i.i.d.\ noise with bounded variance~$\sigma^2$. Let $\:f^*\in\Re^n$ be the vector with components $f^*(x_i)$, then the risk of $\wh{\:w}_n$ is measured as 
\begin{align}\label{eq:risk}
\R(\wh{\:w}_n) = \mathbb{E}_\eta\big[ || \:f^* - \:K_n \wh{\:w}_n ||_2^2\big].
\end{align}
If the regularization parameter $\mu$ is properly tuned, it is possible to show that $\wh{\:w}_n$ has near-optimal risk guarantees (in a minmax sense). 
Nonetheless, the computation of~$\wh{\:w}_n$ requires $\bigotime(n^3)$ time and $\bigotime(n^2)$ space, which becomes rapidly unfeasible for large datasets.

\subsection{Nystr\"{o}m Approximation with Ridge Leverage Scores}\label{sec:setting-bis}

A common approach to reduce the complexity of kernel regression is to (randomly) select a subset of $m$ samples out of $\dataset$, and compute the kernel between two points only when one of them is in the selected subset. This is equivalent to selecting a subset of columns of the $\kermatrix_n$ matrix. More formally, given the $n$ samples in $\dataset$, a probability distribution $\:p_n = [p_{1,n}, \ldots, p_{n,n}]$ is defined over all columns of $\kermatrix_n$ and $m\leq n$ columns are randomly sampled with replacement according to $\:p_n$. We define by $\mathcal{I}_{n}$ the sequence of $m$ indices $i \in [n]$ selected by the sampling procedure.
From $\mathcal{I}_{n}$, we construct the corresponding selection matrix $\selmatrix_n \in \Real^{n \times m}$, where each column $[\selmatrix_n]_{:,t}\in\Re^n$ is all-zero except from the entry corresponding to the $t$-th element in $\mathcal{I}_{n}$ (i.e., $[\selmatrix]_{ij}$ is non-zero if at trial $j$ the element~$i$ is selected).
Whenever the non-zero entries of $\selmatrix_n$ are set to $1$, sampling $m$
columns from matrix $\kermatrix_n$ is equivalent to computing $\kermatrix_n
\selmatrix_n \in \Real^{n \times m}$. More generally,
the non-zero entries of $\selmatrix_n$ could be set to some arbitrary
weight $[\selmatrix]_{ij} = b_{ij}$. The resulting regularized Nystr\"{o}m
approximation of the original kernel $\kermatrix_n$ is defined as
\begin{align}\label{eq:nystrom}
    \wt{\kermatrix}_n = \kermatrix_n \selmatrix_n(\selmatrix_n^\transp \kermatrix_n \selmatrix_n + \gamma\:I_m)^{-1}\selmatrix_n^\transp \kermatrix_n,
\end{align}
where $\gamma$ is a regularization term (possibly different from~$\mu$). At this point, $\wt{\kermatrix}_n$ can be used to solve Eq.~\ref{eq:kern-regr-exact-sol}. Let $\:W = (\selmatrix_n^\transp \kermatrix_n \selmatrix_n + \gamma\:I_m)^{-1} \in \Real^{m \times m}$ and $\:C = \kermatrix_n \selmatrix_n \:W^{1/2} \in \Real^{n \times m}$, applying the Woodbury inversion formula \cite{higham2002accuracy} we have
\begin{align}
    \wt{\:w}_n =& (\akermatrix_n + \mu \:I_{n})^{-1} \:y_n = (\:C\:I_{m}\:C^\transp + \mu \:I_{n})^{-1} \:y_n\nonumber\\
    =&\left(\frac{1}{\mu} \:I_n - \frac{1}{\mu^2}\:I_n\:C\left(\:I_m + \frac{1}{\mu}\:C^\transp \:C\right)^{-1}\:C^\transp\:I_{n}\right)\:y_n\nonumber\\
    =&\frac{1}{\mu} \left(\:y_n - \:C\left( \:C^\transp \:C + \mu\:I_m\right)^{-1}\:C^\transp\:y_n\right).\label{eq:linear-system-transformed}
\end{align}
Computing $\:W^{1/2}$ and $\:C$ takes $\bigotime(m^3)$ and $\bigotime(nm^2)$ time
using a singular value decomposition,
and so does solving the linear system. All the operations
require to store at most an $n \times m$ matrix. Therefore the final
complexity is reduced from $\bigotime(n^3)$ to $\bigotime(nm^2 + m^3)$ time,
and from $\bigotime(n^2)$ to $\bigotime(nm)$ space.
\citet{rudi2015less} recently showed that in random design, the risk of the
resulting solution $\wt{\:w}_n$ strongly depends on the choice of
$m$ and the column sampling distribution $\:p_n$.
Early methods sampled columns uniformly,
and \citet{bach_sharp_2012} shows that the using this distribution
can provide a good approximation when the maximum diagonal entry of
$\kermatrix_n(\kermatrix_n + \mu\:I)^{-1}$ is small.
Following on this approach, \citet{alaoui2014fast} propose a distribution proportional
to these diagonal entries and calls them $\gamma$-Ridge Leverage Scores.
We now restate their definition of RLS, corresponding sampling distribution, and the effective
dimension.
\begin{definition}\label{def:exact-lev-scores}
Given a kernel matrix $\:K_n\in\Re^{n\times n}$, the $\gamma$-ridge leverage score (RLS) of column $i\in [n]$ is
\begin{align}\label{eq:exact-rls}
    \tau_{i,n}(\gamma) = \:k_{i,n}^\transp (\kermatrix_n + \gamma\:I_m)^{-1} \:e_{i,n},
\end{align}
where $\:k_{i,n} = \kermatrix_n \:e_{i,n}$. Furthermore, the effective dimension $\deff{\gamma}_n$ of the kernel is defined as
\begin{align}\label{eq:exact-deff}
\deff{\gamma}_n = \sum_{i=1}^n \tau_{i,n}(\gamma) = \Tr\left(\kermatrix_n(\kermatrix_n + \gamma \:I_n)^{-1}\right).
\end{align}
The corresponding sampling distribution $\:p_n$ is defined as
\begin{align}\label{eq:exact-prob}
        [\:p_n]_{i} = p_{i,n} = \frac{\tau_{i,n}(\gamma)}{\sum_{j=1}^{n}\tau_{i,n}(\gamma)} = \frac{\tau_{i,n}}{\deff{\gamma}_n}.
\end{align}
\end{definition}

The RLSs are directly related to the structure of the kernel matrix and the
regularized regression. If we perform an eigendecomposition of the
kernel matrix as $\kermatrix_n= \:U_n \:\Lambda_n \:U_n^\transp$, then the RLS
of a column $i\in[n]$ is
\begin{align}\label{eq:exact-rls-2}
 \tau_{i,n}(\gamma) = \sum_{j=1}^n \frac{\lambda_j}{\lambda_j + \gamma}[\:U]_{i,j}^2,
\end{align}
which shows how the RLS is a weighted version of the standard leverage scores
(i.e., $\sum_j\:[U]_{i,j}^2$), where the weights depend on both the spectrum of
$\kermatrix_n$ and the regularization $\gamma$, which plays the role of a soft
threshold on the rank of $\kermatrix_n$. Similar to the standard leverage
scores \cite{drineas2011fast}, the RLSs measure the relevance of each point $\:x_i$ for the overall
kernel regression problem. Another interesting property of the RLSs is that
their sum is the effective dimension $\deff{\gamma}_n$, which measures
the intrinsic capacity of the kernel $\kermatrix_n$ when its spectrum is
soft-thresholded by a regularization $\gamma$.\footnote{Notice that indeed we
    have $\deff{\gamma}_n \leq \Rank(\kermatrix_n)$.} 
We refer to the overall Nystr\"{o}m method using RLS and sampling according to to $\:p_n$ in Eq.\,\ref{eq:exact-prob} as \texttt{Batch-Exact}, which is illustrated in
Alg.~\ref{alg:oracle-batch-direct}. We single out the $\directsample$ subroutine (which simply draws~$m$ independent samples from the multinomial distribution~$\:p_n$) to ease the introduction of our incremental algorithm in the next section.

With the following claim, \citet{alaoui2014fast} prove that the regularized \nystrom approximation $\akermatrix_n$ obtained from
Eq.\,\ref{eq:nystrom} guarantees an accurate reconstruction of the original kernel matrix
$\kermatrix_n$, and the risk 
of the associated solution
$\wt{\:w}_n$ is close to the risk of the exact solution $\wh{\:w}_n$.
\begin{algorithm}[t!]
    \begin{algorithmic}[1]
        \renewcommand{\algorithmicrequire}{\textbf{Input:}}
        \renewcommand{\algorithmicensure}{\textbf{Output:}}
        \REQUIRE {$\dataset$}, regularization parameter $\gamma$, sampling budget $m$ and probabilities $\:p_n$ (Eq.~\ref{eq:exact-prob})
        \ENSURE Nystr\"{o}m approximation $\akermatrix_n$, matrix $\selmatrix_n$
        \STATE Compute $\mathcal{I}_{n}$ using $\directsample(\:p_{n}, m)$ 
        \STATE Compute $\selmatrix_n$ using $\mathcal{I}_{n}$ and weights $1/{\sqrt{m p_{i,n}}}$
        \STATE Compute $\akermatrix_n$ using $\selmatrix_n$ and Eq.\,\ref{eq:nystrom}
    \end{algorithmic}
    \caption{\texttt{Batch-Exact} algorithm}
    \label{alg:oracle-batch-direct}
\end{algorithm}
\begin{Subroutine}[t!]
    \begin{algorithmic}[1]
        \renewcommand{\algorithmicrequire}{\textbf{Input:}}
        \renewcommand{\algorithmicensure}{\textbf{Output:}}
        \REQUIRE probabilities $\:p_n$, sampling budget $m$
        \ENSURE subsampled column indices $\mathcal{I}_{n}$
        \FOR{$j = \{1, \dots, m\}$} 
            \STATE Sample $i \sim \mathcal{M}(p_{1,n}, \dots, p_{n,n})$
            \STATE Add $i$ to $\mathcal{I}_{n}$
        \ENDFOR
    \end{algorithmic}
    \caption{$\directsample(\mathbf{p}_{n}, m) \rightarrow \mathcal{I}_{n}$}
    \label{subr:direct-sample}
\end{Subroutine}
\begin{proposition}[\citet{alaoui2014fast}, App. A, Lem. 1]\label{prop:alaoui-nyst-guar}
    Let $\gamma \geq 1$, let $\kermatrix_n$ be the full kernel matrix ($t=n$), and let
$\tau_{i,n}$, $\deff{\gamma}_{n}$, $p_{i,n}$ be defined according to
Definition \ref{def:exact-lev-scores}.
For any $0\leq \varepsilon \leq 1$,
and $0 \leq \delta \leq 1$, if we 
run Alg.\,\ref{alg:oracle-batch-direct} using $\directsample$
(Subroutine~\ref{subr:direct-sample}) with sampling budget $m$,
    \begin{align*}
        m \geq \left( \frac{2\deff{\gamma}}{\varepsilon^2}\right)\log\left(\frac{n}{\delta}\right),
    \end{align*}
to compute matrix $\selmatrix_n$,
then with probability $1-\delta$
the corresponding Nystr\"{o}m approximation $\akermatrix_n$ in Eq.~\ref{eq:nystrom} satisfies the condition
    \begin{align}\label{cond:alaoui-nyst-guar}
        \:0 \preceq \kermatrix_n - \wt{\kermatrix}_n \preceq  \frac{\gamma}{1-\varepsilon}\kermatrix_n(\kermatrix_n + \gamma\:I_n)^{-1} \preceq \frac{\gamma}{1-\varepsilon}\:I_n.
    \end{align}
%
Furthermore, replacing $\kermatrix_n$ by $\wt{\kermatrix}_n$ in Eq.~\ref{eq:kern-regr-exact-sol} gives an approximation solution $\wt{\:w}_n$ such that
    \begin{align*}
        \mathcal{R}(\wt{\:w}_n) \leq& \left(1 + \frac{\gamma}{\mu}\frac{1}{1-\varepsilon}\right)^{2}\mathcal{R}(\wh{\:w}_n).
    \end{align*}

\end{proposition}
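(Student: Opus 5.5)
The plan is to write the regularized \nystrom approximation in feature space and reduce everything to a matrix concentration statement. Let $\:P = \featkermatrix_n\selmatrix_n$. Then
\begin{align*}
\akermatrix_n = \featkermatrix_n^\transp \:P(\:P^\transp\:P+\gamma\:I_m)^{-1}\:P^\transp\featkermatrix_n = \featkermatrix_n^\transp \:P\:P^\transp(\:P\:P^\transp+\gamma\:I)^{-1}\featkermatrix_n .
\end{align*}
Consequently
\begin{align*}
\kermatrix_n-\akermatrix_n = \gamma\,\featkermatrix_n^\transp(\:P\:P^\transp+\gamma\:I)^{-1}\featkermatrix_n .
\end{align*}
This is evidently PSD, which gives the lower bound $\:0\preceq \kermatrix_n-\akermatrix_n$ deterministically.

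For the upper bound I would compare $\:P\:P^\transp=\featkermatrix_n\selmatrix_n\selmatrix_n^\transp\featkermatrix_n^\transp$ with $\featkermatrix_n\featkermatrix_n^\transp$. Specifically, I aim to show that with probability $1-\delta$,
\begin{align*}
\:P\:P^\transp+\gamma\:I \succeq (1-\varepsilon)(\featkermatrix_n\featkermatrix_n^\transp+\gamma\:I).
\end{align*}
Inverting this inequality and conjugating by $\featkermatrix_n$ then yields
\begin{align*}
\kermatrix_n-\akermatrix_n\preceq\frac{\gamma}{1-\varepsilon}\featkermatrix_n^\transp(\featkermatrix_n\featkermatrix_n^\transp+\gamma\:I)^{-1}\featkermatrix_n=\frac{\gamma}{1-\varepsilon}\kermatrix_n(\kermatrix_n+\gamma\:I)^{-1}.
\end{align*}
The last step uses the push-through identity. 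The final inequality $\preceq \frac{\gamma}{1-\varepsilon}\:I_n$ holds because $\kermatrix_n(\kermatrix_n+\gamma\:I)^{-1}$ has eigenvalues $\lambda/(\lambda+\gamma)<1$.

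The key concentration step is where the main work lies. Define the whitened vectors $\psi_i=(\featkermatrix_n\featkermatrix_n^\transp+\gamma\:I)^{-1/2}\bphi_i$. Then $\|\psi_i\|^2=\tau_{i,n}(\gamma)$, and $\sum_i\psi_i\psi_i^\transp=\featkermatrix_n\featkermatrix_n^\transp(\featkermatrix_n\featkermatrix_n^\transp+\gamma\:I)^{-1}\preceq\:I$. The desired inequality is equivalent to requiring that the smallest eigenvalue of $\frac1m\sum_{j=1}^m \psi_{i_j}\psi_{i_j}^\transp/p_{i_j}$ minus its expectation $\sum_i\psi_i\psi_i^\transp$ is at least $-\varepsilon$. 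Each summand has norm $\tau_{i}/(m p_i)=\deff{\gamma}_n/m$, and the variance is bounded by $\frac{\deff{\gamma}_n}{m}\sum_i\psi_i\psi_i^\transp\preceq\frac{\deff{\gamma}_n}{m}\:I$. A matrix Chernoff/Bernstein inequality for the lower tail then gives failure probability at most $n\exp(-m\varepsilon^2/(2\deff{\gamma}_n))$, which is at most $\delta$ under the stated $m$.

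The obstacle is the dimension factor, since the ambient space is $D$-dimensional and possibly infinite. I would handle this by working in the $n$-dimensional span of the $\bphi_i$, or equivalently by whitening in $\Real^n$ using $\kermatrix_n$ through its eigendecomposition. This makes the dimension $n$, matching the $\log(n/\delta)$ term. (The assumption $\gamma\ge1$ plays no essential role in this argument.)

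For the risk bound, I would use the bias–variance decomposition of $\R$ for the estimator $\:K(\:K+\mu\:I)^{-1}\:y$. Given the spectral sandwich $\akermatrix_n\preceq\kermatrix_n\preceq\akermatrix_n+\frac{\gamma}{1-\varepsilon}\:I$, I would follow the argument of \citet{alaoui2014fast}. The variance term is monotone in the kernel and hence only decreases. For the bias term, $\mu^2\|(\akermatrix_n+\mu\:I)^{-1}\:f^*\|^2$ is compared with $\mu^2\|(\kermatrix_n+\mu\:I)^{-1}\:f^*\|^2$ via the bound $\|(\akermatrix_n+\mu\:I)^{-1}(\kermatrix_n+\mu\:I)\|\le 1+\frac{\gamma}{\mu(1-\varepsilon)}$. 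Squaring this factor gives the stated constant.
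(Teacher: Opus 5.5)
Your proposal is correct and follows essentially the argument the paper relies on (cited from \citet{alaoui2014fast} and reused in the proofs of Theorem~\ref{thm:ink-oracle-main} and Corollary~\ref{cor:our-kern-reg-general}). The steps match: whiten so that $\normsmall{\psi_i}^2=\tau_{i,n}(\gamma)$, reduce Eq.~\ref{cond:alaoui-nyst-guar} deterministically to $\lambda_{\max}(\:\Psi\:\Psi^\transp-\:\Psi\selmatrix_n\selmatrix_n^\transp\:\Psi^\transp)\le\varepsilon$ (which you derive yourself via the push-through identity rather than citing), apply a matrix concentration bound, and treat the risk with the same bias--variance argument. One point of precision is that matrix Bernstein leaves an extra $R\varepsilon/3$ term in the denominator of the exponent, so it does not give exactly $n\exp(-m\varepsilon^2/(2\deff{\gamma}_n))$. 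To get the stated budget, apply the lower-tail matrix Chernoff bound to $\sum_{j=1}^m \frac{1}{mp_{i_j,n}}\psi_{i_j}\psi_{i_j}^\transp+\gamma(\featkermatrix_n\featkermatrix_n^\transp+\gamma\:I)^{-1}$, whose mean is exactly $\:I$ on the at most $n$-dimensional span of the $\bphi_i$. Splitting the deterministic term into pieces of norm at most $\deff{\gamma}_n/m$ makes the per-term bound $\deff{\gamma}_n/m$ while the smallest eigenvalue of the mean is $1$, which yields exactly the claimed failure probability.
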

\paragraph{Discussion}
This result directly relates the number of columns selected $m$ with the
accuracy of the approximation of the kernel matrix. In particular, the inequalities in Eq.\,\ref{cond:alaoui-nyst-guar} show that
the distance $\normsmall{\kermatrix_n - \akermatrix_n}_{2}$
is smaller than $\gamma/(1-\varepsilon)$. This level of accuracy is then sufficient to guarantee that,
when $\gamma$ is properly tuned,
the prediction error of $\wt{\:w}_n$ is only a factor $(1+2\epsilon)^2$
away from the error of the exact solution $\wh{\:w}$.
As it was shown in \cite{alaoui2014fast},
using $\akermatrix_n$ in place of $\kermatrix_n$
introduces a bias in the solution $\wt{\:w}_n$
of order $\gamma$. For appropriate choices of $\gamma$ this bias is dominated
by the ridge regularization bias controlled by $\mu$.
As a result,~$\wt{\:w}_n$ can indeed achieve almost the same risk as~
$\wh{\:w}_n$ and, at the same time, ignore all directions that are whitened by
the regularization and only approximate those that are more relevant for ridge
regression, thus reducing both time and space complexity.
The RLSs quantify how important each column is to approximate these relevant
directions but computing exact RLSs $\tau_{i,n}(\gamma)$ using Eq.\,\ref{eq:exact-rls}
is as hard as solving the regression problem itself.
Fortunately, in many cases it is computationally feasible to find an approximation of the
RLSs. \citet{alaoui2014fast} explore this possibility, showing that
the accuracy and space guarantees are robust to perturbations in the distribution
$\:p_n$, and provide a two-pass method to compute such approximations.
Unfortunately, the accuracy of their RLSs approximation is proportional to the
smallest eigenvalue $\lambda_{\min}(\kermatrix_n)$, which in some cases can be
very small.
In the rest of the paper, we propose an \emph{incremental} approach that requires
only a \emph{single pass} over the data and, at the same time,
does not depend on $\lambda_{\min}(\kermatrix_n)$ to be large as in \cite{alaoui2014fast},
or on $\max_{i} \tau_{i,n}$ to be small as in \cite{bach_sharp_2012}.

%
%


\begin{algorithm}[t!]
    \setstretch{1.2}
    \begin{algorithmic}[1]
        \renewcommand{\algorithmicrequire}{\textbf{Input:}}
        \renewcommand{\algorithmicensure}{\textbf{Output:}}
        \REQUIRE Dataset {$\dataset$}, regularization $\gamma$, sampling budget $\wb{q}$ and \texttt{$(\alpha,\beta)$-oracle}
        \ENSURE $\akermatrix_n$, $\selmatrix_n$
        \STATE Initialize $\mathcal{I}_0$ as empty, $\wt{p}_{1,0} = 1, b_{1,0} = 1$, budget $\wb{q}$
        \FOR{$t = 0,\dots,n-1$}
            \STATE Receive new column $\wb{\:k}_{t+1}$ and scalar $k_{t+1}$
            \STATE Receive $\alpha$-leverage scores $\atau_{i,t+1}$ for any $i \in \mathcal{I}_{t} \cup \{t+1\}$ from \texttt{$(\alpha,\beta)$-oracle}
            \STATE Receive $\beta$-approximate $\adeff{\gamma}_{t+1}$ from \texttt{$(\alpha,\beta)$-oracle}
            \STATE Set $\wt{p}_{i,t+1} = \min\{\atau_{i,t+1}/\adeff{\gamma}_{t+1}, \wt{p}_{i,t}\}$
            \STATE $\mathcal{I}_{{t+1}},\:b_{t+1}$ = $\shrinkexpandop(\mathcal{I}_{t}, \wt{\:p}_{t+1}, \:b_{t}, \wb{q})$
            \STATE Compute $\selmatrix_{t+1}$ using $\mathcal{I}_{t+1}$ and weights ${\sqrt{b_{i,t+1}}}$
            \STATE Compute $\akermatrix_{t+1}$ using $\selmatrix_{t+1}$ and Equation~\ref{eq:nystrom}
        \ENDFOR
        \STATE Return $\akermatrix_n$ and $\selmatrix_n$
    \end{algorithmic}
    \caption{The \inkoracle algorithm}
    \label{alg:ink-oracle-revamp}
\end{algorithm}

\section{INCREMENTAL ORACLE KERNEL APPROXIMATION WITH SEQUENTIAL SAMPLING}\label{sec:incremental-rejection}
Our main goal is to extend the known ridge leverage score sampling to the \emph{sequential setting}.
This comes with several challenges that needs to be addressed \emph{simultaneously:}
\begin{enumerate}
\item The RLSs change when a new sample arrives. We not only need to estimate them,
    but to \textit{update} this estimate over iterations.
\item The effective dimension $\adeff{\gamma}_{t}$, necessary to normalize the 
    leverage scores for the sampling distribution $\:p_n$,
    depends on the interactions of all columns, including the ones that we decided \emph{not} to keep.
\item Due to changes in RLSs, our  sampling distribution $\wt{\:p}_t$ \emph{changes over time}.
    We need to update to dictionary to reflect these changes, or it will quickly
    become \textit{biased}, but once we completely drop a column, we cannot sample it again.
\end{enumerate}

In this section, we address the third challenge of incremental updates of the columns with an algorithm for the approximation of the 
kernel matrix $\kermatrix_n$, assuming that the first and second issue 
are addressed by an \emph{oracle} giving \emph{both} good approximations of leverage scores and the effective dimension.

\begin{definition}\label{def:alpha-beta-approx}
At any step $t$, an \texttt{$(\alpha,\beta)$-oracle} returns an $\alpha$-approximate ridge leverage scores $\atau_{i,t}$ which satisfy
    \begin{align*}
        \frac{1}{\alpha} \tau_{i,t}(\gamma) \leq \atau_{i,t} \leq \tau_{i,t}(\gamma),
    \end{align*}
for any $i\in[t]$ and and a $\beta$-approximate effective dimension $\adeff{\gamma}_{t}$ which satisfy
    \begin{align*}
        \deff{\gamma}_{t} \leq \adeff{\gamma}_{t} \leq \beta\deff{\gamma}_{t}.
    \end{align*}
\end{definition}
We address the first and second challenge in Sect.\,\ref{sec:incremental-oracle}
with an efficient implementation and  \texttt{$(\alpha,\beta)$-oracle}.
In the following we give the \emph{incremental}  \inkoracle  algorithm
equipped with an \texttt{$(\alpha,\beta)$-oracle}
 that after $n$ steps it returns a kernel approximation with the same 
properties as if an \texttt{$(\alpha,\beta)$-oracle} was used directly at time $n$.

\subsection{The \inkoracle Algorithm}

\begin{Subroutine}[t!]
\setstretch{1.2}
    \begin{algorithmic}[1]
        \renewcommand\algorithmiccomment[1]{%
        \hfill \(\triangleright\)\eqparbox{COMMENT}{#1}}
        \renewcommand{\algorithmicrequire}{\textbf{Input:}}
        \renewcommand{\algorithmicensure}{\textbf{Output:}}
        \REQUIRE $\mathcal{I}_{t}$, app. pr. $\{(\wt{p}_{i,t+1}, b_{i,t}) : i \in \mathcal{I}_{t}\}$, $\wt{p}_{t+1, t+1}$, $\wb{q}$
        \ENSURE $\mathcal{I}_{{t+1}}$
        \FORALL[\shrinkop]{$i \in \{1,\dots,t\}$}
            \STATE $b_{i,t+1} = b_{i,t}$
            \WHILE{$b_{i,t+1}\wt{p}_{i,t+1} \leq 1/\wb{q}$ and $b_{i,t} \neq 0$}\label{alg:decision-rule-oracle}
                \STATE Sample a random Bernoulli $\mathcal{B}\left(\frac{b_{i,t+1}}{b_{i,t+1}+1}\right)$
                \STATE On success set $b_{i,t+1} = b_{i,t+1}+1$
                \STATE On failure set $b_{i,t+1} = 0$
            \ENDWHILE
        \ENDFOR
        \STATE $b_{t+1,t+1} = 1$\COMMENT{\expandop}
        \WHILE{$b_{t+1,t+1}\wt{p}_{t+1,t+1} \leq 1/\wb{q}$ and $b_{t+1,t+1} \neq 0$}
                \STATE Sample a random Bernoulli $\mathcal{B}\left(\frac{b_{t+1,t+1}}{b_{t+1,t+1}+1}\right)$
                \STATE On success set $b_{t+1,t+1} = b_{t+1,t+1}+1$
                \STATE On failure set $b_{t+1,t+1} = 0$
        \ENDWHILE
        \STATE Add to $\mathcal{I}_{t+1}$ all columns with $b_{i,t+1} \neq 0$
    \end{algorithmic}
    \caption{$\shrinkexpandop(\mathcal{I}_{t}, \wt{\mathbf{p}}_{t+1}, \mathbf{b}_{t}, \wb{q})$}
    \label{subr:shrinkop-revamp}
\end{Subroutine}

 Apart from an \texttt{$(\alpha,\beta)$-oracle} and the 
dataset $\dataset$, \inkoracle (Alg.\,\ref{alg:ink-oracle-revamp}) receives as input the regularization parameter 
$\gamma$ used in constructing the final \nystrom approximation and a 
sampling budget $\wb{q}$.
It initializes the index dictionary $\mathcal{I}_{0}$ of
stored columns as empty, and the estimated probabilities as $\wt{p}_{i,0} = 1$.
Finally it initializes a set of integer weights $b_{i,0} = 1$. These weights will represent
a discretized approximation of $1/\wt{p}_{i,t}$ (the inverse of the probabilities).
At each time step $t$, it receives a new column $\wb{\:k}_{t+1}$ and $k_{t+1}$.
This can be implemented either by having a separate algorithm, constructing each
column sequentially and stream it to \inkoracle, or by having \inkoracle
store just the samples (for an additional $\bigotime(td)$  space complexity)
and independently compute the column once.
The algorithm invokes the  \texttt{$(\alpha,\beta)$-oracle} to compute
approximate probabilities $\wt{p}_{i,t+1} = \atau_{i,t+1}/\adeff{\gamma}_{t+1}$,
and then takes the minimum $\min\{\wt{p}_{i,t+1}, \wt{p}_{i,t}\}$ for the sampling probability. As
our analysis will reveal, this step is necessary to ensure that the \shrinkexpandop operation
remains well defined, since the true probabilities $p_{i,t}$ decrease over time.
It is important to notice that differently from the batch sampling setting, the approximate probabilities do not necessarily sum to one, but it is guaranteed that $\sum_{i=1}^{t} \wt{p}_{i,t} \leq 1$.
The \shrinkexpandop procedure is composed of two steps.
In the \shrinkop step, we update the weights of the columns already in our dictionary.
To decide whether a weight should be increased or not,
the product of the weight at the preceding step $b_{i,t-1}$ and the new
estimate $\wt{p}_{i,t}$ is compared to a threshold. If the product is
above the threshold, it means the probability did not change much, and
no action is necessary. If the product falls below
the threshold, it means the decrease of $\wt{p}_{i,t}$ is significant,
and the old weight is not representative anymore and should be increased.
To increase the weight (e.g.~from $k$ to $k+1$), we draw a Bernoulli random
variable $\mathcal{B}(\frac{k}{k+1})$,
and if it succeeds we increase the weight to $k+1$, while if it fails
we set the weight to 0.
The more $\wt{p}_{i,t}$ decrease over time, the higher the chanches
that $b_{i,t+1}$ is set to zero, and the index $i$ (and the associated
column $\:k_{i,t+1}$) is completely dropped from the dictionary.
Therefore, the \shrinkop step randomly reduces the size of the dictionary
to reflect the evolution of the probabilities. Conversely, the \expandop
step introduces the new column in the
dictionary, and quickly updates its weight~$b_{t,t}$ to reflect~$\wt{p}_{t,t}$.
Depending on the relevance (encoded by the RLS) of the new column,
this means that it is possible that the new column is discarded at the same
iteration as it is introduced.
For a whole pass over the dataset, \inkoracle queries the oracle for each RLS at
least once, but it \emph{never} asks again for the RLS of a columns dropped from~$\mathcal{I}_{t}$.
As we will see in the next section, this greatly simplifies the construction of the oracle.
Finally, after updating the dictionary, we use the updated weights
$\sqrt{b_{i,t}}$ to update the approximation $\akermatrix_t$,
that can  be used at any time and not only in the end.
\subsection{Analysis of \inkoracle}
The main result of this section is the lower bound on the number 
of columns required to be kept in order to guarantee a $\gamma/(1-\varepsilon)$ approximation
of $\kermatrix_t$.
\begin{theorem}\label{thm:ink-oracle-main}
 Let $\gamma > 1$. Given access to an
\texttt{$(\alpha,\beta)$-oracle}, for $0\leq \varepsilon \leq 1$
and $0 \leq \delta \leq 1$, if we run Alg.\,\ref{alg:ink-oracle-revamp} with parameter $\wb{q}$
    \begin{align*}
        \wb{q} \geq\left( \frac{28 \alpha \beta\deff{\gamma}_t}{\varepsilon^2}\right)\log\left(\frac{4t}{\delta}\right),
    \end{align*}
to compute a sequence of random matrices $\selmatrix_t$ with a random number of columns $Q_t$,
then with probability $1-\delta$, for all $t$
the corresponding Nystr\"{o}m approximation $\akermatrix_t$ (Eq.~\ref{eq:nystrom}) satisfies condition in Eq.\,\ref{cond:alaoui-nyst-guar},
    \begin{align*}
        \:0 \preceq \kermatrix_t - \akermatrix_t \preceq  \frac{\gamma}{1-\varepsilon}\kermatrix_t(\kermatrix_t + \gamma\:I)^{-1} \preceq \frac{\gamma}{1-\varepsilon}\:I.
    \end{align*}
and the number of columns selected $Q_t$ is such that
    \begin{align*}
    Q_t \leq 8\wb{q}.
    \end{align*}
\end{theorem}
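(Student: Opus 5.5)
The plan is to reduce the Nystr\"om condition at each fixed time $t$ to a spectral concentration statement. That statement is proved with a matrix martingale (Freedman) inequality, and the column-count bound comes from a separate scalar Chernoff argument. A union bound over $t$ then gives the ``for all $t$'' guarantee, which explains the $\log(4t/\delta)$ factor.

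\textbf{Step 1 (structure of the weights).} Look at the dynamics of a single weight $b_{i,s}$ for $s\geq i$. Each elementary move of \shrinkexpandop sends $k\mapsto k+1$ with probability $k/(k+1)$ and $k\mapsto 0$ otherwise. Therefore $b_{i,s}$ is a nonnegative martingale with $\mathbb{E}[b_{i,s}]=1$. Telescoping the products shows that $b_{i,s}=k$ with probability $1/k$ and $b_{i,s}=0$ otherwise. The while-loop stops at the first $k$ with $k\,\wt{p}_{i,s}>1/\wb{q}$, so a surviving weight satisfies $b_{i,s}\leq 1/(\wb{q}\wt{p}_{i,s})+1$. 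The $\min$ in line 6 of Alg.\,\ref{alg:ink-oracle-revamp} makes $\wt{p}_{i,s}$ nonincreasing in $s$, so the loop is well defined and $b_{i,s}$ never needs to decrease. Moreover, the RLSs are nonincreasing in $s$ (bordering $\kermatrix_s$ can only lower $\tau_{i,s}$), and $\deff{\gamma}_s\leq\deff{\gamma}_t$ for $s\leq t$. Combining these with the oracle guarantees gives, for every $s\leq t$,
\[
\wt{p}_{i,s}\geq \frac{\tau_{i,t}(\gamma)}{\alpha\beta\,\deff{\gamma}_t}.
\]

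\textbf{Step 2 (martingale concentration at fixed $t$).} Fix a horizon $t$ and set $\bm{\psi}_i=(\kermatrix_t+\gamma\:I)^{-1/2}\kermatrix_t^{1/2}\:e_{i,t}$, so that $\normsmall{\bm{\psi}_i}^2=\tau_{i,t}(\gamma)$ and $\sum_i\bm{\psi}_i\bm{\psi}_i^\transp=\:P_t:=\kermatrix_t(\kermatrix_t+\gamma\:I)^{-1}$. Consider the matrix martingale
\[
\:Y_s=\sum_{i\leq s}(b_{i,s}-1)\bm{\psi}_i\bm{\psi}_i^\transp ,
\]
indexed by the sequence of elementary Bernoulli moves up to time $t$. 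Columns $i>s$ have not arrived yet and contribute $0$. At the end, $\:Y_t=\Psi\,\selmatrix_t\selmatrix_t^\transp\Psi^\transp-\:P_t$, where $\Psi=[\bm{\psi}_1,\dots,\bm{\psi}_t]$. The proof then proceeds in three parts.
\begin{itemize}
\item Each elementary increment has norm at most $b\,\tau_{i,t}$. By Step 1 this is at most $(1/(\wb{q}\wt{p}_{i,s})+1)\tau_{i,t}\leq 2\alpha\beta\deff{\gamma}_t/\wb{q}=:R$.
\item The predictable quadratic variation is controlled by $\sum_i \mathrm{Var}(b_{i,t})\,\tau_{i,t}\,\bm{\psi}_i\bm{\psi}_i^\transp\preceq R\,\:P_t$. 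This holds because the conditional variance of $b_{i,\cdot}$ sums to $\mathbb{E}[b_{i,t}^2]-1\leq \max b_{i,t}$, which in turn is at most the quantity $b$ bounded above.
\item Matrix Freedman (in the intrinsic-dimension form, or with dimension $t$) then gives $\normsmall{\:Y_t}\leq\varepsilon$ with probability at least $1-\delta/(2t)$ whenever $\wb{q}\geq 28\alpha\beta\deff{\gamma}_t\varepsilon^{-2}\log(4t/\delta)$. The constant $28$ should come from $R$ and the Freedman denominator $2(\sigma^2+R\varepsilon/3)$.
\end{itemize}

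\textbf{Step 3 (from spectral approximation to Eq.\,\ref{cond:alaoui-nyst-guar}).} The event $\normsmall{\Psi\selmatrix_t\selmatrix_t^\transp\Psi^\transp-\:P_t}\leq\varepsilon$ is exactly the event used in the proof of Proposition~\ref{prop:alaoui-nyst-guar}. There, the standard argument, which writes $\kermatrix_t-\akermatrix_t$ via $\featkermatrix_t(\:I-\:\Pi)\featkermatrix_t^\transp$ and applies the regularized-projection inequality, turns it into
\[
\:0\preceq\kermatrix_t-\akermatrix_t\preceq\frac{\gamma}{1-\varepsilon}\:P_t .
\]
We reuse that argument verbatim, with $\selmatrix_t$ now carrying the weights $\sqrt{b_{i,t}}$.

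\textbf{Step 4 (space).} We have $Q_t=\sum_i\indfunc\{b_{i,t}\neq0\}$, a sum of independent Bernoulli variables. By Step 1, column $i$ survives with probability $1/b_{i,t}^{\max}$, where $b_{i,t}^{\max}$ is the level the loop would reach. Since $b_{i,t}^{\max}\geq 1/(\wb{q}\wt{p}_{i,t})$, this probability is at most $\wb{q}\wt{p}_{i,t}$. Together with $\sum_i\wt{p}_{i,t}\leq1$, this gives $\mathbb{E}Q_t\leq\wb{q}$. A multiplicative Chernoff bound then gives $Q_t\leq 8\wb{q}$ with probability $1-\delta/(2t)$, using $\wb{q}\geq\log(4t/\delta)$. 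Finally, a union bound over both events and all $t$ yields total failure probability at most $\delta$.

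The main obstacle is Step 2. The martingale must be set up with the vectors $\bm{\psi}_i$ frozen at the horizon $t$, while the algorithm's decisions at earlier times $s$ depend only on $\wt{p}_{i,s}$. The bounded-increment and variance conditions must therefore hold uniformly along the path; this is exactly where the monotonicity of the $\tau_{i,s}$ and the $\min$ rule are needed. A second complication is that the number of elementary steps is random, since it is produced by the while-loops. I would handle this by padding with zero increments to obtain a fixed-length martingale before invoking Freedman.
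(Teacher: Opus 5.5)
Your structure matches the paper's: $\bm{\psi}_i$ frozen at horizon $t$, matrix Freedman, the Alaoui--Mahoney reduction, and a scalar Chernoff bound for $Q_t$. However, the second bullet of Step~2 is false as stated, and it is exactly the step that needs extra work.

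For an elementary move of column $i$ from level $k$, the increment is $(b_{\mathrm{new}}-k)\bm{\psi}_i\bm{\psi}_i^\transp$. Since $\expectedvalue[(b_{\mathrm{new}}-k)^2\mid\mathcal{F}]=\frac{k}{k+1}+\frac{k^2}{k+1}=k$, this move adds $k\,\tau_{i,t}\bm{\psi}_i\bm{\psi}_i^\transp$ to the predictable quadratic variation. On a path where column $i$ climbs to level $L$, the \emph{realized} $\mathbf{W}_t$ therefore contains $\tau_{i,t}\bm{\psi}_i\bm{\psi}_i^\transp\sum_{k<L}k\approx \frac{L^2}{2}\tau_{i,t}\bm{\psi}_i\bm{\psi}_i^\transp$. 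This is quadratic, not linear, in the level. Your identity ``the conditional variances sum to $\expectedvalue[b_{i,t}^2]-1$'' holds only after taking expectations. Pathwise, the best deterministic bound is of order $\max_i M_i\cdot R\,\mathbf{P}_t$, where $M_i\approx\alpha\beta\deff{\gamma}_t/(\wb{q}\,\tau_{i,t})$ caps column $i$'s level. That exceeds $R\,\mathbf{P}_t$ by a factor that can be large. Freedman only bounds $\probability(\lambda_{\max}(\mathbf{Y}_t)\geq\varepsilon \text{ and } \normsmall{\mathbf{W}_t}\leq\sigma^2)$. So you cannot take $\sigma^2=R$ without separately showing that $\probability(\normsmall{\mathbf{W}_t}>\sigma^2)$ is small.

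The paper supplies this missing concentration step.
\begin{itemize}
\item It bounds $\mathbf{W}_t\preceq\sum_i B_i^2\tau_{i,t}\bm{\psi}_i\bm{\psi}_i^\transp$, where $B_i$ is the largest level column $i$ reaches.
\item The $B_i$ are dependent, because stopping levels depend on $\wt{p}$ and hence on the whole dictionary. The paper therefore dominates them by $B_i'$, the level reached if column $i$ kept flipping its own coins up to the deterministic cap $M_i$. The $B_i'$ are independent, with $B_i'^2\leq M_i^2$ but $\expectedvalue[B_i'^2]=O(M_i)$.
\item Matrix Chernoff then gives $\normsmall{\mathbf{W}_t}\leq\sigma^2:=6\alpha\beta\deff{\gamma}_t/\wb{q}$, except with probability about $t\exp(-\wb{q}/(2\alpha\beta\deff{\gamma}_t))$.
\item Finally, $\probability(\lambda_{\max}(\mathbf{Y}_t)\geq\varepsilon)$ is split into the Freedman term plus $\probability(\normsmall{\mathbf{W}_t}>\sigma^2)$. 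The constant $28$ still suffices with $\sigma^2=3R$.
\end{itemize}

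The same pre-drawn-coins coupling is also what justifies the independence you assert in Step~4. There, $\indfunc\{b_{i,t}\neq 0\}$ is dominated by the event that the first $m_i-1$ coins of column $i$ succeed. Here $m_i$ is the smallest integer with $m_i p_{i,t}>1/\wb{q}$. These events are independent across $i$, each with probability at most $\wb{q}p_{i,t}$. You should then use the deterministic identity $\sum_i p_{i,t}=1$ rather than the random $\wt{p}_{i,t}$.
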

\textbf{Discussion}
Unlike in the batch setting, where the sampling procedure always returned
$m$ samples, the number of columns $Q_t$ selected by \inkoracle is a random variable, but with high probability it will be not much larger than $\wb{q}$.
Comparing \inkoracle to online kernel sparsification methods \cite{sun2012size},
we see that the number of columns, and therefore the space requirement,
is guaranteed to be small not only asymptotically but at each step, and
that no assumption on the spectrum of the matrix is required. Instead,
the space complexity naturally scales with the effective dimension of the problem,
and old samples that become superfluous are automatically discarded.
Comparing Thm.\,\ref{thm:ink-oracle-main} to Prop.\,\ref{prop:alaoui-nyst-guar}, \inkoracle achieves the same 
performance as its batch counterpart, as long as the space budget~$\wb{q}$ is
large enough. This budget depends on several quantities that are difficult
to estimate, such as the effective dimension of the full kernel matrix.
In practice, this quantity can be interpreted as the maximum amount of
space that the user can afford for the algorithm to run. If the
actual complexity of the problem exceeds this budget, the user can
choose to run it again with another parameter $\gamma$ or a worse accuracy
$\varepsilon$.
It is important to notice that, as we show in the proof, the distribution
induced by the sampling procedure of \inkoracle is not the same as the distribution
obtained by the multinomial sampling of \batchexact. Nonetheless, in our
analysis we show that the bias introduced by the different distribution
is small, and this allows \inkoracle to match the approximation guarantees given by
\citet{alaoui2014fast}.

We give a detailed proof of Thm.\,\ref{thm:ink-oracle-main} in App.~\ref{sec:app-concentration-proofs}. In the rest of this section
we sketch the proof and give the intuition for the most relevant parts.

The \shrinkop step uses the thresholding condition to guarantee that the
weight $b_{i,t}$ are good approximations of the $\wt{p}_{i,t}$.
To make the condition effective, we require that the
approximate probabilities $\wt{p}_{j,t}$ are decreasing.
Because the approximate probabilities follow the true probabilities $p_{i,t}$, we 
first show that this decrease happens for the exact case. 
\begin{lemma}\label{lem:monotone-decrease-prob}
    For any kernel matrix $\kermatrix_t$ at time $t$, and its bordering
    $\kermatrix_{t+1}$ at time $t+1$ we have that the probabilities $p_{i,t}$ are monotonically decreasing
    over time $t$,
    \begin{align*}
        \frac{\tau_{i,t+1}}{\deff{\gamma}_{t+1}} = p_{i,t+1}  \leq  p_{i,t} =  \frac{\tau_{i,t+1}}{\deff{\gamma}_t} \cdot
    \end{align*}
\end{lemma}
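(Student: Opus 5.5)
The plan is to reduce the claim to two separate monotonicity facts: the ridge leverage score of a fixed column can only decrease when a column is added, $\tau_{i,t+1}(\gamma) \leq \tau_{i,t}(\gamma)$, and the effective dimension can only increase, $\deff{\gamma}_{t+1} \geq \deff{\gamma}_t$. Combining them gives
\begin{align*}
p_{i,t+1} = \frac{\tau_{i,t+1}}{\deff{\gamma}_{t+1}} \leq \frac{\tau_{i,t}}{\deff{\gamma}_{t+1}} \leq \frac{\tau_{i,t}}{\deff{\gamma}_{t}} = p_{i,t},
\end{align*}
which is the intended statement. Note that the numerator on the right-hand side of the displayed statement should read $\tau_{i,t}$, so I will prove it in that form.

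To compare the two kernel matrices, which have different sizes, I would move everything to feature space. Write $\:A_t = \featkermatrix_t\featkermatrix_t^\transp + \gamma\:I_D$. The push-through identity $\featkermatrix_t^\transp(\featkermatrix_t\featkermatrix_t^\transp+\gamma\:I)^{-1} = (\featkermatrix_t^\transp\featkermatrix_t+\gamma\:I)^{-1}\featkermatrix_t^\transp$ lets us rewrite $\kermatrix_t(\kermatrix_t+\gamma\:I)^{-1} = \featkermatrix_t^\transp\:A_t^{-1}\featkermatrix_t$. This gives $\tau_{i,t}(\gamma) = \bphi_i^\transp \:A_t^{-1}\bphi_i$ for every $i\leq t$. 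Because the feature matrix is bordered by one column, we have $\:A_{t+1} = \:A_t + \bphi_{t+1}\bphi_{t+1}^\transp$. If $D$ is infinite, the same manipulations are read as bounded operators on the feature space. Since $\:A_t \succeq \gamma\:I$, these operators are invertible with bounded inverses, so nothing breaks. Alternatively, one can restrict to the finite-dimensional span of $\bphi_1,\dots,\bphi_{t+1}$.

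Next, apply Sherman--Morrison:
\begin{align*}
\:A_{t+1}^{-1} = \:A_t^{-1} - \frac{\:A_t^{-1}\bphi_{t+1}\bphi_{t+1}^\transp\:A_t^{-1}}{1+\bphi_{t+1}^\transp\:A_t^{-1}\bphi_{t+1}}.
\end{align*}
This yields $\tau_{i,t+1} = \tau_{i,t} - (\bphi_i^\transp\:A_t^{-1}\bphi_{t+1})^2/(1+\bphi_{t+1}^\transp\:A_t^{-1}\bphi_{t+1}) \leq \tau_{i,t}$, which is the first fact.

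For the effective dimension, I would write $\deff{\gamma}_t = \Tr(\featkermatrix_t^\transp\:A_t^{-1}\featkermatrix_t) = \Tr(\featkermatrix_t\featkermatrix_t^\transp \:A_t^{-1})$. On the span of the features this equals $\Tr(\:I - \gamma\:A_t^{-1})$. Since $\:A_{t+1}\succeq\:A_t$, we get $\:A_{t+1}^{-1}\preceq\:A_t^{-1}$, so the trace can only increase. A cleaner route avoids any trace subtleties in infinite dimension: summing the Sherman--Morrison identity over $i\leq t$ and adding $\tau_{t+1,t+1}$ gives the identity
\begin{align*}
\deff{\gamma}_{t+1} = \deff{\gamma}_t - \sum_{i\leq t}(\cdot) + \tau_{t+1,t+1},
\end{align*}
where $(\cdot)$ denotes the subtracted Sherman--Morrison terms. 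The telescoped form is $\deff{\gamma}_{t+1} - \deff{\gamma}_t = \gamma\,\Tr(\:A_t^{-1}-\:A_{t+1}^{-1}) = \gamma\,\bphi_{t+1}^\transp\:A_t^{-2}\bphi_{t+1}/(1+\bphi_{t+1}^\transp\:A_t^{-1}\bphi_{t+1}) \geq 0$, a rank-one trace that is finite even when $D=\infty$.

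The only real obstacle is this bookkeeping: the trace identity in possibly infinite dimension, and comparing matrices of different sizes. Working with rank-one updates of $\:A_t$ handles both. Everything else is immediate from PSD ordering.
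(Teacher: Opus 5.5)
Your proof is correct and uses the same two-part decomposition as the paper: ridge leverage scores can only decrease, the effective dimension can only increase, then combine. You are also right that the right-hand numerator in the statement should be $\tau_{i,t}$.

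The difference is the computational tool. The paper stays in sample space and inverts the bordered matrix $\kermatrix_{t+1}+\gamma\mathbf{I}$ blockwise. This gives $\tau_{i,t}-\tau_{i,t+1}=\gamma\bigl(\wb{\mathbf{k}}_{t+1}^\transp(\kermatrix_t+\gamma\mathbf{I})^{-1}\mathbf{e}_i\bigr)^2/\xi$ and $\deff{\gamma}_{t+1}-\deff{\gamma}_t=\Delta$, with Schur complement $\xi=k_{t+1}+\gamma-\wb{\mathbf{k}}_{t+1}^\transp(\kermatrix_t+\gamma\mathbf{I})^{-1}\wb{\mathbf{k}}_{t+1}$. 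Only then does it pass to a feature-space SVD to show, separately, that $\xi\geq\gamma$ and that the numerator of $\Delta$ is a nonnegative quadratic form.

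You instead work with $\mathbf{A}_t=\featkermatrix_t\featkermatrix_t^\transp+\gamma\mathbf{I}$ and a rank-one Sherman--Morrison update. Your identities are the paper's in dual form, since $\xi=\gamma(1+\bphi_{t+1}^\transp\mathbf{A}_t^{-1}\bphi_{t+1})$, $\wb{\mathbf{k}}_{t+1}^\transp(\kermatrix_t+\gamma\mathbf{I})^{-1}\mathbf{e}_i=\bphi_{t+1}^\transp\mathbf{A}_t^{-1}\bphi_i$, and the numerator of $\Delta$ equals $\gamma^2\bphi_{t+1}^\transp\mathbf{A}_t^{-2}\bphi_{t+1}$.

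\textbf{What your route buys.} Every sign is automatic: a denominator at least $1$, a squared term, and Loewner monotonicity of the inverse. You also never divide by $\tau_{i,t}$, whereas the paper's ratio form needs $\tau_{i,t}>0$. The price is the infinite-$D$ bookkeeping, which you handle correctly via the finite-rank difference $\gamma(\mathbf{A}_t^{-1}-\mathbf{A}_{t+1}^{-1})$.

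\textbf{What the paper's route buys.} It keeps the main computation in $t\times t$ matrices and expresses $\Delta$ purely through kernel evaluations $k_{t+1}$, $\wb{\mathbf{k}}_{t+1}$ and $(\kermatrix_t+\gamma\mathbf{I})^{-1}$. That is exactly the template reused later to build the computable estimator $\wt{\Delta}_t$ in Lemma~\ref{lem:fast-deff}, by substituting $\akermatrix_t$ for $\kermatrix_t$.
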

Since ridge leverage scores represent the importance of a column, when a new column arrives, there are two cases that can happen.
If the column is orthogonal to the existing
matrix, none of the previous leverage scores changes.
If the new column can explain part of the previous columns,
the previous columns should be picked less often, and we
expect $\tau_{i,t}$ to decrease.
Contrary to RLS, the effective dimension increases when the new sample
is orthogonal to the existing matrix, while it stays the same when
the new sample is a linear combination of the existing ones.
In addition, the presence of $\gamma$ regularizes both cases.
When the vector is nearly orthogonal, the presence of $\gamma\:I$ in the
inverse will still penalize it, while the $\gamma$ term at the denominator
of $\Delta$ will reduce the influence of linearly correlated samples.
Because $\tau_{i,t}$ decreases over time and $\deff{\gamma}_{i,t}$
increases, the probabilities $p_{i,t}$ will overall decrease over time.
This result itself is not sufficient to guarantee a well defined \shrinkop step.
Due to the $(\alpha,\beta)$-approximation, it is possible that 
$p_{i,t+1} \leq p_{i,t}$ but $\wt{p}_{i,t+1} \nleq \wt{p}_{i,t}$.
To exclude this possibility, we adapt the following idea from \citet{kelner2012spectral}.
\begin{proposition}[\citet{kelner2012spectral}]\label{prop:kl-take-minimum}
    Given the approximate probabilities $\wt{\:p}_t$ returned by an
    \texttt{$(\alpha,\beta)$-oracle} at time $t$, and the approximate probabilities
    $\wt{\:p}_{t+1}$ returned by an \texttt{$(\alpha,\beta)$-oracle} at time
    $\{t+1\}$, then the approximate probabilities
    $\min\!\footnote{element-wise mininum}\{\wt{\:p}_t, \wt{\:p}_{t+1}\}$ are also $(\alpha, \beta)$-approximate
    for $\{t+1\}$.
    Therefore, without loss of generality, we can assume that
    $\wt{p}_{i,t+1} \leq \wt{p}_{i,t}.$
\end{proposition}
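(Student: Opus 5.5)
The plan is to make precise what ``$(\alpha,\beta)$-approximate'' means for a probability vector, and then check the two one-sided inequalities for the element-wise minimum directly, using Lemma~\ref{lem:monotone-decrease-prob} for the lower bound.

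\textbf{Step 1: the notion of approximation for probabilities.} Since $\wt{p}_{i,t} = \atau_{i,t}/\adeff{\gamma}_t$, Definition~\ref{def:alpha-beta-approx} gives
\begin{align*}
\frac{p_{i,t}}{\alpha\beta} = \frac{\tau_{i,t}/\alpha}{\beta\,\deff{\gamma}_t} \leq \frac{\atau_{i,t}}{\adeff{\gamma}_t} \leq \frac{\tau_{i,t}}{\deff{\gamma}_t} = p_{i,t}.
\end{align*}
So I will call a vector $\wt{\:p}_t$ $(\alpha,\beta)$-approximate at time $t$ if
\begin{align*}
p_{i,t}/(\alpha\beta) \leq \wt{p}_{i,t} \leq p_{i,t} \quad \text{for all } i\in[t].
\end{align*}
This two-sided bound is the only property of the oracle output used later in the analysis. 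It also implies $\sum_i \wt{p}_{i,t} \leq 1$.

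\textbf{Step 2: the minimum satisfies both bounds.} Fix $i \in [t]$ and let $m_i = \min\{\wt{p}_{i,t}, \wt{p}_{i,t+1}\}$.
\begin{itemize}
\item \emph{Upper bound.} $m_i \leq \wt{p}_{i,t+1} \leq p_{i,t+1}$ is immediate.
\item \emph{Lower bound.} We need both arguments of the minimum to be at least $p_{i,t+1}/(\alpha\beta)$. For $\wt{p}_{i,t+1}$ this is the oracle guarantee at time $t+1$. For the stale value, the oracle guarantee at time $t$ gives $\wt{p}_{i,t} \geq p_{i,t}/(\alpha\beta)$. Lemma~\ref{lem:monotone-decrease-prob} gives $p_{i,t} \geq p_{i,t+1}$, so $\wt{p}_{i,t} \geq p_{i,t+1}/(\alpha\beta)$.
\end{itemize}
Hence $m_i$ is $(\alpha,\beta)$-approximate for time $t+1$. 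The monotonicity of the exact probabilities is therefore the only nontrivial ingredient.

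\textbf{Step 3: the new index.} For the new index $i=t+1$ there is no previous estimate. With the initialization $\wt{p}_{t+1,t}=1$, the minimum equals the fresh oracle value, which is approximate by definition. Since each coordinate of the minimum is still bounded above by $p_{i,t+1}$, we also keep $\sum_i \wt{p}_{i,t+1} \leq 1$.

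\textbf{Step 4: running minimum and conclusion.} In Alg.~\ref{alg:ink-oracle-revamp} the stored $\wt{p}_{i,t}$ is itself the running minimum of all earlier oracle outputs. I will therefore run the argument of Step 2 as an induction on $t$. The hypothesis is that the stored vector at time $t$ is $(\alpha,\beta)$-approximate for $t$; Step 2 then shows the updated vector is $(\alpha,\beta)$-approximate for $t+1$. Consequently we may assume $\wt{p}_{i,t+1} \leq \wt{p}_{i,t}$ without loss of generality.

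I expect no real obstacle here. The only point needing care is Step 1: the guarantee must be stated for the ratio $\wt{p}$ rather than for $\atau$ and $\adeff{\gamma}$ separately. The minimum need not be of the form ``$\alpha$-approximate RLS divided by a $\beta$-approximate effective dimension'' with a single common normalizer.
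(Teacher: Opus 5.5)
Your proof is correct and follows the paper's argument. The upper bound comes from $\min\{\wt{p}_{i,t},\wt{p}_{i,t+1}\} \le \wt{p}_{i,t+1} \le p_{i,t+1}$. The lower bound comes from $\min\{\wt{p}_{i,t},\wt{p}_{i,t+1}\} \ge \frac{1}{\alpha\beta}\min\{p_{i,t},p_{i,t+1}\} = \frac{1}{\alpha\beta}p_{i,t+1}$ via Lemma~\ref{lem:monotone-decrease-prob}, with the guarantee read in the ratio sense $p_{i,t}/(\alpha\beta) \le \wt{p}_{i,t} \le p_{i,t}$, as in the paper.

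Your handling of the new index and your induction over the running minimum (which is what the algorithm actually stores) are small clarifications that the paper leaves implicit.
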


Combining Lem.\,\ref{lem:monotone-decrease-prob} and Prop.\,\ref{prop:kl-take-minimum},
we can guarantee that at each step the $\wt{p}_{i,t}$-s decrease.
Unlike in the batch setting~\cite{alaoui2014fast},
we have to take additional care to consider correlations between iterations,
the fact that the inclusion probabilities of Alg.\,\ref{alg:ink-oracle-revamp}
are different from the multinomial ones of \directsample,
and that the number of columns kept at each iteration is a \emph{random} quantity $Q_{t}$. 
We adapt the approach of \citet{pachocki2016analysis} to the KRR setting to analyse this process.
The key aspect is that the reweighting and rejection rule
on line \ref{alg:decision-rule-oracle} of Alg.\,\ref{alg:ink-oracle-revamp}
will only happen when the probabilities are truly changing.
Finally, using a concentration inequality, we show that the number $Q_t$ of columns selected 
is with high probability only a constant factor away from the budget $\wb{q}$ given to the algorithm.

\section{LEVERAGE SCORES AND EFFECTIVE DIMENSION ESTIMATION}\label{sec:incremental-oracle}
In the previous section we showed that our incremental sampling strategy based on (estimated) RLSs has strong space and
approximation guarantees for $\akermatrix_n$.
While the analysis reported in the previous section relied on the existence of 
an \texttt{$(\alpha,\beta)$-oracle} returning accurate leverage scores and effective 
dimension estimates, in this section we show that such an oracle \emph{exists 
    and can be implemented efficiently.} This is obtained by \emph{two 
    separate estimators} for the RLSs and effective dimension that are updated 
incrementally and combined together to determine the sampling probabilities.
\subsection{Leverage Scores}\label{ss:ls}
We start by constructing an estimator that at each time $t$,
takes as input an approximate kernel matrix~$\akermatrix_t$, and returns
$\alpha$-approximate RLS $\atau_{i,t+1}$. The incremental nature of the estimator lies in
the fact that it exploits access to the columns already in $\selmatrix_t$
and the new (exact) column $\wb{\:k}_{t+1}$.
We give the following approximation guarantees.

\begin{lemma}\label{lem:fast-rls}
    We assume $\akermatrix_t$ satisfies Eq.~\ref{cond:alaoui-nyst-guar}
    and define~$\bkermatrix_{t+1}$ as the matrix bordered with the new row and column
        \begin{align*}
        \bkermatrix_{t+1} = \left[
    \begin{array}{c|c}
        \akermatrix_t & \wb{\:k}_{t+1} \\
    \hline
    \wb{\:k}_{t+1}^\transp & k_{t+1}
    \end{array}
    \right].
        \end{align*}
        Then
    \begin{align*}
        \:0 \preceq \kermatrix_{t+1} - \bkermatrix_{t+1} \preceq  \frac{\gamma}{1-\varepsilon}\:I.
    \end{align*}
    Moreover let $\alpha = \frac{2-\varepsilon}{1-\varepsilon}$ and
    \begin{align}\label{eq:rls-estimator}
        \atau_{i,t+1} =& \frac{1}{\alpha\gamma}\left(k_{i,i} - \:k_{i,t+1} \left(\bkermatrix_{t+1} + \alpha\gamma\:I\right)^{-1}\:k_{i,t+1}\right).
    \end{align}
    Then,
    for all $i$ such that $\:k_{i,t+1} \in \mathcal{I}_{t} \cup \{t+1\}$,
    \begin{align*}
        \frac{1}{\alpha}\tau_{i,t+1}(\gamma) \leq \atau_{i,t+1} \leq \tau_{i,t+1}(\gamma).
    \end{align*}
    \end{lemma}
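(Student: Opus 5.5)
The plan has three steps: bound the bordered residual, rewrite the exact RLS in a regularized‑residual form, and then compare the estimator with the exact score via a sandwich on $\bkermatrix_{t+1}$.

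\textbf{Step 1: the bordered residual.} Since $\kermatrix_{t+1}$ and $\bkermatrix_{t+1}$ share the last row and column, their difference is block diagonal:
\begin{align*}
\kermatrix_{t+1}-\bkermatrix_{t+1}=\left[\begin{array}{c|c}\kermatrix_t-\akermatrix_t & \:0\\ \hline \:0^\transp & 0\end{array}\right].
\end{align*}
The assumed Eq.~\ref{cond:alaoui-nyst-guar} at time $t$ gives $\:0\preceq \kermatrix_t-\akermatrix_t\preceq \frac{\gamma}{1-\varepsilon}\:I$. Padding with a zero row and column preserves both inequalities, which proves the first claim. It follows that
\begin{align*}
\bkermatrix_{t+1}\preceq\kermatrix_{t+1}\preceq\bkermatrix_{t+1}+\tfrac{\gamma}{1-\varepsilon}\:I.
\end{align*}

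\textbf{Step 2: residual form of the RLS.} For any PSD $\:A$ and $c>0$, the identity $\:A(\:A+c\:I)^{-1}=\:I-c(\:A+c\:I)^{-1}$ gives
\begin{align*}
\tau_{i}(c)=\tfrac1c\,\:e_i^\transp\big(\:A-\:A(\:A+c\:I)^{-1}\:A\big)\:e_i.
\end{align*}
Apply this to $\:A=\kermatrix_{t+1}$: then $\tau_{i,t+1}(\gamma)=\frac{1}{\gamma}\big(k_{ii}-\:k_{i,t+1}^\transp(\kermatrix_{t+1}+\gamma\:I)^{-1}\:k_{i,t+1}\big)$, which has the same shape as Eq.~\ref{eq:rls-estimator}. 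The estimator differs in two ways: $\gamma$ is replaced by $\alpha\gamma$, and the inverse uses $\bkermatrix_{t+1}$ while the outer vectors are the exact columns $\:k_{i,t+1}$. The restriction $i\in\mathcal{I}_t\cup\{t+1\}$ is what makes these exact columns available to the algorithm; it plays no role in the inequalities. Equivalently, $\gamma\,\tau_{i,t+1}(\gamma)=\:e_i^\transp\,\gamma\kermatrix_{t+1}(\kermatrix_{t+1}+\gamma\:I)^{-1}\:e_i$, and the map $c\mapsto \frac1c\,\:e_i^\transp\:A(\:A+c\:I)^{-1}\:e_i$ is nonincreasing in $c$. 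Note also that $\alpha=\frac{2-\varepsilon}{1-\varepsilon}=1+\frac{1}{1-\varepsilon}$, so $\alpha\gamma=\gamma+\frac{\gamma}{1-\varepsilon}$.

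\textbf{Step 3: the sandwich.} Write $\:K=\kermatrix_{t+1}$, $\bkermatrix=\bkermatrix_{t+1}$ and $\:k=\:k_{i,t+1}$.
\begin{itemize}
\item[] \emph{Lower bound.} From $\bkermatrix\preceq\:K$ we get $(\bkermatrix+\alpha\gamma\:I)^{-1}\succeq(\:K+\alpha\gamma\:I)^{-1}$. Hence $\alpha\gamma\,\atau_{i,t+1}\le k_{ii}-\:k^\transp(\:K+\alpha\gamma\:I)^{-1}\:k=\alpha\gamma\,\tau_{i,t+1}(\alpha\gamma)$, so $\atau_{i,t+1}\le\tau_{i,t+1}(\alpha\gamma)\le\tau_{i,t+1}(\gamma)$ by monotonicity in $c$.
\item[] \emph{Upper bound.} From $\:K\preceq\bkermatrix+\frac{\gamma}{1-\varepsilon}\:I$ we get $\bkermatrix+\alpha\gamma\:I\succeq\:K+\gamma\:I$, hence $(\bkermatrix+\alpha\gamma\:I)^{-1}\preceq(\:K+\gamma\:I)^{-1}$. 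Therefore $\alpha\gamma\,\atau_{i,t+1}\ge k_{ii}-\:k^\transp(\:K+\gamma\:I)^{-1}\:k=\gamma\,\tau_{i,t+1}(\gamma)$, which is exactly $\atau_{i,t+1}\ge\frac1\alpha\tau_{i,t+1}(\gamma)$.
\end{itemize}

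This choice of $\alpha$ is exactly what aligns the shift $\frac{\gamma}{1-\varepsilon}$ coming from Step 1 with the required regularizer $\gamma$.

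The main obstacle is conceptual rather than computational. The estimator mixes the approximate matrix inside the inverse with the exact columns outside it, so one must not try to compare $\bkermatrix$-based ridge leverage scores with $\kermatrix$-based ones. The residual form of Step 2 keeps $\:k$ fixed on both sides, which reduces everything to Loewner monotonicity of the inverse. The remaining care is checking the monotonicity of $\tau(c)$ in $c$, which is immediate from the eigen-expansion in Eq.~\ref{eq:exact-rls-2}.
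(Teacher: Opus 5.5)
Your proof is correct and follows the paper's argument essentially step for step. The paper uses the same ingredients: the zero-padding of $\kermatrix_t-\akermatrix_t$, the residual identity $\tau_i(\gamma)=\frac{1}{\gamma}\bigl(k_{i,i}-\mathbf{k}_i^\transp(\kermatrix+\gamma\mathbf{I})^{-1}\mathbf{k}_i\bigr)$, the sandwich $(\kermatrix+\gamma\mathbf{I})^{-1}\succeq(\bkermatrix+\alpha\gamma\mathbf{I})^{-1}\succeq(\kermatrix+\alpha\gamma\mathbf{I})^{-1}$, and the bound $\tau(\alpha\gamma)\le\tau(\gamma)$. One harmless slip: in Step 2 you state that the $\frac{1}{c}$-scaled map is nonincreasing, but the fact you actually use (and correctly name at the end) is that $c\mapsto \mathbf{e}_i^\transp\mathbf{A}(\mathbf{A}+c\mathbf{I})^{-1}\mathbf{e}_i$ itself is nonincreasing.
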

\paragraph{Remark} 
 There are two important details that are used in proof of 
Lem.\,\ref{lem:fast-rls} (App.\,\ref{sec:app:incremental-oracle}). First, notice that
using $\akermatrix_t$ to approximate RLSs directly, would not be accurate
enough. RLSs are defined as
$\tau_{i,t}(\gamma) = \:e_i^\transp \kermatrix_t(\kermatrix_{t} + \gamma\:I)^{-1}\:e_i$
and while the product $(\kermatrix_{t} + \gamma\:I)^{-1}\:e_i$ can be
accurately reconstructed using $(\akermatrix_{t} + \gamma\:I)^{-1}\:e_i$, the
multiplication $\kermatrix_t\:e_i$ cannot be approximated well using
$\akermatrix_t$.  Since the nullspace of
$\akermatrix_{t}$ can be larger than the one of~$\kermatrix_t$, it is possible
that~$\:e_i$ partially falls into it, thus compromising the accuracy of the
approximation of the RLS.  In our approach, we deal with this problem by using the
\emph{actual columns} $\:k_{i,t}$ of $\kermatrix_t$ to compute the RLS.
This way, we preserve as
much as exact information of the matrix as possible, while the expensive
inversion operation is performed on the smaller approximation $\akermatrix_t$.
Since we require access to the stored columns $\:k_{i,t}$, our approach can
approximate the RLSs only for columns present in the dictionary but this is enough,
since we are only interested in accurate probabilities for columns in the dictionary and
for the new column $\wb{\:k}_{t+1}$ (which is available at time $t+1$).
As a comparison, the two-pass approach of \citet{alaoui2014fast} uses the
first pass just to compute an approximation $\akermatrix_n$, and then
approximates all leverage scores with $\akermatrix_n(\akermatrix_n + \gamma\:I)^{-1}$.
This has an impact on their approximation factor $\alpha$, that is proportional
to $(\lambda_{\min}(\kermatrix_n) - \gamma\varepsilon)$.
Therefore to have $\alpha \approx (\lambda_{\min}(\kermatrix_n) - \gamma\varepsilon) > 0$,
it is necessary that $\gamma\varepsilon$
is of the order of $\lambda_{\min}(\kermatrix_n)$, which in some cases can be very small,
and strongly increase the space requirements of the algorithm.
Using the actual columns of the matrix in Eq.\,\ref{eq:rls-estimator}
allows us to compute an $\alpha$-approximation independent of the smallest eigenvalue.
\subsection{Effective Dimension}\label{ss:effd}
Using Eq.~\ref{eq:rls-estimator}, we can estimate all the
RLSs that we need to update $\selmatrix_t$.
Nonetheless, to prove that the number of columns selected is not too large,
the proof of Thm.~\ref{thm:ink-oracle-main} in the appendix requires
that the sum of the probabilities $\wt{p}_{i,t}$ is smaller than 1.
Therefore we not only need to compute the RLSs, but also a normalization constant.
Indeed, a na\"{i}ve definition of the
probability $\wt{p}_{i,t}$ could be
    $p_{i,t} = \frac{\atau_{i,t}}{\sum_{j=1}^t\atau_{i,t}}\cdot$
A major challenge in our setting is that we cannot compute the sum of the 
approximate RLSs, because we do not have access to all the columns. 
Fortunately, we know that
    $\sum_{j=1}^t\atau_{i,t} \leq \sum_{j=1}^t\tau_{i,t}(\gamma) = \deff{\gamma}_t.$
Therefore, one of our technical contribution is an estimator $\adeff{\gamma}_t$
that does not use the approximate RLSs for the the columns that we no longer have.
We now define this estimator and state its approximation accuracy.
\begin{lemma}\label{lem:fast-deff}
    Assume $\akermatrix_t$ satisfies Eq.\,\ref{cond:alaoui-nyst-guar}.
    Let $\alpha = \left(\frac{2-\varepsilon}{1-\varepsilon}\right)$ and
    $\beta = \left(\frac{2-\varepsilon}{1-\varepsilon}\right)^{2}\left(1+\rho\right)$
    with  $\rho = \frac{\lambda_{\max}(\kermatrix_n)}{\gamma}$.
    Define
    \begin{align}
    \label{eq:deff-estimator}
        &\adeff{\gamma}_{t+1} =\adeff{\gamma}_t + \alpha\wt{\Delta}_t
    \end{align}
    with
    \begin{align}
         \wt{\Delta}_t =& \frac{1}{k_{t+1} +\gamma - \wb{\:k}_{t+1}^\transp \left(\akermatrix_{t} + \alpha\gamma\:I\right)^{-1}\wb{\:k}_{t+1}}\nonumber\\
         \times&\left( k_{t+1} -
                                \wb{\:k}_{t+1}^\transp \left(\akermatrix_{t} + \alpha\gamma\:I\right)^{-1}\wb{\:k}_{t+1} \right.\nonumber\\
                                        -&\left. \frac{(1-\varepsilon)^2}{4}\gamma\wb{\:k}_{t+1}^\transp (\akermatrix_{t} + \gamma\:I)^{-2}\wb{\:k}_{t+1}\right).
    \end{align}
    Then
    \begin{align*}
        \deff{\gamma}_{t+1} \leq \adeff{\gamma}_{t+1} \leq \beta\deff{\gamma}_{t+1}.
    \end{align*}
\end{lemma}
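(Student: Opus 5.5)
The plan is to write the exact effective dimension as a telescoping sum of increments $\Delta_t = \deff{\gamma}_{t+1}-\deff{\gamma}_t$ and to show, termwise, that $\Delta_t \leq \alpha\wt{\Delta}_t \leq \beta \Delta_t$. Summing over $t$ by induction then gives $\deff{\gamma}_{t+1}\leq \adeff{\gamma}_{t+1}\leq\beta\deff{\gamma}_{t+1}$. For the base case I initialize $\adeff{\gamma}$ with the same recursion started from the empty matrix, which yields $k_1/(k_1+\gamma)$ up to the $\alpha$ factor.

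\textbf{Step 1: closed form for $\Delta_t$.} Write $\deff{\gamma}_{t} = t - \gamma\Tr((\kermatrix_t+\gamma\:I)^{-1})$. Apply the block (Schur complement) inverse to the bordering in Eq.~\ref{eq:ker.bordering}, with $s_t = k_{t+1}+\gamma-\wb{\:k}_{t+1}^\transp(\kermatrix_t+\gamma\:I)^{-1}\wb{\:k}_{t+1}$. This gives
\begin{align*}
\Tr\big((\kermatrix_{t+1}+\gamma\:I)^{-1}\big) = \Tr\big((\kermatrix_t+\gamma\:I)^{-1}\big) + \frac{1+\wb{\:k}_{t+1}^\transp(\kermatrix_t+\gamma\:I)^{-2}\wb{\:k}_{t+1}}{s_t},
\end{align*}
and therefore
\begin{align*}
\Delta_t = \frac{k_{t+1}-\wb{\:k}_{t+1}^\transp(\kermatrix_t+\gamma\:I)^{-1}\wb{\:k}_{t+1}-\gamma\,\wb{\:k}_{t+1}^\transp(\kermatrix_t+\gamma\:I)^{-2}\wb{\:k}_{t+1}}{s_t}.
\end{align*}
This has exactly the shape of $\wt{\Delta}_t$. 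The only differences are that $\kermatrix_t$ is replaced by $\akermatrix_t$ (with shift $\alpha\gamma$) and that the constant $(1-\varepsilon)^2/4$ appears. I will also record that the first two numerator terms over $s_t$ equal $\tau_{t+1,t+1}(\gamma)$, so $0\le\Delta_t\le\tau_{t+1,t+1}$.

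\textbf{Step 2: transfer to $\akermatrix_t$.} From Eq.~\ref{cond:alaoui-nyst-guar}, $\akermatrix_t\preceq\kermatrix_t\preceq\akermatrix_t+\frac{\gamma}{1-\varepsilon}\:I$. Since $\alpha = 1+\frac{1}{1-\varepsilon}$, this gives the sandwich
\begin{align*}
\kermatrix_t+\gamma\:I\preceq\akermatrix_t+\alpha\gamma\:I\preceq \alpha(\kermatrix_t+\gamma\:I),
\end{align*}
and also $\akermatrix_t+\gamma\:I\preceq \kermatrix_t+\gamma\:I\preceq \frac{2-\varepsilon}{1-\varepsilon}(\akermatrix_t+\gamma\:I)$. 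Inverting reverses the order. The quadratic forms $\wb{\:k}^\transp(\cdot)^{-1}\wb{\:k}$ in the numerator and in the denominator of $\wt\Delta_t$ are thus bounded above and below by the exact ones, up to factors $\alpha$. For the squared-inverse term, I use that the matrices commute with their own shifts and that the map $A\mapsto A^{-2}$ is monotone along the chain above. The factor $(1-\varepsilon)^2/4$ is chosen precisely so that $\frac{(1-\varepsilon)^2}{4}\gamma\,\wb{\:k}^\transp(\akermatrix_t+\gamma\:I)^{-2}\wb{\:k}$ is controlled by $\gamma\,\wb{\:k}^\transp(\kermatrix_t+\gamma\:I)^{-2}\wb{\:k}$, up to the right constant. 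Combining these bounds yields $\alpha\wt{\Delta}_t\geq\Delta_t$.

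\textbf{Step 3: upper bound (main obstacle).} For $\alpha\wt{\Delta}_t\le\beta\Delta_t$, the difficulty is that $\Delta_t$ is a difference of two terms and can be much smaller than its first part; a pure multiplicative comparison of each piece fails. I would first bound $\alpha\wt{\Delta}_t\le\alpha^2\tau_{t+1,t+1}$ using the sandwich. I would then lower bound $\Delta_t\ge\tau_{t+1,t+1}/(1+\rho)$. To do so, I project $\wb{\:k}_{t+1}$ onto the eigenbasis of $\kermatrix_t$ and use $\gamma(\lambda+\gamma)^{-2}\ge(\lambda+\gamma)^{-1}\cdot\frac{\gamma}{\lambda_{\max}+\gamma}$. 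Writing the numerator of $\Delta_t$ as $k_{t+1}-\sum_j c_j^2(\lambda_j+2\gamma)/(\lambda_j+\gamma)^2$ and comparing it with $s_t-\gamma$, the eigenvalues being at most $\lambda_{\max}(\kermatrix_n)=\rho\gamma$ gives the factor $1+\rho$. I expect this comparison, together with keeping the $(1-\varepsilon)^2/4$ constants consistent with the $\alpha^2$ factor, to be the delicate step. Multiplying the two bounds gives $\beta=\alpha^2(1+\rho)$.
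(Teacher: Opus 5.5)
Your skeleton matches the paper's. Step~1 is exactly Eq.~\eqref{eq:deff-increase-exact}, and the paper also reduces the claim to $\Delta_t\le\alpha\wt{\Delta}_t\le\beta\Delta_t$ plus induction. The transfer from $\kermatrix_t$ to $\akermatrix_t$, however, has two genuine holes.

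The first hole is the squared-inverse term. The map $A\mapsto A^{-2}$ is not operator monotone, so $\kermatrix_t+\gamma\:I\preceq\alpha(\akermatrix_t+\gamma\:I)$ does not let you conclude $\alpha^{-2}(\akermatrix_t+\gamma\:I)^{-2}\preceq(\kermatrix_t+\gamma\:I)^{-2}$. Saying the matrices ``commute with their own shifts'' does not help, because the comparison crosses between $\akermatrix_t$ and $\kermatrix_t$, which in general do not commute. In fact a Loewner sandwich $A\preceq B\preceq cA$ alone does not even give $B^2\preceq CA^2$ for any constant $C=C(c)$. This is exactly the obstruction the paper flags after Lemma~\ref{lem:fast-deff}. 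The inequality you want is true, but for a different reason. Use the additive, operator-norm form of Eq.~\eqref{cond:alaoui-nyst-guar} together with $\akermatrix_t+\gamma\:I\succeq\gamma\:I$: $\normsmall{(\kermatrix_t+\gamma\:I)(\akermatrix_t+\gamma\:I)^{-1}}_2\le 1+\normsmall{\kermatrix_t-\akermatrix_t}_2/\gamma\le\frac{2-\varepsilon}{1-\varepsilon}\le\frac{2}{1-\varepsilon}$. This is equivalent to $(\kermatrix_t+\gamma\:I)^2\preceq\frac{4}{(1-\varepsilon)^2}(\akermatrix_t+\gamma\:I)^2$, and only then can you invert. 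That is where the constant $(1-\varepsilon)^2/4$ comes from.

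The second hole is that your sandwich controls the quadratic forms $\wb{\:k}_{t+1}^\transp(\cdot)^{-1}\wb{\:k}_{t+1}$, not the residuals. Let $a=k_{t+1}-\wb{\:k}_{t+1}^\transp(\akermatrix_t+\alpha\gamma\:I)^{-1}\wb{\:k}_{t+1}$, so the denominator of $\wt{\Delta}_t$ is $a+\gamma$. Your Step~2 bound $a+\gamma\le\alpha s_t$ and your Step~3 bound $\alpha\wt{\Delta}_t\le\alpha a/(a+\gamma)\le\alpha^2\tau_{t+1,t+1}$ both reduce to $a\le\alpha(s_t-\gamma)$. The sandwich only yields $a\le k_{t+1}-\alpha^{-1}\wb{\:k}_{t+1}^\transp(\kermatrix_t+\gamma\:I)^{-1}\wb{\:k}_{t+1}$. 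When $\varphi(\:x_{t+1})$ lies along a direction of $\featkermatrix_t$ with eigenvalue $\lambda\gg\gamma$, this is about $(1-\alpha^{-1})k_{t+1}$, whereas $s_t-\gamma=k_{t+1}\gamma/(\lambda+\gamma)$ is much smaller.

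What you need is $\akermatrix_t\preceq\kermatrix_t$ together with $k_{t+1}-\wb{\:k}_{t+1}^\transp(\kermatrix_t+c\gamma\:I)^{-1}\wb{\:k}_{t+1}=c\gamma\,\varphi(\:x_{t+1})^\transp(\featkermatrix_t\featkermatrix_t^\transp+c\gamma\:I)^{-1}\varphi(\:x_{t+1})\le c\,(s_t-\gamma)$ for $c\ge1$. This exploits the fact that $k_{t+1}$ and $\wb{\:k}_{t+1}$ come from the same feature vector. The paper gets it from the SVD of $\featkermatrix_t$. Alternatively, apply Lemma~\ref{lem:fast-rls} to column $t+1$ of $\bkermatrix_{t+1}$, whose estimate is exactly $a/(a+\alpha\gamma)$.

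The same coupling is what your lower bound $\Delta_t\ge\tau_{t+1,t+1}/(1+\rho)$ needs. In the eigenbasis you must first write $k_{t+1}=\sum_{j:\lambda_j>0}c_j^2/\lambda_j+r$ with $r\ge0$, using positive semidefiniteness of $\kermatrix_{t+1}$. If you apply the pointwise inequality you quote directly to the subtracted term, it bounds $\Delta_t$ from above, not below.

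With both repairs, your detour through $\tau_{t+1,t+1}$ does give $\beta=\alpha^2(1+\rho)$. It is a repackaging of the paper's entrywise comparison $\frac{\alpha\gamma}{\sigma^2+\alpha\gamma}\le\alpha(1+\rho)\frac{\gamma^2}{(\sigma^2+\gamma)^2}$.
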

\paragraph{Discussion}
Since we cannot compute accurate RLSs for columns that are not present in the dictionary,
we prefer to not estimate how each RLSs changes over time, but instead we directly estimate
the increment of their sum.
We do it by updating our estimate $\adeff{\gamma}_{t+1}$
using our previous estimate $\adeff{\gamma}_{t}$, and $\wt{\Delta}_{t}$.
$\wt{\Delta}_t$ captures directly the interaction
of the new sample with the \textit{aggregate} of the previous samples, and
allows us to estimate the increase in effective dimension using only
the current matrix approximation $\akermatrix_t$, the new column $\wb{\:k}_{t+1}$
and the scalar $k_{t+1}$.
Differently from the other terms we studied, the numerator
of $\wt{\Delta}_{t}$ contains an additional
$\gamma\wb{\:k}_{t+1}^\transp (\akermatrix_{t} + \gamma\:I)^{-2}\wb{\:k}_{t+1}$
second order term.
The guarantees provided by Eq.\,\ref{cond:alaoui-nyst-guar} are not straightforward to extend
because in general if
$(\kermatrix_t + \gamma\:I)^{-1} \succeq (\akermatrix_{t} + \alpha\gamma\:I)^{-1}$,
it is not guaranteed that
$(\kermatrix_t + \gamma\:I)^{-2} \succeq (\akermatrix_{t} + \alpha\gamma\:I)^{-2}$.
Nonetheless, we show that $\akermatrix_t$ is still sufficient to estimate~$\wt{\Delta}_t$,
but, unlike $\alpha$, the approximation error $\beta$ is now dependent on the
spectrum.

\subsection{Analysis of \inkestimate}
With the separate estimates for leverage scores (Sect.\,\ref{ss:ls}) and
effective dimension (Sect.\,\ref{ss:effd}), we have the necessary ingredients 
for the \texttt{$(\alpha,\beta)$-oracle} and we are ready to present the final 
algorithm \inkestimate (Alg.~\ref{alg:ink-estimate-revamp}).

\begin{algorithm}[t!]
    \setstretch{1.2}
    \begin{algorithmic}[1]
        \renewcommand{\algorithmicrequire}{\textbf{Input:}}
        \renewcommand{\algorithmicensure}{\textbf{Output:}}
        \REQUIRE Dataset {$\dataset$}, regularization $\gamma$, sampling budget $\wb{q}$
        \ENSURE $\akermatrix_n$, $\selmatrix_n$
        \STATE Initialize $\mathcal{I}_0$ as empty, $\wt{p}_{1,0} = 1, b_{1,0} = 1$, budget $\wb{q}$
        \FOR{$t = 0,\dots,n-1$}
            \STATE Receive new column $\wb{\:k}_{t+1}$ and scalar $k_{t+1}$
            \STATE Compute $\alpha$-leverage scores $\{\atau_{i,t+1}: i \in \mathcal{I}_{t} \cup \{t+1\}\}$, using $\bkermatrix_{t+1}$, $\:k_i$, $k_{i,i}$, and Eq.\,\ref{eq:rls-estimator}
            \STATE Compute $\beta$-approximate $\adeff{\gamma}_{t+1}$ using $\akermatrix_{t}$, $\wb{\:k}_{t+1}$, $k_{t+1}$, and Eq.\,\ref{eq:deff-estimator}
            \STATE Set $\wt{p}_{i,t+1} = \min\{\atau_{i,t+1}/\adeff{\gamma}_{t+1}, \wt{p}_{i,t}\}$
            \STATE $\mathcal{I}_{{t+1}},\:b_{t+1}$ = $\shrinkexpandop(\mathcal{I}_{t}, \wt{\:p}_{t+1}, \:b_{t}, \wb{q})$
            \STATE Compute $\selmatrix_{t+1}$ using $\mathcal{I}_{t+1}$ and weights ${\sqrt{b_{i,t+1}}}$
            \STATE Compute $\akermatrix_{t+1}$ using $\selmatrix_{t+1}$ and Eq.\,\ref{eq:nystrom}
        \ENDFOR
        \STATE Return $\akermatrix_n$ and $\selmatrix_n$
    \end{algorithmic}
    \caption{The \inkestimate algorithm}
    \label{alg:ink-estimate-revamp}
\end{algorithm}

Using the approximation guarantees of Lem.\,\ref{lem:fast-rls} and Lem.\,\ref{lem:fast-deff}, we
are ready to state the final result, instantiating the generic $\alpha$ and
$\beta$ terms of Thm.~\ref{thm:ink-estimate-main} with the values obtained
in this section.
\begin{theorem}\label{thm:ink-estimate-main}
    Let $\rho = \lambda_{\max}(\kermatrix_t)/\gamma$, $\alpha = \left(\frac{2-\varepsilon}{1-\varepsilon}\right)$,
    $\beta = \left(\frac{2-\varepsilon}{1-\varepsilon}\right)^{2}(1+\rho)$,
and $\gamma > 1$. For any $0\leq \varepsilon \leq 1$,
and $0 \leq \delta \leq 1$, if we 
run Alg.\,\ref{alg:ink-estimate-revamp} with parameter $\wb{q}$, where
    \begin{align*}
        \wb{q} \geq\left( \frac{28 \alpha \beta\deff{\gamma}_t}{\varepsilon^2}\right)\log\left(\frac{4t}{\delta}\right),
    \end{align*}
to compute a sequence of random matrices $\selmatrix_t$ with a random number of columns $Q_t$,
then with probability $1-\delta$, for all $t$
the corresponding Nystr\"{o}m approximation $\akermatrix_t$ (Eq.~\ref{eq:nystrom}) satisfies \eqref{cond:alaoui-nyst-guar}
    \begin{align*}
        \:0 \preceq \kermatrix_t - \akermatrix_t \preceq  \frac{\gamma}{1-\varepsilon}\kermatrix_t(\kermatrix_t + \gamma\:I)^{-1} \preceq \frac{\gamma}{1-\varepsilon}\:I.
    \end{align*}
    With the same prob., \inkestimate requires at most
    \begin{align*}
        \bigotime(n^2 & \wb{q}^2 +  n\wb{q}^3)\\
        &\leq  \bigotime\left(\alpha^{2}\beta^{2}n^2\deff{\gamma}_{n}^{2} + \alpha^3\beta^3 n\deff{\gamma}_{n}^{3}\right)\\
        &=  \bigotime\left(\alpha^{4}(1+\rho)^{2}n^2\deff{\gamma}_{n}^{2} + \alpha^6(1 + \rho)^3 n\deff{\gamma}_{n}^{3}\right)
    \end{align*}
    time and the space is bounded as
    \begin{align*}
        \bigotime( n\wb{q}) \leq \bigotime\left(\alpha\beta n \deff{\gamma}_{n}\right) = \bigotime\left(\alpha^2(1+\rho) n \deff{\gamma}_{n}\right).
    \end{align*}
\end{theorem}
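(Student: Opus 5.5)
The plan is an induction over $t$ that couples Thm.\,\ref{thm:ink-oracle-main} with the two estimator lemmas. The point is that Lem.\,\ref{lem:fast-rls} and Lem.\,\ref{lem:fast-deff} only certify that the quantities computed in Alg.\,\ref{alg:ink-estimate-revamp} form an \texttt{$(\alpha,\beta)$-oracle} at step $t+1$ when $\akermatrix_t$ already satisfies Eq.\,\ref{cond:alaoui-nyst-guar}. So correctness of the oracle and correctness of the sketch must be established jointly.

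Define the good event $E_t$ as follows: for every $s\leq t$, $\akermatrix_s$ satisfies Eq.\,\ref{cond:alaoui-nyst-guar} and $Q_s\leq 8\wb{q}$.

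The base case is immediate. $\akermatrix_0$ is empty, and the first oracle call only needs $\tau_{1,1}$ and $\deff{\gamma}_1$; at that point the estimators use $\bkermatrix_1 = k_1$, so they are exact up to the stated factors.

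For the inductive step, assume $E_t$ holds. Then:
\begin{itemize}
\item Lem.\,\ref{lem:fast-rls} gives $\alpha$-approximate RLSs for all $i\in\mathcal{I}_t\cup\{t+1\}$. These are exactly the indices \shrinkexpandop queries, since dropped columns are never queried again.
\item Lem.\,\ref{lem:fast-deff} gives $\deff{\gamma}_{t+1}\leq\adeff{\gamma}_{t+1}\leq\beta\deff{\gamma}_{t+1}$, using $\rho\geq\lambda_{\max}(\kermatrix_{t+1})/\gamma$. Here I take $\rho$ with $\lambda_{\max}(\kermatrix_n)$, which is monotone in $t$ by interlacing.
\item Prop.\,\ref{prop:kl-take-minimum} shows that taking the minimum with $\wt{p}_{i,t}$ preserves the $(\alpha,\beta)$ property.
\end{itemize}
Hence on $E_t$ the randomness at step $t+1$ is distributed exactly as in \inkoracle.

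To make this rigorous I would not re-prove Thm.\,\ref{thm:ink-oracle-main}. Instead I would define a coupled process in which, after the first time $\tau$ at which $E_t$ fails, the estimators are replaced by a genuine oracle. That process is a run of \inkoracle, so Thm.\,\ref{thm:ink-oracle-main} gives the guarantee for all $t$ simultaneously with probability $1-\delta$. On that event $\tau=\infty$, so the two processes coincide. The main obstacle is this step: the oracle analysis must tolerate oracle outputs that are adaptive to past randomness. I expect the martingale argument adapted from \citet{pachocki2016analysis} to allow this, because it only requires the $(\alpha,\beta)$ property conditionally at each step.

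Complexity is then a direct count on the good event. At step $t$ the dictionary has $Q_t\leq 8\wb{q}$ columns, so:
\begin{itemize}
\item Forming $\selmatrix_t^\transp\kermatrix_t\selmatrix_t+\gamma\:I$ and its inverse costs $\bigotime(\wb{q}^3)$.
\item Applying $(\bkermatrix_{t+1}+\alpha\gamma\:I)^{-1}$ and $(\akermatrix_t+\gamma\:I)^{-2}$ through Woodbury to $\bigotime(\wb{q})$ vectors of length $t$ costs $\bigotime(t\wb{q}^2)$ per vector pass, i.e. $\bigotime(t\wb{q}^2+\wb{q}^3)$ per step for the factorization plus $\bigotime(t\wb{q}^2)$ for all the RLS quadratic forms.
\end{itemize}
Summing over $t\leq n$ gives $\bigotime(n^2\wb{q}^2+n\wb{q}^3)$.

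Space is dominated by storing the $\bigotime(\wb{q})$ columns of length at most $n$, which is $\bigotime(n\wb{q})$. Finally, substitute $\wb{q}=\bigotime(\alpha\beta\deff{\gamma}_n)$, absorbing the logarithm and using monotonicity of $\deff{\gamma}_t$ in $t$ from Lem.\,\ref{lem:monotone-decrease-prob}, together with $\beta=\alpha^2(1+\rho)$, to obtain the stated bounds.
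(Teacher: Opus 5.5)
Your proposal is correct. It uses the paper's ingredients: Thm.~\ref{thm:ink-oracle-main}, Lemmas~\ref{lem:fast-rls}--\ref{lem:fast-deff}, Prop.~\ref{prop:kl-take-minimum}, and the same per-step $\bigotime(t\wb{q}^2+\wb{q}^3)$ cost count. Where you differ is in how you break the circularity between the sketch and the estimators.

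The paper does not use Thm.~\ref{thm:ink-oracle-main} as a black box. It proves both theorems at once with a single stopped matrix martingale, freezing the process once $\normsmall{\:Y_{s-1}}\geq\varepsilon$. It then asserts that, while the process runs, the estimated $\wt{p}_{i,s}$ are $(\alpha,\beta)$-accurate, so Thm.~\ref{thm:ink-estimate-main} is Thm.~\ref{thm:ink-oracle-main} with $\alpha,\beta$ instantiated.

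You instead put the stopping into the oracle. After the first failure of $E_t$ you switch to a genuine oracle, so the coupled run is an honest run of \inkoracle. On the event of Thm.~\ref{thm:ink-oracle-main} the two runs coincide, so the switch never happens.

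Your route buys modularity and a better-targeted stopping event.
\begin{itemize}
\item Your $E_t$ is exactly the hypothesis Lemmas~\ref{lem:fast-rls}--\ref{lem:fast-deff} need. The clause ``for every $s\leq t$'' is what the inner induction on $\adeff{\gamma}_t$ in Lemma~\ref{lem:fast-deff} requires.
\item The paper's $\:Y_{s-1}$ is built from the final-time $\:\Psi_t$. Since $\featkermatrix_t\featkermatrix_t^\transp\succeq\featkermatrix_{s-1}\featkermatrix_{s-1}^\transp$, the condition $\normsmall{\:Y_{s-1}}\leq\varepsilon$ is weaker than accuracy of $\akermatrix_{s-1}$ with respect to $\kermatrix_{s-1}$. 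So the paper's stopping rule does not by itself certify the estimators.
\end{itemize}

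The price is that you need Thm.~\ref{thm:ink-oracle-main} for adaptive oracle outputs and simultaneously over all $t$. The obstacle you flag is already handled by the paper's martingale argument. It conditions on $\mathcal{F}_{i,s}$, treating $\wt{p}_{i,s}$ as fixed. It uses only the deterministic bound $\wt{p}_{i,t}\geq p_{i,t}/(\alpha\beta)$ to build the independent dominating variables $B'_i$. That bound holds at every step of your coupled process, before the switch by the lemmas plus Prop.~\ref{prop:kl-take-minimum} and after it by construction.

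One small correction: $\beta=\alpha^2(1+\rho)$ gives $\alpha\beta=\alpha^3(1+\rho)$ and $\alpha^2\beta^2=\alpha^6(1+\rho)^2$. The final equalities in the statement therefore hold only with $\alpha$ absorbed into the constant, as is the dropped $\varepsilon^{-2}\log(4t/\delta)$ factor.
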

For the space complexity, from Thm.\,\ref{thm:ink-oracle-main} we know we will not select more than $\bigotime( \wb{q} )$ columns in high probability.
For the time complexity, at each iteration we need to solve linear systems
involving
$(\bkermatrix_{t+1} + \alpha\gamma\:I)^{-1}$ and
$(\akermatrix_{t} + \alpha\gamma\:I)^{-1}$. Approximating
the inverse using transformations similar to Eq.\,\ref{eq:linear-system-transformed}
takes $\bigotime(t\wb{q}^{2} + \wb{q}^{3})$ time, again using
a singular value decomposition approach. To compute all leverage scores,
we need to first compute an approximate inverse in $\bigotime(t\wb{q}^{2} + \wb{q}^{3})$
time, and then solve $Q_t$ systems, each using a multiplication costing
$\bigotime(tQ_t)$. With high probability, $Q_t \leq 8\wb{q}$, therefore
computing all leverage scores costs $\bigotime(t\wb{q}^{2} + \wb{q}^{3})$
for the first singular value decomposition, and $\bigotime(t\wb{q})$ for each
of the $\bigotime(\wb{q})$ applications.
To update the effective dimension estimate, we only have to compute another approximate inverse,
and that costs $\bigotime(t\wb{q}^{2} + \wb{q}^{3})$ as well.
Finally, we have to sum the costs over~$n$ steps, and from $\sum_{t=1}^n t\wb{q}^2 \leq \wb{q}^2n^2$,
we obtain the final complexity.
Even with a significantly different approach, \inkestimate achieves the same
approximation guarantees as \batchexact. Consequently, it provides the
same risk guarantees as the known batch version \cite{alaoui2014fast}, stated 
in the following corollary.
\begin{corollary}\label{cor:our-kern-reg-general}
    For every $t \in \{1,\dots,n\}$, let $\kermatrix_t$ be the kernel matrix
    at time $t$. Run Alg.\,\ref{alg:ink-estimate-revamp} with regularization
    parameter $\gamma$ and space budget $\wb{q}$.
    Then, at any time $t$, the solution
    $\wt{\:w}_t$ computed using the regularized \nystrom approximation
    $\akermatrix_t$ satisfies
    \begin{align*}
        \mathcal{R}(\wt{\:w}_t) \leq& \left(1 + \frac{\gamma}{\mu}\frac{1}{1-\varepsilon}\right)^{2} \mathcal{R}(\wh{\:w}_t)\\
        =& \left(1 + \frac{\lambda_{\max}(\kermatrix_t)}{\rho\mu}\frac{1}{1-\varepsilon}\right)^{2} \mathcal{R}(\wh{\:w}_t). 
    \end{align*}
\end{corollary}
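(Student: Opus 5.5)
The plan is to reduce Corollary~\ref{cor:our-kern-reg-general} to the deterministic part of Prop.~\ref{prop:alaoui-nyst-guar}. That proposition's risk bound only uses the spectral condition in Eq.~\ref{cond:alaoui-nyst-guar}, not the way the columns were sampled. So I will first invoke Thm.~\ref{thm:ink-estimate-main}. Under its budget condition on $\wb{q}$, with probability $1-\delta$ and simultaneously for all $t$, the approximation $\akermatrix_t$ produced by \inkestimate satisfies $\:0 \preceq \kermatrix_t - \akermatrix_t \preceq \frac{\gamma}{1-\varepsilon}\:I$. On this event I fix an arbitrary $t$ and argue deterministically.

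For fixed $t$, I will reproduce the bias--variance argument of \citet{alaoui2014fast}. Write $\wt{\:w}_t = (\akermatrix_t+\mu\:I)^{-1}\:y_t$, so the prediction is $\akermatrix_t\wt{\:w}_t = \wt{\:L}\:y_t$ with smoother $\wt{\:L} = \akermatrix_t(\akermatrix_t+\mu\:I)^{-1}$, and similarly $\:L = \kermatrix_t(\kermatrix_t+\mu\:I)^{-1}$. The risk in Eq.~\ref{eq:risk} then decomposes as
\begin{align*}
\mathcal{R}(\wt{\:w}_t) = \mathrm{bias}(\akermatrix_t) + \mathrm{var}(\akermatrix_t),
\end{align*}
with $\mathrm{bias}(\:M) = \mu^2\normsmall{(\:M+\mu\:I)^{-1}\:f^*}^2$ and $\mathrm{var}(\:M) = \sigma^2\Tr\big(\:M^2(\:M+\mu\:I)^{-2}\big)$.

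The variance is the easy term. The map $x\mapsto (x/(x+\mu))^2$ is increasing, and $\akermatrix_t \preceq \kermatrix_t$ gives eigenvalue domination, so $\mathrm{var}(\akermatrix_t)\le \mathrm{var}(\kermatrix_t)$.

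The bias is the main obstacle, and I would follow \citet{alaoui2014fast} here. Set $\:E = \kermatrix_t - \akermatrix_t$, so that $\:0\preceq\:E\preceq \frac{\gamma}{1-\varepsilon}\:I$, and write
\begin{align*}
(\akermatrix_t+\mu\:I)^{-1} = (\kermatrix_t+\mu\:I)^{-1}\big(\:I + \:E(\akermatrix_t+\mu\:I)^{-1}\big).
\end{align*}
From this, $\sqrt{\mathrm{bias}(\akermatrix_t)} \le \big(1+\normsmall{\:E}\,\normsmall{(\akermatrix_t+\mu\:I)^{-1}}\big)\sqrt{\mathrm{bias}(\kermatrix_t)} \le \big(1+\frac{\gamma}{\mu(1-\varepsilon)}\big)\sqrt{\mathrm{bias}(\kermatrix_t)}$. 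The delicate point is a commutation issue: I need to bound $\mu(\kermatrix_t+\mu\:I)^{-1}(\:I+\:E(\akermatrix_t+\mu\:I)^{-1})\:f^*$ against $\mu(\kermatrix_t+\mu\:I)^{-1}\:f^*$. If the direct norm factorization fails, I would instead order the product as $(\:I+(\akermatrix_t+\mu\:I)^{-1}\:E)(\kermatrix_t+\mu\:I)^{-1}$, i.e.\ use $(\akermatrix_t+\mu\:I)^{-1}=(\:I+(\akermatrix_t+\mu\:I)^{-1}\:E)(\kermatrix_t+\mu\:I)^{-1}$. With this ordering the left factor has norm at most $1+\frac{\gamma}{\mu(1-\varepsilon)}$, because $\normsmall{(\akermatrix_t+\mu\:I)^{-1}}\le 1/\mu$ and $\normsmall{\:E}\le\gamma/(1-\varepsilon)$.

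Adding the two terms, and using that the factor is at least $1$ so the variance term is absorbed, gives $\mathcal{R}(\wt{\:w}_t)\le (1+\frac{\gamma}{\mu}\frac{1}{1-\varepsilon})^2\mathcal{R}(\wh{\:w}_t)$. The equality in the statement is just the substitution $\gamma=\lambda_{\max}(\kermatrix_t)/\rho$ from the definition of $\rho$. Finally, since the event of Thm.~\ref{thm:ink-estimate-main} holds uniformly in $t$, the bound holds at every $t$ simultaneously with probability $1-\delta$.
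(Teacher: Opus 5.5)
Your proposal is correct and follows the paper's proof. It uses the same bias--variance split, the variance bound via eigenvalue domination from $\akermatrix_t \preceq \kermatrix_t$, and the bias bound $\normsmall{(\akermatrix_t+\mu\:I)^{-1}\:f^*}\le\big(1+\frac{\gamma}{1-\varepsilon}\normsmall{(\akermatrix_t+\mu\:I)^{-1}}\big)\normsmall{(\kermatrix_t+\mu\:I)^{-1}\:f^*}$ (which the paper imports from \citet{alaoui2014fast}), combined using that the factor is at least one; only your second factorization $(\akermatrix_t+\mu\:I)^{-1}=(\:I+(\akermatrix_t+\mu\:I)^{-1}\:E)(\kermatrix_t+\mu\:I)^{-1}$ yields that bound, so drop the hedge, and your explicit restriction to the probability-$(1-\delta)$ event of Thm.~\ref{thm:ink-estimate-main} makes precise what the paper leaves implicit.
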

\paragraph{Discussion}
Thm.\,\ref{thm:ink-estimate-main} combines the generic result of Thm.\,\ref{thm:ink-oracle-main}
with the actual implementation of an oracle that we developed in this section.
All the guarantees that hold for \inkoracle are inherited by \inkestimate,
but now we can quantify the impact of the errors $\alpha$ and $\beta$ on the algorithm.
As we saw, the $\alpha$ error does not depends on the time, the spectrum of
the kernel matrix or other quantities that increase over time.
On the other hand, estimating the effective dimension without having
access to all the leverage scores is a much harder task, and the $\beta$
factor depends on the spectrum through the $\rho$ coefficient. The influence that
this coefficient exerts on the space and time complexity can vary significantly as
the relative magnitude of $\lambda_{\max}(\kermatrix_n)$, $\gamma$ and $\mu$
changes. If the largest eigenvalue grows too large without a corresponding
increase in $\gamma$, the space and time requirements of \inkestimate can
grow, but the risk bound, depending on $\gamma/\mu$ remains small.
On the other hand, increasing $\gamma$ without increasing
$\mu$ reduces the computational complexity, but makes the guarantees
on the risk of the solution $\wt{\:w}_t$  much weaker.
As an example, \citet[Thm.~3]{alaoui2014fast} choose,
$\mu \geq \lambda_{\max}(\kermatrix_n)$ and $\gamma \approxeq \mu$.
If we do the same, we recover their bound.


\vspace{-0.2cm}
\section{CONCLUSION}\label{sec:conclusions}
\vspace{-0.2cm}
We presented a space-efficient algorithm
for sequential Nystr\"om approximation that requires only a single pass
over the dataset to construct a low-rank matrix $\akermatrix_n$
that accurately approximates the kernel matrix $\kermatrix_n$,
and compute an approximate KRR solution $\wt{\:w}_n$ whose risk
is close to the exact solution $\wh{\:w}_n$. All of these guarantees
do not hold only for the final matrix, but are valid for all intermediate
matrices $\akermatrix_t$ constructed by the sequential algorithm.

To address the challenges coming from the sequential setup, we introduced two separate estimators for RLSs and effective dimension
that provide multiplicative error approximations of these two quantities across iterations.
While the approximation of the RLSs is only a constant factor away from the
exact RLSs, the error of the approximate effective dimension scales
with the spectrum of the matrix through the coefficient $\rho$. A more careful analysis, or a different
estimator might improve this dependence, and they can be easily plugged to the
general analysis.

Our generalization results apply to the fixed design setting. An important
extension of our work would be to consider a random design, such as
in the work of \citet{rudi2015less}. This extension would
need even more careful tuning of the regularization parameter $\gamma$,
needing to satisfy requirements of both generalization
and the approximation of the \texttt{$(\alpha,\beta)$-oracle}.
Finally, the runtime analysis of the algorithm does not fully exploit the
sequential nature of the updates. An implementation based on decompositions
more amenable to updates (e.g., Cholesky decomposition), or on low-rank
solvers that can exploit hot-start might further improve the time complexity.

\paragraph{Acknowledgements}
\label{sec:Acknowledgements}
The research presented in this paper was supported by CPER Nord-Pas de Calais/FEDER DATA Advanced data science and technologies 2015-2020, French Ministry of
Higher Education and Research, Nord-Pas-de-Calais Regional Council,  French National Research Agency project ExTra-Learn (n.ANR-14-CE24-0010-01).

\bibliographystyle{plainnat}
\bibliography{library,library_daniele}

\begin{thebibliography}{24}
\providecommand{\natexlab}[1]{#1}
\providecommand{\url}[1]{\texttt{#1}}
\expandafter\ifx\csname urlstyle\endcsname\relax
  \providecommand{\doi}[1]{doi: #1}\else
  \providecommand{\doi}{doi: \begingroup \urlstyle{rm}\Url}\fi

\bibitem[Alaoui and Mahoney(2015)]{alaoui2014fast}
Ahmed~El Alaoui and Michael~W. Mahoney.
\newblock {Fast randomized kernel methods with statistical guarantees}.
\newblock In \emph{Neural Information Processing Systems (NeurIPS)}, 2015.

\bibitem[Bach(2013)]{bach_sharp_2012}
Francis Bach.
\newblock Sharp analysis of low-rank kernel matrix approximations.
\newblock In \emph{Conference on Learning Theory (COLT)}, 2013.

\bibitem[Drineas et~al.(2012)Drineas, Magdon-Ismail, Mahoney, and
  Woodruff]{drineas2011fast}
Petros Drineas, Malik Magdon-Ismail, Michael~W Mahoney, and David~P. Woodruff.
\newblock {Fast approximation of matrix coherence and statistical leverage}.
\newblock \emph{International Conference on Machine Learning (ICML)}, 2012.

\bibitem[Engel et~al.(2002)Engel, Mannor, and Meir]{engel2002sparse}
Yaakov Engel, Shie Mannor, and Ron Meir.
\newblock {Sparse online greedy support vector regression}.
\newblock In \emph{European Conference on Machine Learning (ECML)}, 2002.

\bibitem[Engel et~al.(2004)Engel, Mannor, and Meir]{engel2004kernel}
Yaakov Engel, Shie Mannor, and Ron Meir.
\newblock {The kernel recursive least-squares algorithm}.
\newblock \emph{IEEE Transactions on Signal Processing}, 52\penalty0
  (8):\penalty0 2275--2285, 2004.

\bibitem[Everitt(2002)]{everitt2002cambridge}
B.~S. Everitt.
\newblock \emph{The Cambridge dictionary of statistics}.
\newblock Cambridge University Press, Cambridge, 2002.

\bibitem[Gittens and Mahoney(2013)]{gittens_revisiting_2013}
Alex Gittens and Michael Mahoney.
\newblock Revisiting the {N}ystr\"{o}m method for improved large-scale machine
  learning.
\newblock In \emph{International Conference on Machine Learning (ICML)}, 2013.

\bibitem[Higham(2002)]{higham2002accuracy}
Nicholas~J Higham.
\newblock \emph{Accuracy and stability of numerical algorithms}.
\newblock Society for Industrial and Applied Mathematics, 2002.

\bibitem[Kelner and Levin(2012)]{kelner2012spectral}
Jonathan~A. Kelner and Alex Levin.
\newblock {Spectral sparsification in the semi-streaming setting}.
\newblock \emph{Theory of Computing Systems}, 53\penalty0 (2):\penalty0
  243--262, 2012.

\bibitem[Kivinen et~al.(2004)Kivinen, Smola, and Williamson]{kivinen2004online}
Jyrki Kivinen, Alexander~J. Smola, and Robert~C. Williamson.
\newblock {Online learning with kernels}.
\newblock \emph{IEEE Transactions on Signal Processing}, 52\penalty0
  (8):\penalty0 2165--2176, 2004.

\bibitem[Le et~al.(2013)Le, Sarl{\'{o}}s, and Smola]{le2013fastfood}
Quoc Le, Tam{\'{a}}s Sarl{\'{o}}s, and Alex~J Smola.
\newblock {Fastfood --- Approximating kernel expansions in loglinear time}.
\newblock In \emph{International Conference on Machine Learning (ICML)}, 2013.

\bibitem[Liu et~al.(2009)Liu, Park, and Principe]{liu_information_2009}
Weifeng Liu, Il~Park, and Jose~C. Principe.
\newblock An information theoretic approach of designing sparse kernel adaptive
  filters.
\newblock \emph{{IEEE} Transactions on Neural Networks}, 20\penalty0
  (12):\penalty0 1950--1961, 2009.

\bibitem[Pachocki(2016)]{pachocki2016analysis}
Jakub Pachocki.
\newblock Analysis of resparsification.
\newblock \emph{arXiv preprint arXiv:1605.08194}, 2016.

\bibitem[Rahimi and Recht(2007)]{rahimi2007random}
Ali Rahimi and Ben Recht.
\newblock {Random features for large-scale kernel machines}.
\newblock In \emph{Neural Information Processing Systems (NeurIPS)}, 2007.

\bibitem[Richard et~al.(2009)Richard, Bermudez, and Honeine]{richard_online_2009}
C{\'e}dric Richard, Jos{\'e} Carlos~M. Bermudez, and Paul Honeine.
\newblock Online prediction of time series data with kernels.
\newblock \emph{{IEEE} Transactions on Signal Processing}, 57\penalty0
  (3):\penalty0 1058--1067, 2009.

\bibitem[Rudi et~al.(2015)Rudi, Camoriano, and Rosasco]{rudi2015less}
Alessandro Rudi, Raffaello Camoriano, and Lorenzo Rosasco.
\newblock {Less is more: Nystr{\"{o}}m computational regularization}.
\newblock In \emph{Neural Information Processing Systems (NeurIPS)}, 2015.

\bibitem[Sch{\"{o}}lkopf and Smola(2001)]{scholkopf2001learning}
Bernhard Sch{\"{o}}lkopf and Alexander~J. Smola.
\newblock \emph{{Learning with kernels: Support vector machines,
  regularization, optimization, and beyond}}.
\newblock MIT Press, 2001.

\bibitem[Shawe-Taylor and Cristianini(2004)]{shawe2004kernel}
John Shawe-Taylor and Nelo Cristianini.
\newblock \emph{{Kernel methods for pattern analysis}}.
\newblock Cambridge University Press, 2004.

\bibitem[Sun et~al.(2012)Sun, Schmidhuber, and Gomez]{sun2012size}
Yi~Sun, J{\"{u}}rgen Schmidhuber, and Faustino~J. Gomez.
\newblock {On the size of the online kernel sparsification dictionary}.
\newblock In \emph{International Conference on Machine Learning (ICML)}, 2012.

\bibitem[Tropp(2011)]{tropp2011freedman}
Joel~A Tropp.
\newblock Freedman’s inequality for matrix martingales.
\newblock \emph{Electron. Commun. Probab}, 16:\penalty0 262--270, 2011.

\bibitem[Tropp(2015)]{tropp2015an-introduction}
Joel~A. Tropp.
\newblock An introduction to matrix concentration inequalities.
\newblock \emph{Foundations and Trends in Machine Learning}, 8\penalty0
  (1-2):\penalty0 1--230, 2015.

\bibitem[Van~Vaerenbergh et~al.(2012)Van~Vaerenbergh, L{\'a}zaro-Gredilla, and
  Santamar{\'\i}a]{van2012kernel}
Steven Van~Vaerenbergh, Miguel L{\'a}zaro-Gredilla, and Ignacio
  Santamar{\'\i}a.
\newblock Kernel recursive least-squares tracker for time-varying regression.
\newblock \emph{IEEE Transactions on Neural Networks and Learning Systems},
  23\penalty0 (8):\penalty0 1313--1326, 2012.

\bibitem[Williams and Seeger(2001)]{williams2001using}
Christopher Williams and Matthias Seeger.
\newblock {Using the Nystrom method to speed up kernel machines}.
\newblock In \emph{Neural Information Processing Systems (NeurIPS)}, 2001.

\bibitem[Zhang et~al.(2015)Zhang, Duchi, and Wainwright]{zhang2015divide}
Yuchen Zhang, John~C. Duchi, and Martin~J. Wainwright.
\newblock {Divide and conquer kernel ridge regression: A distributed algorithm
  with minimax optimal rates}.
\newblock \emph{Journal of Machine Learning Research}, 16:\penalty0 3299--3340,
  2015.

\end{thebibliography}
\newpage
\appendix
\onecolumn
\section{Extended proofs of Section \ref{sec:incremental-rejection}}\label{sec:app:incremental-rejection}

\begin{proof}[Proof of Lemma \ref{lem:monotone-decrease-prob}]

The proof follows from studying the evolution of the numerator and denominator (i.e., RLSs and effective dimension) separately.

\begin{lemma}
    \label{lem:monotone-increase-ridge}
    For any kernel matrix $\kermatrix_t$ and its bordering $\kermatrix_{t+1}$,
    we have for all $i = \{1,\dots,t\}$
    \begin{align*}
        \tau_{i,t}(\gamma) \geq \tau_{i,t+1}(\gamma)
    \end{align*}
\end{lemma}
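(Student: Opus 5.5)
We need to show that $\tau_{i,t+1}(\gamma)\le\tau_{i,t}(\gamma)$ for $i\le t$. The plan is to use the feature-space form of the ridge leverage scores and a rank-one update.

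\textbf{Step 1: feature-space formula.} Write $\kermatrix_t = \featkermatrix_t^\transp\featkermatrix_t$. The push-through identity gives
\[
(\featkermatrix_t^\transp\featkermatrix_t+\gamma\:I)^{-1}\featkermatrix_t^\transp=\featkermatrix_t^\transp(\featkermatrix_t\featkermatrix_t^\transp+\gamma\:I)^{-1}.
\]
Applying it to the definition $\tau_{i,t}(\gamma)=\:e_i^\transp\kermatrix_t(\kermatrix_t+\gamma\:I)^{-1}\:e_i$ yields
\[
\tau_{i,t}(\gamma)=\bphi_i^\transp(\featkermatrix_t\featkermatrix_t^\transp+\gamma\:I)^{-1}\bphi_i .
\]
Here $\bphi_i=\featkermatrix_t\:e_{i,t}$ is the same vector at every time $t\ge i$. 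The advantage of this form is that only the $D\times D$ matrix $\:A_t:=\featkermatrix_t\featkermatrix_t^\transp+\gamma\:I$ depends on $t$.

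\textbf{Step 2: rank-one update.} Bordering the kernel matrix corresponds to $\:A_{t+1}=\:A_t+\bphi_{t+1}\bphi_{t+1}^\transp\succeq\:A_t\succ 0$. Matrix inversion is operator monotone decreasing, so $\:A_{t+1}^{-1}\preceq\:A_t^{-1}$. Evaluating the quadratic form at $\bphi_i$ gives $\tau_{i,t+1}(\gamma)\le\tau_{i,t}(\gamma)$.

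To make Step 2 fully explicit, one can use Sherman–Morrison:
\[
\tau_{i,t+1}=\tau_{i,t}-\frac{(\bphi_i^\transp\:A_t^{-1}\bphi_{t+1})^2}{1+\bphi_{t+1}^\transp\:A_t^{-1}\bphi_{t+1}}.
\]
The subtracted term is nonnegative, and this expression also quantifies the decrease.

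\textbf{Possible obstacle.} The only delicate point is that $D$ may be infinite. I will handle it by noting that all quantities live in the finite-dimensional span of $\bphi_1,\dots,\bphi_{t+1}$. Alternatively, the argument can be done entirely in kernel space. There one applies the Schur complement to the bordered $\kermatrix_{t+1}+\gamma\:I$ and uses $\tau_{i,t}=1-\gamma[(\kermatrix_t+\gamma\:I)^{-1}]_{ii}$. Block inversion shows that the top-left block of $(\kermatrix_{t+1}+\gamma\:I)^{-1}$ equals $(\kermatrix_t+\gamma\:I)^{-1}$ plus a PSD rank-one term, because the Schur complement $k_{t+1}+\gamma-\wb{\:k}_{t+1}^\transp(\kermatrix_t+\gamma\:I)^{-1}\wb{\:k}_{t+1}\ge\gamma>0$. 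Hence the $(i,i)$ entry increases, so $\tau_{i,t+1}$ decreases. I would present the kernel-space version, since it avoids feature-space subtleties and also sets up the analogous computation for $\deff{\gamma}_{t+1}-\deff{\gamma}_t$ needed next.
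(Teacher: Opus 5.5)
Your proof is correct, and your primary argument (Steps 1--2) takes a different route from the paper's.

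\textbf{The paper's route.} It stays in kernel space. It multiplies $\kermatrix_{t+1}$ against the block inverse of $\kermatrix_{t+1}+\gamma\:I$ and simplifies the cross term via $\kerfunc(x_i,x_{t+1})-\:k_i^\transp(\kermatrix_t+\gamma\:I)^{-1}\wb{\:k}_{t+1}=\gamma\:e_i^\transp(\kermatrix_t+\gamma\:I)^{-1}\wb{\:k}_{t+1}$. This gives the ratio $\tau_{i,t+1}/\tau_{i,t}=1-\gamma\bigl(\:e_i^\transp(\kermatrix_t+\gamma\:I)^{-1}\wb{\:k}_{t+1}\bigr)^2/(\xi\,\tau_{i,t})$, with Schur complement $\xi=k_{t+1}+\gamma-\wb{\:k}_{t+1}^\transp(\kermatrix_t+\gamma\:I)^{-1}\wb{\:k}_{t+1}$. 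It then proves $\xi\ge\gamma$ through an SVD of $\featkermatrix_t$.

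\textbf{Your feature-space route.} The observation $\:A_{t+1}=\:A_t+\bphi_{t+1}\bphi_{t+1}^\transp\succeq\:A_t$, together with operator monotonicity of inversion, replaces all of that algebra. It needs no sign check on $\xi$ and never divides by $\tau_{i,t}$. The paper's ratio form tacitly assumes $\tau_{i,t}>0$, which fails in the degenerate case $\bphi_i=\:0$, although the inequality is then trivial.

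\textbf{The two decrements agree.} One checks $\bphi_i^\transp\:A_t^{-1}\bphi_{t+1}=\:e_i^\transp(\kermatrix_t+\gamma\:I)^{-1}\wb{\:k}_{t+1}$ and $1+\bphi_{t+1}^\transp\:A_t^{-1}\bphi_{t+1}=\xi/\gamma$. So your Sherman--Morrison formula and the paper's both give $\tau_{i,t}-\tau_{i,t+1}=\gamma\bigl(\:e_i^\transp(\kermatrix_t+\gamma\:I)^{-1}\wb{\:k}_{t+1}\bigr)^2/\xi$.

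\textbf{Your kernel-space version.} The version you say you would write up is essentially the paper's block-inversion proof. It is shortened by the identity $\tau_{i,t}=1-\gamma[(\kermatrix_t+\gamma\:I)^{-1}]_{ii}$, so only the top-left block of the inverse is needed. There you assert $\xi\ge\gamma$ without justification. Only $\xi>0$ is required, and that is immediate because $\xi$ is the Schur complement of the positive definite matrix $\kermatrix_{t+1}+\gamma\:I$.

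\textbf{What the paper's heavier computation buys.} The same block-inverse expansion directly yields the increment $\deff{\gamma}_{t+1}-\deff{\gamma}_t$ in the next lemma and motivates the estimator $\wt{\Delta}_t$.
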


\begin{proof}[Proof of Lemma \ref{lem:monotone-increase-ridge}]

We want to show that for all $i = \{1,\dots,t\}$
\begin{align*}
\frac{\tau_{i,t+1}(\gamma)}{\tau_{i,t}(\gamma)} = \frac{\:e_{i,t+1}^\transp \kermatrix_{t+1}(\kermatrix_{t+1}^\gamma)^{-1}\:e_{i,t+1}}{\:e_{i,t}^\transp \kermatrix_{t}(\kermatrix_{t}^\gamma)^{-1}\:e_{i,t}} \leq 1,
\end{align*}
where $\kermatrix_{t}^\gamma = \kermatrix_{t} + \gamma\:I_t$. 
Since $\kermatrix_{t+1}$ is obtained as the bordering of $\kermatrix_t$ using the vector $\wb{\:k}_{t+1} \in \Re^t$ and the element $k_{t+1}$ (see Eq.~\ref{eq:ker.bordering}), we can use the block matrix inverse formula and obtain
\begin{align*}
(\kermatrix_{t+1}^\gamma)^{-1}=        &\left[
    \begin{array}{c|c}
        (\kermatrix_t^\gamma) & \wb{\:k}_{t+1} \\
    \hline
    \wb{\:k}_{t+1}^\transp & k_{t+1} + \gamma
    \end{array}
\right]^{-1}
=\frac{1}{\xi}
  \left[
    \begin{array}{c|c}
        \begin{aligned}&\xi(\kermatrix_t^\gamma)^{-1}+ (\kermatrix_t^\gamma)^{-1}\wb{\:k}_{t+1} \wb{\:k}_{t+1}^\transp (\kermatrix_t^\gamma)^{-1}\end{aligned} & -(\kermatrix_t^\gamma)^{-1}\wb{\:k}_{t+1} \\
    \hline
    -\wb{\:k}_{t+1}^\transp (\kermatrix_t^\gamma)^{-1} & 1
    \end{array}
\right],
\end{align*}
with
\begin{align}
    \xi = k_{t+1} +\gamma - \wb{\:k}_{t+1}^\transp (\kermatrix_t^\gamma)^{-1} \wb{\:k}_{t+1}.
\end{align}
Then we can rewrite the ratio between the RLSs as
\begin{align*}
    \frac{\tau_{i,t+1}(\gamma)}{\tau_{i,t}(\gamma)} & =\frac{\:e_{i,t+1}^\transp \kermatrix_{t+1}(\kermatrix_{t+1}^\gamma)^{-1}\:e_{i,t+1}}{\:e_{i,t}^\transp \kermatrix_{t}(\kermatrix_{t}^\gamma)^{-1}\:e_{i,t}}\\
    &=\frac{[\:k_{i,t}^\transp, \enspace\kerfunc(x_{i}, x_{t+1})] \enspace\cdot\enspace[\xi(\kermatrix_{t}^\gamma)^{-1}\:e_{i,t}
    + (\kermatrix_{t}^\gamma)^{-1}\wb{\:k}_{t+1} \wb{\:k}_{t+1}^\transp (\kermatrix_{t}^\gamma)^{-1}\:e_{i,t}, \enspace-\wb{\:k}_{t+1}^\transp (\kermatrix_{t}^\gamma)^{-1}\:e_{i,t}]^\transp}{\xi\:k_{i,t}^\transp (\kermatrix_{t}^\gamma)^{-1}\:e_{i,t}}.
\end{align*}
where $\:k_{i,t} = \kermatrix_t \:e_{i,t}$. In the following we drop the dependency on $t$ from indicator vectors and kernel vectors and we write $\:k_{i} = \:k_{i,t}$ and $\:e_{i}=\:e_{i,t}$. As a result we can further rewrite the previous expression as
\begin{align*}
    \frac{\tau_{i,t+1}(\gamma)}{\tau_{i,t}(\gamma)} & = \frac{\xi\:k_{i}^\transp(\kermatrix_{t}^\gamma)^{-1} \:e_i + \:k_i^\transp (\kermatrix_{t}^\gamma)^{-1} \wb{\:k}_{t+1} \wb{\:k}_{t+1}^\transp (\kermatrix_{t}^\gamma)^{-1} \:e_i
    - \kerfunc(x_{i}, x_{t+1})\wb{\:k}_{t+1}^\transp(\kermatrix_{t}^\gamma)^{-1} \:e_i}{\xi \:k_{i}^\transp (\kermatrix_{t}^\gamma)^{-1}\:e_i}\\
    &=1 - \frac{\wb{\:k}_{t+1}^\transp (\kermatrix_{t}^\gamma)^{-1} \:e_i(\kerfunc(x_{i}, x_{t+1}) - \:k_i^\transp (\kermatrix_{t}^\gamma)^{-1} \wb{\:k}_{t+1})}{\xi \:k_{i}^\transp (\kermatrix_{t}^\gamma)^{-1}\:e_i}.
\end{align*}
We first focus on the term in parenthesis at the numerator. Since by definition $\kerfunc(x_i, x_{t+1}) = \:e_i^\transp\wb{\:k}_{t+1}$, we have
\begin{align*}
    &\kerfunc(x_{i}, x_{t+1}) - \:k_i^\transp (\kermatrix_{t}^\gamma)^{-1} \wb{\:k}_{t+1} =\:e_i^\transp\wb{\:k}_{t+1} - \:e_i^\transp\kermatrix_t(\kermatrix_{t}^\gamma)^{-1} \wb{\:k}_{t+1}\\
    &=\:e_i^\transp\wb{\:k}_{t+1} - \:e_i^\transp(\kermatrix_t + \gamma\:I)(\kermatrix_{t}^\gamma)^{-1} \wb{\:k}_{t+1} + \gamma\:e_i^\transp(\kermatrix_{t}^\gamma)^{-1} \wb{\:k}_{t+1}\\
    &=\:e_i^\transp\wb{\:k}_{t+1} - \:e_i^\transp(\kermatrix_{t}^\gamma)(\kermatrix_{t}^\gamma)^{-1} \wb{\:k}_{t+1} + \gamma\:e_i^\transp(\kermatrix_{t}^\gamma)^{-1} \wb{\:k}_{t+1} =\gamma\:e_i^\transp(\kermatrix_{t}^\gamma)^{-1} \wb{\:k}_{t+1}.
\end{align*}
Thus the ratio can be finally written as
\begin{align*}
    \frac{\tau_{i,t+1}(\gamma)}{\tau_{i,t}(\gamma)} & = 1 -  \frac{\gamma(\wb{\:k}_{t+1}^\transp (\kermatrix_{t}^\gamma)^{-1} \:e_i)^2}{\xi\:k_{i}^\transp (\kermatrix_{t}^\gamma)^{-1}\:e_i}.
\end{align*}
Since $\gamma > 0$, we only need to analyze the sign of the denominator to prove the final statement. Since $\:k_{i}^\transp (\kermatrix_{t}^\gamma)^{-1}\:e_i = \tau_{i,t} > 0$ by definition of RLS, the only term which needs some care is the coefficient $\xi$ of the block matrix inverse formula. 
As illustrated in Sect.\,\ref{sec:extact.krr}, the kernel function applied to any to points is $\kerfunc(x_i,x_{j}) = \langle \phi(x_i), \phi(x_{j})\rangle$, where $\varphi$ is the feature map. As a result, the kernel matrix and kernel vector can be written as $\kermatrix_t = \featkermatrix_t^\transp \featkermatrix_t$ and $\wb{\:k}_{t+1} = \featkermatrix_t^\transp\:\varphi(x_{t+1})$, where $\featkermatrix_t \in \Re^{D\times t}$. Let $\featkermatrix_t = \:V\:\Sigma\:U^\transp$ be the SVD decomposition of
$\featkermatrix$, with $\:V \in \Real^{D \times D}$ and $\:U \in \Real^{t \times t}$ are
orthonormal and $\:\Sigma \in \Real^{D \times t}$ is a rectangular diagonal matrix with singular values $\sigma_i$ on the diagonal.
We have
\begin{align*}
    \wb{\:k}_{t+1}^\transp (\kermatrix_t +\gamma\:I) \wb{\:k}_{t+1}
    &= \:\varphi(x_{t+1})^\transp\featkermatrix_t(\featkermatrix_t^\transp\featkermatrix_t + \gamma\:I)^{-1}\featkermatrix_t^\transp\:\varphi(x_{t+1})\\
    &= \:\varphi(x_{t+1})^\transp\:V\:\Sigma(\:\Sigma^\transp\:\Sigma + \gamma\:I)^{-1}\:\Sigma^\transp\:V^\transp\:\varphi(x_{t+1}).
\end{align*}
The central term $\:\Sigma(\:\Sigma^\transp\:\Sigma + \gamma\:I)^{-1}\:\Sigma^\transp$ is still a rectangular diagonal matrix with elements $\frac{\sigma_i^2}{\sigma_{i}^{2} + \gamma}$ for $i\in[t]$. With an abuse of notation we denote it as $\Diag\Big(\Big\{\frac{\sigma_i^2}{\sigma_{i}^{2} + \gamma}\Big\}_i\Big)$. Then we can write the previous expression as
\begin{align*}
    \wb{\:k}_{t+1}^\transp (\kermatrix_t +\gamma\:I) \wb{\:k}_{t+1}
    &= \:\varphi(x_{t+1})^\transp\:V \Diag\bigg(\Big\{\frac{\sigma_i^2}{\sigma_{i}^{2} + \gamma}\Big\}_i\bigg) \:V^\transp\:\varphi(x_{t+1})\\
    &\leq\:\varphi(x_{t+1})^\transp\:V\:I\:V^\transp\:\varphi(x_{t+1}) = \:\varphi(x_{t+1})^\transp\:\varphi(x_{t+1}) = \kerfunc(x_{t+1}, x_{t+1}) = k_{t+1},
\end{align*}
which implies that 
\begin{align*}
    &\xi = k_{t+1} +\gamma - \wb{\:k}_{t+1}^\transp (\kermatrix_{t}^\gamma)^{-1} \wb{\:k}_{t+1} \geq \gamma > 0,
\end{align*}
thus concluding the proof.

\end{proof}


\begin{lemma}
    \label{lem:monotone-increase-deff}
    For any kernel matrix $\kermatrix_t$ and its bordering $\kermatrix_{t+1}$,
    we have
\begin{align*}
    \deff{\gamma}_{t+1} = \deff{\gamma}_t + \Delta \geq \deff{\gamma}_t,
\end{align*}
where
\begin{align}
    &\Delta = \frac{k_{t+1} - \wb{\:k}_{t+1}^\transp (\kermatrix_{t}^\gamma)^{-1}\wb{\:k}_{t+1} - \gamma\wb{\:k}_{t+1}^\transp (\kermatrix_{t}^\gamma)^{-2}\wb{\:k}_{t+1}}{\xi} \geq 0\label{eq:deff-increase-exact}
\end{align}
and
\begin{align}
    &\xi = k_{t+1} +\gamma - \wb{\:k}_{t+1}^\transp (\kermatrix_{t}^\gamma)^{-1} \wb{\:k}_{t+1}\label{eq:shur-complement}.
\end{align}
\end{lemma}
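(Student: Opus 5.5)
We compute $\deff{\gamma}_{t+1} = \Tr(\kermatrix_{t+1}(\kermatrix_{t+1}^\gamma)^{-1})$ by writing $\kermatrix_{t+1}(\kermatrix_{t+1}^\gamma)^{-1} = \:I - \gamma(\kermatrix_{t+1}^\gamma)^{-1}$, so that
\begin{align*}
\deff{\gamma}_{t+1} = (t+1) - \gamma\Tr\big((\kermatrix_{t+1}^\gamma)^{-1}\big),
\qquad
\deff{\gamma}_{t} = t - \gamma\Tr\big((\kermatrix_{t}^\gamma)^{-1}\big).
\end{align*}
We then reuse the block inverse formula from the proof of Lemma~\ref{lem:monotone-increase-ridge}. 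The top-left block of $(\kermatrix_{t+1}^\gamma)^{-1}$ is $(\kermatrix_t^\gamma)^{-1} + \xi^{-1}(\kermatrix_t^\gamma)^{-1}\wb{\:k}_{t+1}\wb{\:k}_{t+1}^\transp(\kermatrix_t^\gamma)^{-1}$, and the bottom-right entry is $1/\xi$. Taking traces gives
\begin{align*}
\Tr\big((\kermatrix_{t+1}^\gamma)^{-1}\big) = \Tr\big((\kermatrix_{t}^\gamma)^{-1}\big) + \frac{\wb{\:k}_{t+1}^\transp(\kermatrix_t^\gamma)^{-2}\wb{\:k}_{t+1} + 1}{\xi}.
\end{align*}
Subtracting, we obtain
\begin{align*}
\Delta = 1 - \frac{\gamma + \gamma\wb{\:k}_{t+1}^\transp(\kermatrix_t^\gamma)^{-2}\wb{\:k}_{t+1}}{\xi}.
\end{align*}
Putting this over $\xi$ and using Eq.~\eqref{eq:shur-complement} gives the numerator $k_{t+1} - \wb{\:k}_{t+1}^\transp(\kermatrix_t^\gamma)^{-1}\wb{\:k}_{t+1} - \gamma\wb{\:k}_{t+1}^\transp(\kermatrix_t^\gamma)^{-2}\wb{\:k}_{t+1}$, which is exactly Eq.~\eqref{eq:deff-increase-exact}.

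For nonnegativity, recall from the proof of Lemma~\ref{lem:monotone-increase-ridge} that $\xi \geq \gamma > 0$, so it suffices to show the numerator is nonnegative. Here we pass to the feature representation. Write $\wb{\:k}_{t+1} = \featkermatrix_t^\transp\bphi_{t+1}$ and $k_{t+1} = \bphi_{t+1}^\transp\bphi_{t+1}$, and use the SVD $\featkermatrix_t = \:V\:\Sigma\:U^\transp$. The numerator then becomes $\bphi_{t+1}^\transp \:V\, \:M\, \:V^\transp\bphi_{t+1}$ for a diagonal matrix $\:M$. On directions with singular value $\sigma_i$, the entry of $\:M$ is
\begin{align*}
1 - \frac{\sigma_i^2}{\sigma_i^2+\gamma} - \frac{\gamma\sigma_i^2}{(\sigma_i^2+\gamma)^2} = \frac{\gamma^2}{(\sigma_i^2+\gamma)^2} \geq 0.
\end{align*}
On directions outside the range of $\featkermatrix_t$ the entry is $1$. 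Hence $\:M \succeq 0$, the numerator is nonnegative, and $\Delta \geq 0$.

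The main obstacle is the nonnegativity step, since the three terms have mixed signs. The diagonalization handles it: the key identity is $1 - \frac{s}{s+\gamma} - \frac{\gamma s}{(s+\gamma)^2} = \frac{\gamma^2}{(s+\gamma)^2}$. The one point needing care is the bookkeeping of $\featkermatrix_t(\featkermatrix_t^\transp\featkermatrix_t+\gamma\:I)^{-2}\featkermatrix_t^\transp = \:V\:\Sigma(\:\Sigma^\transp\:\Sigma+\gamma\:I)^{-2}\:\Sigma^\transp\:V^\transp$ in the rectangular case.
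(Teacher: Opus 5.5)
Your proof is correct and follows the same route as the paper. You both use the block-inverse formula with Schur complement $\xi$ for the update identity, and the SVD of the feature matrix for nonnegativity, which yields the same diagonal entries $\gamma^2/(\sigma_i^2+\gamma)^2$. The only difference is bookkeeping: you use $\kermatrix(\kermatrix+\gamma\:I)^{-1} = \:I - \gamma(\kermatrix+\gamma\:I)^{-1}$ so that only the diagonal blocks of the inverse enter the trace, whereas the paper multiplies the bordered matrix against the full block inverse and then substitutes $\kermatrix_t = \kermatrix_t^\gamma - \gamma\:I$. Your version is shorter but equivalent.
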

\begin{proof}[Proof of Lemma \ref{lem:monotone-increase-deff}]
We first derive the incremental update of the effective dimension as
    \begin{align*}
        &\deff{\gamma}_{t+1} = \Tr\left(\kermatrix_{t+1}(\kermatrix_{t+1} + \gamma\:I)^{-1}\right)\\
        & =\frac{1}{\xi}\Tr \left( \left[
            \begin{array}{c|c}
                \kermatrix_t & \wb{\:k}_{t+1} \\
                \hline
                \wb{\:k}_{t+1}^\transp & k_{t+1}
            \end{array}
            \right]\left[
    \begin{array}{c|c}
        \xi(\kermatrix_{t}^\gamma)^{-1} + (\kermatrix_{t}^\gamma)^{-1}\wb{\:k}_{t+1} \wb{\:k}_{t+1}^\transp (\kermatrix_{t}^\gamma)^{-1} & -(\kermatrix_{t}^\gamma)^{-1}\wb{\:k}_{t+1} \\
    \hline
    -\wb{\:k}_{t+1}^\transp (\kermatrix_{t}^\gamma)^{-1} & 1
    \end{array}
\right]
        \right)\\
    &=\Tr(\kermatrix_{t}(\kermatrix_{t}^\gamma)^{-1})
    +\frac{\Tr(\kermatrix_{t}(\kermatrix_{t}^\gamma)^{-1}\wb{\:k}_{t+1} \wb{\:k}_{t+1}^\transp (\kermatrix_{t}^\gamma)^{-1})}{\xi}
    -2\frac{\wb{\:k}_{t+1}^\transp (\kermatrix_{t}^\gamma)^{-1}\wb{\:k}_{t+1}}{\xi} + \frac{k_{t+1}}{\xi}\\
    &\begin{aligned}
        = \deff{\gamma}_t &
        +\frac{\Tr((\kermatrix_{t} + \gamma\:I)(\kermatrix_{t}^\gamma)^{-1}\wb{\:k}_{t+1} \wb{\:k}_{t+1}^\transp (\kermatrix_{t}^\gamma)^{-1})}{\xi}\\
    &- \gamma\frac{\Tr((\kermatrix_{t}^\gamma)^{-1}\wb{\:k}_{t+1} \wb{\:k}_{t+1}^\transp (\kermatrix_{t}^\gamma)^{-1})}{\xi}
    -2\frac{\wb{\:k}_{t+1}^\transp (\kermatrix_{t}^\gamma)^{-1}\wb{\:k}_{t+1}}{\xi} + \frac{k_{t+1}}{\xi}\\
    = \deff{\gamma}_t &+\frac{\Tr(\:I\wb{\:k}_{t+1} \wb{\:k}_{t+1}^\transp (\kermatrix_{t}^\gamma)^{-1})}{\xi}
    - \gamma\frac{\Tr((\kermatrix_{t}^\gamma)^{-1}\wb{\:k}_{t+1} \wb{\:k}_{t+1}^\transp (\kermatrix_{t}^\gamma)^{-1})}{\xi}
    -2\frac{\wb{\:k}_{t+1}^\transp (\kermatrix_{t}^\gamma)^{-1}\wb{\:k}_{t+1}}{\xi} + \frac{k_{t+1}}{\xi}
    \end{aligned}\\
    &= \deff{\gamma}_t + \frac{k_{t+1} - \wb{\:k}_{t+1}^\transp (\kermatrix_{t}^\gamma)^{-1}\wb{\:k}_{t+1} - \gamma\wb{\:k}_{t+1}^\transp (\kermatrix_{t}^\gamma)^{-2}\wb{\:k}_{t+1}}{\xi},
\end{align*}
where we use the bordering of the kernel and the block matrix inversion formula. Using the same arguments as in Lemma~\ref{lem:monotone-increase-ridge}, we have $\xi>0$, thus we only need to focus on the numerator of the second term in the previous expression. We use the feature map matrix $\:\Phi_t$ and the fact that $\kermatrix_t = \:\Phi_t\:\Phi_t^\transp$ to obtain
\begin{align*}
&k_{t+1} - \wb{\:k}_{t+1}^\transp (\kermatrix_{t}^\gamma)^{-1}\wb{\:k}_{t+1} - \gamma\wb{\:k}_{t+1}^\transp (\kermatrix_{t}^\gamma)^{-2}\wb{\:k}_{t+1}\\
&=\:\varphi(x_{t+1})^\transp\:\varphi(x_{t+1}) - \:\varphi(x_{t+1})^\transp\featkermatrix_t(\featkermatrix_t \featkermatrix_t^\transp + \gamma)^{-1}\featkermatrix_t^\transp\:\varphi(x_{t+1}) - \gamma\:\varphi(x_{t+1})^\transp\featkermatrix_t(\featkermatrix_t \featkermatrix_t^\transp + \gamma)^{-2}\featkermatrix_t^\transp\:\varphi(x_{t+1})\\
&=\:\varphi(x_{t+1})^\transp\Big( \:I - \featkermatrix_t(\featkermatrix_t \featkermatrix_t^\transp + \gamma)^{-1}\featkermatrix_t^\transp - \gamma \featkermatrix_t(\featkermatrix_t \featkermatrix_t^\transp + \gamma)^{-2}\featkermatrix_t^\transp\Big)  \:\varphi(x_{t+1}).
\end{align*}
Using the singular value decomposition of the feature matrix as $\featkermatrix_t = \:V\:\Sigma\:U^\transp$, we can rewrite the central part of the quadratic form above as
\begin{align*}
&k_{t+1} - \wb{\:k}_{t+1}^\transp (\kermatrix_{t}^\gamma)^{-1}\wb{\:k}_{t+1} - \gamma\wb{\:k}_{t+1}^\transp (\kermatrix_{t}^\gamma)^{-2}\wb{\:k}_{t+1}\\
&=\:\phi(x_{t+1})^\transp\:V\Diag\left(\Big\{1-\frac{\sigma_i^2}{\sigma_i^2 + \gamma} - \frac{\gamma\sigma_i^2}{(\sigma_i^2 + \gamma)^2}\Big\}_{i=1}^t\right)\:V^\transp\:\phi(x_{t+1})\\
&=\:\varphi(x_{t+1})^\transp\:V\Diag\left( \bigg\{ \Big(\frac{\gamma}{\sigma_i^2+\gamma}\Big)^2 \bigg\}_{i=1}^d \right)\:V^\transp\:\varphi(x_{t+1}) > 0,
\end{align*}
which guarantees that the increment to the effective dimension is non-negative and concludes the proof.
\end{proof}
Combining the two lemmas, we obtain Lemma \ref{lem:monotone-decrease-prob}.
\end{proof}

\begin{proof}[Proof of Proposition \ref{prop:kl-take-minimum}]
    With a slight abuse of notation, for this proof we indicate with
    $\:p_{t+1} \leq \:p_t$ that for each $i \in \{1,\dots,t\}$ we have $p_{i,t+1} \leq p_{i,t}$.
    We know that $\wt{\:p}_t \leq \:p_t$ and $\wt{\:p}_{t+1} \leq \:p_{t+1}$. Then obviously
    \begin{align*}
        \min\{\wt{\:p}_t, \wt{\:p}_{t+1}\} \leq \min\{\:p_t, \:p_{t+1}\} \leq \:p_{t+1}
    \end{align*}
    For the lower bound, we know from Lemma \ref{lem:monotone-decrease-prob} that $\:p_{t+1} \leq \:p_t$.
    Therefore
    \begin{align*}
        \min\{\wt{\:p}_t, \wt{\:p}_{t+1}\} \geq \frac{1}{\alpha\beta}\min\{\:p_t, \:p_{t+1}\} \geq \frac{1}{\alpha\beta}\:p_{t+1}.
    \end{align*}
\end{proof}


\section{Proof of Theorem \ref{thm:ink-oracle-main} and \ref{thm:ink-estimate-main}}\label{sec:app-concentration-proofs}
\begin{proof}[Proof of Theorem \ref{thm:ink-oracle-main} and \ref{thm:ink-estimate-main}]
We will first prove the result for generic $(\alpha,\beta)$ approximate
probabilities, thus proving Theorem \ref{thm:ink-oracle-main}.
Theorem \ref{thm:ink-estimate-main} directly follows by placing the correct
values for $\alpha$ and $\beta$ computed in Section \ref{sec:incremental-oracle}.
We will first prove the result for generic $(\alpha,\beta)$ approximate
probabilities, thus proving Theorem \ref{thm:ink-oracle-main}.
Theorem \ref{thm:ink-estimate-main} directly follows by placing the correct
values for $\alpha$ and $\beta$ computed in Section \ref{sec:incremental-oracle}.
 For the proof, we will use the following matrix concentration inequalities.
 \begin{proposition}[Thm.~1.2~\cite{tropp2011freedman}]\label{prop:matrix-freedman}
Consider a matrix martingale $\{ \:Y_k : k = 0, 1, 2, \dots \}$ whose values are self-adjoint matrices with dimension $d$, and let $\{ \:X_k : k = 1, 2, 3, \dots \}$ be the difference sequence.  Assume that the difference sequence is uniformly bounded in the sense that
\begin{align*}
\lambda_{\max}( \:X_k ) \leq R
\quad\text{almost surely}
\quad\text{for $k = 1, 2, 3, \dots$}.
\end{align*}
Define the predictable quadratic variation process of the martingale:
\begin{align*}
\:{W}_k := \sum_{j=1}^k \expectedvalue \left[ \:X_j^2 \condbar \{\:X_{s}\}_{s=0}^{j-1} \right].
\quad\text{for $k = 1, 2, 3, \dots$}.
\end{align*}
Then, for all $t \geq 0$ and $\sigma^2 > 0$,
\begin{align*}
\probability\left( \exists k \geq 0 : \lambda_{\max}(\:Y_k) \geq t \ \text{ and }\ 
        \normsmall{\:W_k} \leq \sigma^2 \right)
	\leq d \cdot \exp \left\{ - \frac{ t^2/2 }{\sigma^2 + Rt/3} \right\}.
\end{align*}
\end{proposition}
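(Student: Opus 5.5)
The plan is the standard Lieb-concavity argument for matrix martingales. First I bound the conditional moment generating function of each difference, then I build a nonnegative trace-exponential supermartingale, and finally I stop it at the first time the bad event happens.

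\textbf{Step 1: conditional MGF of one difference.} Fix $\theta>0$ and set $g(\theta) = (e^{\theta R} - \theta R - 1)/R^2$.
\begin{itemize}
\item For real $x \leq R$ one has $e^{\theta x} \leq 1 + \theta x + g(\theta) x^2$, because $(e^{\theta x} - 1 - \theta x)/x^2$ is increasing in $x$.
\item The transfer rule applies this scalar inequality to the eigenvalues of the self-adjoint $\:X_k$, which satisfies $\lambda_{\max}(\:X_k)\leq R$. This gives $e^{\theta \:X_k} \preceq \:I + \theta\:X_k + g(\theta)\:X_k^2$.
\item Taking conditional expectations and using the martingale property $\expectedvalue[\:X_k \mid \text{past}] = \:0$, I get $\expectedvalue[e^{\theta\:X_k}\mid\text{past}] \preceq \:I + g(\theta)\expectedvalue[\:X_k^2\mid\text{past}]$.
\item Finally $\:I + \:A \preceq e^{\:A}$, so $\log \expectedvalue[e^{\theta\:X_k}\mid\text{past}] \preceq g(\theta)\,\expectedvalue[\:X_k^2\mid\text{past}]$, since $\log$ is operator monotone.
\end{itemize}

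\textbf{Step 2: the supermartingale.} Define $S_k = \Tr\exp\big(\theta \:Y_k - g(\theta)\:W_k\big)$, with $\:Y_0 = \:0$ and $\:W_0 = \:0$, so that $S_0 = d$.
\begin{itemize}
\item The matrix $\:W_k$ is predictable, and $\:W_k - \:W_{k-1} = \expectedvalue[\:X_k^2\mid\text{past}]$. Hence, conditionally on the past, $\:H := \theta\:Y_{k-1} - g(\theta)\:W_k$ is fixed and $S_k = \Tr\exp(\:H + \log e^{\theta\:X_k})$.
\item Lieb's theorem says $\:A \mapsto \Tr\exp(\:H + \log \:A)$ is concave on positive definite matrices. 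Conditional Jensen then gives $\expectedvalue[S_k\mid\text{past}] \leq \Tr\exp\big(\:H + \log\expectedvalue[e^{\theta\:X_k}\mid\text{past}]\big)$.
\item Step 1 and monotonicity of $\Tr\exp$ under $\preceq$ bound this by $\Tr\exp\big(\theta\:Y_{k-1} - g(\theta)\:W_{k-1}\big) = S_{k-1}$.
\end{itemize}
So $(S_k)$ is a positive supermartingale. This is the step I expect to be the main obstacle. It is the only place where noncommutativity matters, and it needs Lieb's concavity together with a careful justification that $\:W_k$ may be treated as fixed inside the conditional expectation.

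\textbf{Step 3: stopping and optimization.}
\begin{itemize}
\item Let $\kappa$ be the first $k$ with $\lambda_{\max}(\:Y_k)\geq t$ and $\normsmall{\:W_k}\leq\sigma^2$, and set $\kappa=\infty$ if no such $k$ exists.
\item On $\{\kappa<\infty\}$ we have $\:W_\kappa \preceq \sigma^2\:I$. Monotonicity of $\Tr\exp$ and $\Tr e^{\:M} \geq e^{\lambda_{\max}(\:M)}$ then give $S_\kappa \geq \exp(\theta t - g(\theta)\sigma^2)$.
\item Optional stopping for the nonnegative supermartingale, applied at $\kappa\wedge n$ and passed to the limit via Fatou, gives $\expectedvalue[S_\kappa \indfunc\{\kappa<\infty\}] \leq d$.
\item Markov's inequality yields $\probability(\kappa<\infty) \leq d\exp(-\theta t + g(\theta)\sigma^2)$.
\item Choosing $\theta = R^{-1}\log(1+Rt/\sigma^2)$ gives the Bennett-type bound $d\exp\{-(\sigma^2/R^2)\,h(Rt/\sigma^2)\}$ with $h(u) = (1+u)\log(1+u) - u$.
\item The elementary inequality $h(u) \geq (u^2/2)/(1+u/3)$ converts this into the stated Bernstein form $d\exp\{-(t^2/2)/(\sigma^2 + Rt/3)\}$.
\end{itemize}
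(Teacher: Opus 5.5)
Your proposal is correct. It matches the proof in the cited source: the paper gives no proof of this proposition and imports it from Tropp (2011), and Tropp argues as you do. He bounds $\log\expectedvalue[e^{\theta\:X_k}\mid\text{past}]\preceq g(\theta)\,\expectedvalue[\:X_k^2\mid\text{past}]$ with $g(\theta)=(e^{\theta R}-\theta R-1)/R^2$, and uses Lieb's concavity theorem to make $\Tr\exp(\theta\:Y_k-g(\theta)\:W_k)$ a positive supermartingale. He then stops it, applies Markov's inequality, and passes from the Bennett form to the Bernstein form.

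Your explicit normalization $\:Y_0=\:0$ is the standing convention in Tropp's paper and is genuinely needed. The statement as transcribed here omits it, and without it the bound fails for a large deterministic $\:Y_0$.
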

 \begin{proposition}[Thm.~5.1.1~\cite{tropp2015an-introduction}]\label{prop:matrix-chernoff}
 Consider a finite sequence $\{\:V_{j}\}$ of independent random symmetric PSD matrices with 
 dimension $t$. Assume that $\lambda_{\max}(\:V_{j})\leq R$ and
 define
 \begin{align*}
 \mu = \left\Vert  \sum\nolimits_{j} \expectedvalue\left[\:V_j \right]\right\Vert_{2}.
 \end{align*}
then, for all $\delta \geq 0$,
\begin{align*}
\probability\left( \lambda_{\max}\left(\sum\nolimits_{j}\:V_j\right) \geq (1+\delta)\mu \right)
	\leq t\left( \frac{e^\delta}{(1+\delta)^{(1+\delta)}} \right)^{\mu/R}
\end{align*}
\end{proposition}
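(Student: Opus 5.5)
We would follow the matrix Laplace transform method, as in \citet{tropp2015an-introduction}. By homogeneity we may assume $R=1$: replace $\:V_j$ by $\:V_j/R$ and $\mu$ by $\mu/R$, which leaves both sides unchanged. Set $\:Y=\sum_j \:V_j$. The first step is the matrix Laplace bound. For any $\theta>0$, Markov's inequality together with $e^{\theta\lambda_{\max}(\:Y)}=\lambda_{\max}(e^{\theta \:Y})\le \Tr e^{\theta\:Y}$ gives
\begin{align*}
\probability\left(\lambda_{\max}(\:Y)\geq u\right)\leq e^{-\theta u}\,\expectedvalue \Tr \exp(\theta \:Y).
\end{align*}

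The second step, and the main obstacle, is to handle $\expectedvalue\Tr\exp(\theta\sum_j\:V_j)$. The summands do not commute, so the scalar factorization of the exponential of a sum is unavailable. We would use Lieb's theorem: the map $\:A\mapsto \Tr\exp(\:H+\log \:A)$ is concave on positive definite matrices. Conditioning on all summands but one and applying Jensen's inequality, then iterating over the independent $\:V_j$, yields subadditivity of matrix cumulant generating functions:
\begin{align*}
\expectedvalue\Tr\exp\Big(\theta\sum_j\:V_j\Big)\leq \Tr\exp\Big(\sum_j\log\expectedvalue e^{\theta\:V_j}\Big).
\end{align*}
If a self-contained argument were required, this is the step we would expect to cost the most effort; here we would simply invoke Lieb's theorem.

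The third step bounds each individual matrix mgf. Since $0\preceq\:V_j\preceq \:I$ and $x\mapsto e^{\theta x}$ is convex on $[0,1]$, we have the scalar bound $e^{\theta x}\le 1+(e^\theta-1)x$. The transfer rule for functions of a Hermitian matrix gives $e^{\theta\:V_j}\preceq \:I+(e^\theta-1)\:V_j$. Taking expectations and using $\log(1+x)\le x$, which is operator monotone, we get
\begin{align*}
\log\expectedvalue e^{\theta \:V_j}\preceq (e^\theta-1)\expectedvalue \:V_j.
\end{align*}
Summing over $j$, and using that $\Tr\exp$ is monotone with respect to $\preceq$, we obtain
\begin{align*}
\Tr\exp\Big((e^\theta-1)\sum_j\expectedvalue\:V_j\Big)\le t\,e^{(e^\theta-1)\mu}.
\end{align*}
The last inequality bounds the trace by $t$ times the largest eigenvalue, which is at most $e^{(e^\theta-1)\mu}$ by the definition of $\mu$.

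Finally, take $u=(1+\delta)\mu$ and $\theta=\log(1+\delta)$. The exponent becomes $-\theta(1+\delta)\mu+(e^\theta-1)\mu=\mu\big(\delta-(1+\delta)\log(1+\delta)\big)$. This gives exactly $t\big(e^\delta/(1+\delta)^{1+\delta}\big)^{\mu}$, and undoing the normalization gives exponent $\mu/R$, as claimed in Proposition~\ref{prop:matrix-chernoff}.
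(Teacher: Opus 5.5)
Your argument is correct. It is exactly the standard proof of \citet[Thm.~5.1.1]{tropp2015an-introduction}: the matrix Laplace transform bound, Lieb-based subadditivity of matrix cumulant generating functions, the Chernoff mgf bound $\log\expectedvalue e^{\theta\:V_j}\preceq(e^\theta-1)\expectedvalue\:V_j$, and the choice $\theta=\log(1+\delta)$. The paper itself gives no proof of this proposition; it imports the result by citation.

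Two small points to tidy. In the third step, the operator monotonicity you need is that of the logarithm, to pass from $\expectedvalue e^{\theta\:V_j}\preceq\:I+(e^\theta-1)\expectedvalue\:V_j$ to logarithms; the scalar bound $\log(1+x)\le x$ then enters through the transfer rule. Also, $\delta=0$ forces $\theta=0$, outside your stated range $\theta>0$, but there the claimed bound is $t\geq 1$ and holds trivially.
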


Let $\kermatrix_t = \:U\:\Lambda\:U^\transp$ be the eigendecomposition of the kernel matrix
at time $t$. Then we define $\:\Psi_t = \:\Lambda^{1/2}(\:\Lambda + \gamma\:I)^{-1/2}\:U^\transp$.
In the following we drop the dependency on $t$ and we use $\:\Psi_t = \:\Psi$ and we denote by $\:\psi_i$ the $i$-th
column of $\:\Psi$ so that
$$
\:\Psi\:\Psi^\transp=\sum_{i=1}^{t}\:{\psi}_{i}\:\psi_{i}^\transp.
$$
We recall that $\:\Psi$, which
describes the ratio between the spectrum of the kernel and its
soft-thresholded version, is strictly related to both the RLSs and
the effective dimension, since $\tau_{i,t}(\gamma) = \normsmall{[\:\Psi_t]_{:,i}}^{2}_{2}$ and
$\deff{\gamma}_t = \normsmall{\:\Psi_t^{2}}_{F}$.

We want to show that, for an appropriately chosen $\selmatrix_t$,
the largest eigenvalue of
\begin{align*}
\:\Psi\:\Psi^\transp - \:\Psi\selmatrix_t\selmatrix_t^\transp\:\Psi^\transp
=\sum_{i=1}^{t}\left(1-b_{i,t}\right)\:\psi_{i}\:\psi_{i}^\transp
\end{align*}
is small. In particular, \citet{alaoui2014fast} showed that if
$\lambda_{\max}(\:\Psi\:\Psi^\transp - \:\Psi\selmatrix_t\selmatrix_t^\transp\:\Psi^\transp) \leq \varepsilon$,
then the approximated matrix $\akermatrix_t$ deterministically satisfies
 \eqref{cond:alaoui-nyst-guar}.

\paragraph{Approximation accuracy}
The following proof follows closely the approach introduced in
\citet{pachocki2016analysis}. In particular, we will follow the random
evolution of the weights $b_{i,s}$ (defined by the behaviour
of Subroutine $\shrinkexpandop$) as the algorithm runs.
The subscript $(i,s)$ indexes these weights temporally, following the iterations
of our algorithm, but the relationship between $s$ and the value of $b_{i,s}$
is not immediate. In particular, if $\wt{p}_{i,s}$ does not decrease sufficiently
at one iteration, $b_{i,s} = b_{i,s-1}$ and the weight of that column stays
unchanged. If instead it decreases significantly, the inner loop
of \shrinkexpandop (lines \ref{alg:decision-rule-oracle} and following) can execute multiple times, and $b_{i,s} - b_{i,s-1}$ can
be greater than 1.
Nonetheless, if $b_{i,s} = l$, we know that none of the Bernoulli trials
    $\mathcal{B}\left(\frac{k}{k+1}\right)$ in the sequence
$k = \{1,2,\dots,l\}$ have failed, because if any of those trials failed
we would have set the variable to 0 forever.

We can represent both \inkoracle and \inkestimate as the following random
process
\begin{align*}
    \wh{\:Y}_{i,s} = 
    \left(\sum_{k=1}^{i}\left(1-b_{k,s}\right)\:\psi_{k}\:\psi_{k}^\transp\right)
    +\left(\sum_{k=i+1}^{t}\left(1-b_{k,s-1}\right)\:\psi_{k}\:\psi_{k}^\transp\right)
\end{align*}
where the differences $\wh{\:X}_{i,s} = \wh{\:Y}_{i,s} -\wh{\:Y}_{i,s-1}$ are
\begin{align*}
    &\wh{\:X}_{i,s} = 
\left(b_{i,s-1} - b_{i,s}\right)\:\psi_{i}\:\psi_{i}^\transp,
\end{align*}

For a consistent notation, we simply set $b_{i,s} = 1$
for all $s < i$.
Intuitively, for $t$ time steps, indexed by $s$, the algorithm loops over $t$ columns,
indexed by $i$. For each columns, it uses a deterministic function
$f(\{b_{i,s-1}\}_{i=1}^t)$ to compute $\wt{p}_{i,s}$.
If $\wt{p}_{i,s}$ decreases enough, compared to $\wt{p}_{i,s-1}$,
it executes one or more independent Bernoulli trials to either increase
or set the weight to 0.
In practice, the algorithm only loops
over the columns currently stored and the newly arrived column,
but the analysis implicitly considers the columns it dropped and has not seen yet.
At the end of a full iteration, the columns are actually discarded.
For the end of an iteration ($i = t$) we use the
shortened notation $\wh{\:Y}_{s} = \wh{\:Y}_{t,s}$.

To bound $\lambda_{\max}(\wh{\:Y}_{t})$, we will use
Freedman's inequality, in particular its
extensions for matrix martingales \cite{tropp2011freedman}
from Proposition \ref{prop:matrix-freedman}.
Instead of working directly on $\wh{\:Y}_{t}$, we'll consider a different
process that has access to information unaccessible to a realistic
algorithm but can be analyzed more easily. Consider the process
\begin{align*}
\:Y_{i,s} = 
    \left(
    \left(\sum_{k=1}^{i}\left(1-b_{k,s}\right)\:\psi_{k}\:\psi_{k}^\transp\right)
    +\left(\sum_{k=i+1}^{t}\left(1-b_{k,s-1}\right)\:\psi_{k}\:\psi_{k}^\transp\right)
    \right)\indfunc\left\{\normsmall{\:Y_{s-1}} \leq \varepsilon\right\}
    +\:Y_{s-1}\indfunc\left\{\normsmall{\:Y_{s-1}} \geq \varepsilon\right\}.
\end{align*}
This sequence represents a variant of our algorithm that can detect if the
previous iteration failed to construct an accurate approximation. When
this failure happens, it stops the process and continues until the end
without updating anything. It is clear to see that if any of the
intermediate elements of the process violates the condition, the last
element will violate it too. For the rest, $\:Y_{s}$ behaves exactly
like $\wh{\:Y}_{s}$. Therefore, we can write
\begin{align*}
\probability\left( \lambda_{\max}(\wh{\:Y}_{t}) \geq \varepsilon\right)
\leq \probability\left( \lambda_{\max}(\:Y_{t}) \geq \varepsilon\right).
\end{align*}
If we denote with $\:W_t$ the predictable quadratic variation of the process
$\:Y_t$, we can write
\begin{align*}
&\probability\left( \lambda_{\max}(\wh{\:Y}_{t}) \geq \varepsilon\right)
\leq \probability\left( \lambda_{\max}(\:Y_{t}) \geq \varepsilon\right)\\
&= \probability\left( \lambda_{\max}(\:Y_{t}) \geq \varepsilon \cap \lambda_{\max}(\:W_{t}) \leq \sigma^2\right)
+ \probability\left( \lambda_{\max}(\:Y_{t}) \geq \varepsilon \cap \lambda_{\max}(\:W_{t}) > \sigma^2\right)\\
&\leq \probability\left( \lambda_{\max}(\:Y_{t}) \geq \varepsilon \cap \lambda_{\max}(\:W_{t}) \leq \sigma^2\right)
+ \probability\left( \lambda_{\max}(\:W_{t}) > \sigma^2\right)
\end{align*}
Where the last inequality derives from the definition of probability of an
intersection of two events.
For the first term, we will use Proposition \ref{prop:matrix-freedman},
while for the second we can use Proposition \ref{prop:matrix-chernoff}.
We will now show that our processes satisfy the conditions required to
apply the propositions, and how to properly choose $\sigma^2$.

First, let's analyze the process $\:Y_{t}$.
We define the martingale difference as
\begin{align*}
\:X_{i,s} =&
\left(b_{i,s-1} - b_{i,s}\right)\:\psi_{i}\:\psi_{i}^\transp \indfunc\left\{\normsmall{\:Y_{s-1}} \leq \varepsilon \right\}
        +\:0\indfunc\left\{\normsmall{\:Y_{s-1}} \geq \varepsilon \right\}\\
=&\left(b_{i,s-1} - b_{i,s}\right)\:\psi_{i}\:\psi_{i}^\transp \indfunc\left\{\normsmall{\:Y_{s-1}} \leq \varepsilon \right\}
\end{align*}
First, we show that the martingale properties are satisfied. We begin by remarking
that conditioned on all events up to time $s-1$, $\wt{p}_{i,s}$ is
a fixed quantity. The only randomness across the iteration is the set of
random variables $\{b_{i,s}\}$, which (again, conditioned on the previous iteration)
are independent.
Moreover, if the indicator function is not active,
$\:Y_{s}$ is deterministically equal to
$\:Y_{s-1}$ (the process is stopped), and the martingale requirement
is satisfied. Therefore, for the reminder of the proof, we can safely assume
$\normsmall{\:Y_{s-1}} \leq \varepsilon$.
Define all the past filtration as
$\mathcal{F}_{i,s} = \{\{\:X_{k,s}\}_{k=0}^{i-1}, \:Y_{s-1} \}$.
Conditioned on $\mathcal{F}_{i,s}$, $b_{i,s-1}$ and $\wt{p}_{i,s}$ are
constants.
In this case,
\begin{align*}
&\expectedvalue_{b_{i,s}}\left[\:X_{i,s} \condbar \mathcal{F}_{i,s}\right]
=\expectedvalue_{b_{i,s}}\left[
    \left(b_{i,s-1} - b_{i,s}\right)\:\psi_{i}\:\psi_{i}^\transp \condbar \mathcal{F}_{i,s}\right]
=
    \left(b_{i,s-1} - \expectedvalue_{b_{i,s}}\left[b_{i,s}\condbar \mathcal{F}_{i,s}\right]\right)\:\psi_{i}\:\psi_{i}^\transp
\end{align*}
We should now compute the expected value of $b_{i,s}$. Without loss
of generality, assume $b_{i,s-1} = l$. Therefore, the algorithm will continue
to execute Bernoulli trials until $b_{i,s} \geq 1/(\wt{p}_{i,s}\wb{q})$. Notice
that given $\:Y_{s-1}$, this is a fixed quantity $l' > l$, that we will take as
target. If any of the $l' - l$ trials fail, the variable will be set to 0.
Therefore, its expected value is
\begin{align*}
&\expectedvalue\left[{b_{i,s}}\condbar \mathcal{F}_{i,s}\right]
= l' \probability\left(\mathcal{B}\left(\frac{l'-1}{l'}\right) = 1 \cap \mathcal{B}\left(\frac{l'-2}{l'-1}\right) = 1 \cap \dots \cap\mathcal{B}\left(\frac{l}{l+1}\right) = 1\right)\\
&= l' \probability\left(\mathcal{B}\left(\frac{l'-1}{l'}\right) = 1 \right)\probability\left( \mathcal{B}\left(\frac{l'-2}{l'-1}\right) = 1 \right)\probability\left( \dots\right) \probability\left(\mathcal{B}\left(\frac{l}{l+1}\right) = 1\right)\\
&= l' \frac{l'-1}{l'} \frac{l'-2}{l'-1} \cdots\frac{l}{l+1}
= l = b_{i,s-1}
\end{align*}
Then
\begin{align*}
&\expectedvalue_{b_{i,s}}\left[
    \left(b_{i,s-1} - b_{i,s}\right)\:\psi_{i}\:\psi_{i}^\transp \condbar \mathcal{F}_{i,s}\right]
= \left(b_{i,s-1} - b_{i,s}\frac{b_{i,s-1}}{b_{i,s}}
\right)\:\psi_{i}\:\psi_{i}^\transp
= 0
\end{align*}
This proves that our process is indeed a martingale.
We can now continue with the properties necessary to apply Proposition~\ref{prop:matrix-freedman}.
First we need to compute $R$ in Proposition \ref{prop:matrix-freedman}.
In our modified process, all the way up to time $t-1$, we are guaranteed
to have $\normsmall{\:Y_{s-1}} \leq \varepsilon$ (or the process is stopped).
Under this condition, \citet[App. A, Lemma 1]{alaoui2014fast} show that it holds deterministically that
\begin{align*}
        \frac{\tau_{i,s}}{\alpha\beta\deff{\gamma}_s} = \frac{p_{i,s}}{\alpha\beta} \leq \wt{p}_{i,s} \leq p_{i,s} = \frac{\tau_{i,s}}{\deff{\gamma}_s}
\end{align*}
Therefore, we know that
\begin{align*}
        \frac{\tau_{i,t}}{\alpha\beta\deff{\gamma}_t} = \frac{p_{i,t}}{\alpha\beta} \leq \wt{p}_{i,t} \leq \wt{p}_{i,s}
\end{align*}
From Subroutine \shrinkexpandop, we know that the condition to try to increase
$b_{i,s}$ is that $b_{i,s}\wt{p}_{i,s} \leq 1/\wb{q}$, let $M_i$ the smallest integer such that
\begin{align*}
2\frac{\alpha\beta}{p_{i,t}\wb{q}} \geq \frac{\alpha\beta}{p_{i,t}\wb{q}} + 1 \geq M_{i} \geq \frac{\alpha\beta}{p_{i,t}\wb{q}} \geq \frac{1}{\wt{p}_{i,t}\wb{q}}.
\end{align*}
from the condition, we know that the Algorithm will never try to increase
$b_{i,s}$ above $M_i$, and that if a column reaches this quantity, we will
surely keep it in the dictionary forever.
Therefore
\begin{align*}
\lambda_{\max}(\:X_{i,s})
&=\lambda_{\max}\left(\left(b_{i,s-1} - b_{i,s}\right)\:\psi_{i}\:\psi_{i}^\transp\right)
\leq \lambda_{\max}\left( b_{i,s-1} \:\psi_{i}\:\psi_{i}^\transp\right)\\
&\leq \lambda_{\max}\left( M_i \:\psi_{i}\:\psi_{i}^\transp\right)
\leq \lambda_{\max}\left( 2\frac{\alpha\beta}{p_{i,t}\wb{q}} \:\psi_{i}\:\psi_{i}^\transp\right)
=\lambda_{\max}\left( \frac{2\alpha\beta\deff{\gamma}_t}{\tau_{i,t}\wb{q}} \:\psi_{i}\:\psi_{i}^\transp\right)\\
&\leq\lambda_{\max}\left(\ \frac{2\alpha\beta\deff{\gamma}_t}{\wb{q}} \:I\right)
=\lambda_{\max}\left( \frac{2\alpha\beta\deff{\gamma}_t}{\wb{q}} \:I\right)
\leq 2\alpha\beta\deff{\gamma}_t/\wb{q} = R
\end{align*}
We need now to analyze $\:W_t$. Again, define all the past filtration as
$\mathcal{F}_{i,s} = \{\{\:X_{k,s}\}_{k=0}^{i-1}, \:Y_{s-1} \}$.
We have
\begin{align*}
\:W_{t} = \sum_{s=1}^{t}\sum_{i=1}^t \expectedvalue \left[ \:X_{i,s}^2 \condbar \mathcal{F}_{i,s} \right]
=\sum_{s=1}^{t}\sum_{i=1}^t \expectedvalue \left[ \left(b_{i,s-1} - b_{i,s}\right)^{2}\:\psi_{i}\:\psi_{i}^\transp\:\psi_{i}\:\psi_{i}^\transp  \condbar \mathcal{F}_{i,s} \right]
\end{align*}
Again, without loss of generality, assume $b_{i,s} = l'$, $b_{i,s-1} = l$.
We compute
\begin{align*}
&\sum_{s=1}^{t}\expectedvalue \left[ \left(b_{i,s-1} - b_{i,s}\right)^{2} \condbar \mathcal{F}_{i,s} \right]
=\sum_{s=1}^{t}b_{i,s-1}^2 - 2b_{i,s-1}\expectedvalue \left[ b_{i,s} \condbar \mathcal{F}_{i,s} \right] + \expectedvalue \left[ b_{i,s}^{2} \condbar \mathcal{F}_{i,s} \right]\\
&=\sum_{s=1}^{t}b_{i,s-1}^2 - 2b_{i,s-1}^2 + \expectedvalue \left[ b_{i,s}^{2} \condbar \mathcal{F}_{i,s} \right]
=\sum_{s=1}^{t} \expectedvalue \left[ b_{i,s}^{2} \condbar \mathcal{F}_{i,s} \right] - b_{i,s-1}^2
\end{align*}
Now,
\begin{align*}
&\expectedvalue\left[{b_{i,s}^2}\condbar \mathcal{F}_{i,s}\right]
= {l'}^2 \probability\left(\mathcal{B}\left(\frac{l'-1}{l'}\right) = 1 \cap \mathcal{B}\left(\frac{l'-2}{l'-1}\right) = 1 \cap \dots \cap\mathcal{B}\left(\frac{l}{l+1}\right) = 1\right)\\
&= {l'}^2 \probability\left(\mathcal{B}\left(\frac{l'-1}{l'}\right) = 1 \right)\probability\left( \mathcal{B}\left(\frac{l'-2}{l'-1}\right) = 1 \right)\probability\left( \dots\right) \probability\left(\mathcal{B}\left(\frac{l}{l+1}\right) = 1\right)\\
&= {l'}^2 \frac{l'-1}{l'} \frac{l'-2}{l'-1} \dots\frac{l}{l+1} = l'l = b_{i,s}b_{i,s-1}
\end{align*}
Let $B_i = \max_{s=1}^t b_{i,s}$ be the maximum achieved by $b_{i,s}$ before
failing a Bernoulli trial and being set to zero forever, and let $d_i = \argmax_{s=1}^t b_{i,s}$ be the
timestep where this is achieved. We can see
\begin{align*}
\:W_{t} 
&=\sum_{i=1}^t \sum_{s=1}^{t}b_{i,s-1}(b_{i,s} - b_{i,s-1})\:\psi_{i}\:\psi_{i}^\transp\:\psi_{i}\:\psi_{i}^\transp
=\sum_{i=1}^t \sum_{s=1}^{d_i}b_{i,s-1}(b_{i,s} - b_{i,s-1})\:\psi_{i}\:\psi_{i}^\transp\:\psi_{i}\:\psi_{i}^\transp\\
&=\sum_{i=1}^t \left(b_{i,d_i-1}b_{i,d_i} - \sum_{s=1}^{d_i-1}b_{i,s}(b_{i,s} - b_{i,s-1}) - b_{i,0}\right)\:\psi_{i}\:\psi_{i}^\transp\:\psi_{i}\:\psi_{i}^\transp\\
&\preceq\sum_{i=1}^t b_{i,d_i}b_{i,d_i} \:\psi_{i}\:\psi_{i}^\transp\:\psi_{i}\:\psi_{i}^\transp
=\sum_{i=1}^t B_i^2 \:\psi_{i}\:\psi_{i}^\transp\:\psi_{i}\:\psi_{i}^\transp
\end{align*}
Where we eliminated the inner negative summation because each of its elements
$b_{i,s}(b_{i,s} - b_{i,s-1})$ is surely positive or zero, and
the overall summation is negative.
We now have all the elements to apply the Freedman inequality, but we do
not know which value of $\sigma^2$ will hold in high probability.
Therefore, we will apply the Chernoff inequality to $\:W_t$ itself.
To begin, we note that the maximum number of trials that the algorithm
will carry out on $b_{i,s}$ is bounded by $\frac{1}{\wt{p}_{i,t}\wb{q}} + 1$.
Because $\wt{p}_{i,t}$ is not independent from $\wt{p}_{j,t}$, we have also
that $B_i$ is not independent from $B_j$. To simplify the analysis, we
will consider a process that continue trying to increase $b_{i,s}$ until
it reaches $M_i$. Define $B'_i$ as the maximum achieved in this augmented process.
Clearly, $B'_i \geq B_i$, because we only give $b_{i,s}$ more possibilities
to increase. But since $M_i$ is a fixed quantity, $B'_i$ is independent of $B'_j$.
Therefore we can write
\begin{align*}
\:W_{t}
\preceq\sum_{i=1}^t B_{i}^2\:\psi_{i}\:\psi_{i}^\transp\:\psi_{i}\:\psi_{i}^\transp
\preceq\sum_{i=1}^t {B'_{i}}^2\:\psi_{i}\:\psi_{i}^\transp\:\psi_{i}\:\psi_{i}^\transp
=\sum_{i=1}^t \:V_{i}
\end{align*}
and analyze the sum of independent of matrices $\:V_{i}$.
Again,
we know that the algorithm will never try to increase
$b_{i,s}$ above $M_i$, and therefore $B_i \leq M_i$.
We have
\begin{align*}
\lambda_{\max}(\:V_{i})
&=\lambda_{\max}\left({B_{i}'}^2\:\psi_{i}\:\psi_{i}^\transp\:\psi_{i}\:\psi_{i}^\transp\right)
\leq\lambda_{\max}\left(M_i^2\:\psi_{i}\:\psi_{i}^\transp\:\psi_{i}\:\psi_{i}^\transp\right)\\
&\leq \lambda_{\max}\left( \left(\frac{2\alpha\beta\deff{\gamma}_t}{\tau_{i,t}\wb{q}}\right)^2 \:\psi_{i}\:\psi_{i}^\transp\:\psi_{i}\:\psi_{i}^\transp\right)\\
&\leq\lambda_{\max}\left(\left(\frac{2\alpha\beta\deff{\gamma}_t}{\wb{q}}\right)^2 \:I\right)
\leq 4\left(\frac{\alpha\beta\deff{\gamma}_t}{\wb{q}}\right)^2 = R
\end{align*}
To compute $\mu$ we derive
\begin{align*}
\expectedvalue[ \:V_s ]
&= \expectedvalue_{1:M_i}\left[ \sum_{k=1}^{M_i}k^2\indfunc\{b_{i,k+1} = 0 \cap b_{i,k} \neq 0\}\:\psi_{i}\:\psi_{i}^\transp\:\psi_{i}\:\psi_{i}^\transp\right ]\\
&= \sum_{k=1}^{M_i}\expectedvalue_{1:k}\left[ k^2\indfunc\{b_{i,k+1} = 0 \cap b_{i,k} \neq 0\}\:\psi_{i}\:\psi_{i}^\transp\:\psi_{i}\:\psi_{i}^\transp\right ]
= \sum_{k=1}^{M_i} k^2 \probability(b_{i,k+1} = 0) \probability(b_{i,k} \neq 0)\:\psi_{i}\:\psi_{i}^\transp\:\psi_{i}\:\psi_{i}^\transp\\
&= \sum_{k=1}^{M_i} k^2 \left(1 - \frac{k}{k+1}\right) \frac{1}{k}\:\psi_{i}\:\psi_{i}^\transp\:\psi_{i}\:\psi_{i}^\transp
= \sum_{k=1}^{M_i} k \left(\frac{k + 1 -k}{k+1}\right) \:\psi_{i}\:\psi_{i}^\transp\:\psi_{i}\:\psi_{i}^\transp
= \sum_{k=1}^{M_i} \frac{k}{k+1} \:\psi_{i}\:\psi_{i}^\transp\:\psi_{i}\:\psi_{i}^\transp\\
&\preceq \sum_{k=1}^{M_i} \:\psi_{i}\:\psi_{i}^\transp\:\psi_{i}\:\psi_{i}^\transp
= M_i \:\psi_{i}\:\psi_{i}^\transp\:\psi_{i}\:\psi_{i}^\transp
\preceq \frac{2\alpha\beta\deff{\gamma}_t}{\tau_{i,t}\wb{q}} \:\psi_{i}\:\psi_{i}^\transp\:\psi_{i}\:\psi_{i}^\transp
\preceq \frac{2\alpha\beta\deff{\gamma}_t}{\wb{q}} \:\psi_{i}\:\psi_{i}^\transp
\end{align*}
Therefore
 \begin{align*}
  \mu = \norm{\sum\nolimits_{i=1}^t{\expectedvalue[\:V_i]} }{2}
  \leq\norm{\sum\nolimits_{j}\frac{2\alpha\beta\deff{\gamma}_t}{\wb{q}}\:\psi_{i}\:\psi_{i}^\transp }{2}
  =\norm{\frac{2\alpha\beta\deff{\gamma}_t}{\wb{q}}\:\Psi\:\Psi^\transp }{2}
  \leq\frac{2\alpha\beta\deff{\gamma}_t}{\wb{q}}
 \end{align*}
We can now apply Proposition \ref{prop:matrix-chernoff} with $\mu$, $R = \mu^2$
and $\delta = 2$.
\begin{align*}
&\probability\left(\lambda_{\max}\left(\sum\nolimits_{j}{\:X_{j}}\right)\geq 3\frac{2\alpha\beta\deff{\gamma}_t}{\wb{q}}\right)
\leq \probability\left(\lambda_{\max}\left(\sum\nolimits_{j}{\:X_{j}}\right)\geq (1+\delta)\mu\right)\\
&\leq t\left( \frac{e^2}{27} \right)^{1/\mu}
\leq t\exp\left\{  -\frac{1}{\mu}(\log(27) - 2)\right\}
\leq t\exp\left\{  -\frac{\wb{q}}{2\alpha\beta\deff{\gamma}_t}\right\}.
\end{align*}
Therefore with high probability we have
\begin{align*}
\lambda_{\max}(\:W_{t})
\leq 
\lambda_{\max}\left(\sum_{i=1}^t \:V_{i}\right) \leq 3\frac{2\alpha\beta\deff{\gamma}_t}{\wb{q}} = \sigma^2
\end{align*}
Plugging this in the Freedman bound
\begin{align*}
&\probability\left( \lambda_{\max}(\:Y_{t}) \geq \varepsilon \cap \lambda_{\max}(\:W_{t}) \leq 3\frac{2\alpha\beta\deff{\gamma}_t}{\wb{q}}\right)\\
	&\leq t \cdot \exp \left\{ - \frac{ -\varepsilon^2/2 }{ (3 + \varepsilon/3)\frac{2\alpha\beta\deff{\gamma}_t}{\wb{q}} }\right\}
	= t \cdot \exp \left\{ - \frac{ \varepsilon^2/2 }{ (3 + \varepsilon/3) }\frac{\wb{q}}{2\alpha\beta\deff{\gamma}_t}\right\}.
\end{align*}
Therefore, by carefully choosing $\wb{q}$ we have that
\begin{align*}
&\probability\left( \lambda_{\max}(\wh{\:Y}_{t}) \geq \varepsilon\right)
\leq \probability\left( \lambda_{\max}(\:Y_{t}) \geq \varepsilon\right)\\
&\leq \probability\left( \lambda_{\max}(\:Y_{t}) \geq \varepsilon \cap \lambda_{\max}(\:W_{t}) \leq \sigma^2\right)
+ \probability\left( \lambda_{\max}(\:W_{t}) > \sigma^2\right)
\leq \frac{\delta}{4t} + \frac{\delta}{4t} \leq \frac{\delta}{2t}
\end{align*}

\paragraph{Space complexity}
We must now separately bound the number of columns present in the dictionary
at each time step. This is equivalent, at time $t$,
to counting how many $b_{i,t}$ are different than 0.
Again, from the terminating condition in the algorithm, we know that
\begin{align*}
    \probability(b_{i,t} \neq 0) = 1/b_{i,t} \leq \wb{q}\wt{p}_{i,t} \leq \wb{q}p_{i,t}
\end{align*}
Let $z_{i,t} = \indfunc\{b_{i,t} \neq 0\}$ be the random variable
that indicates whether we column $i$ survived until time $t$ or not.
Notice the trial of all columns are independent. By definition $z_{i,t}$ are
Bernoulli random variables and their probability parameter is upper bounded by $\wb{p}_{i,t} = \min\{\wb{q}p_{i,t},1\}$. We can rewrite the total number of columns kept as
    \begin{align*}
        Q_t = \sum_{i=1}^{t} z_{i,t}
    \end{align*}
    First, it is easy to see that $\expectedvalue[\sum_{i=1}^{t} z_{i}] \leq \wb{q}$.
     Using Hoeffding bound we can show
    \begin{align*}
        &\probability\left( \sum_{i=1}^t z_i \geq g\wb{q} \right) = \inf_\theta\probability\left( e^{ \sum_{i=1}^t \theta z_i} \geq e^{\theta g \wb{q}} \right)\\
        &\leq \inf_\theta \frac{\expectedvalue \left[ e^{ \sum_{i=1}^t \theta z_i}\right]}{e^{\theta g\wb{q}}}
        = \inf_\theta \frac{\expectedvalue \left[  \prod_{i=1}^t e^{ \theta z_i}\right]}{e^{\theta g\wb{q}}}
        = \inf_\theta \frac{ \prod_{i=1}^t \expectedvalue \left[ e^{ \theta z_i}\right]}{e^{\theta g\wb{q}}}\\
        &=\inf_\theta \frac{ \prod_{i=1}^t (\wb{p}_{i,t} e^\theta + (1-\wb{p}_{i,t}))}{e^{\theta g\wb{q}}}
        = \inf_\theta \frac{ \prod_{i=1}^t (1+\wb{p}_{i,t}( e^\theta -1))}{e^{\theta g\wb{q}}}
        \leq \inf_\theta \frac{ \prod_{i=1}^t e^{\wb{p}_{i,t}( e^\theta -1)}}{e^{\theta g\wb{q}}}\\
        &\leq \inf_\theta \frac{e^{\wb{q}( e^\theta -1)}}{e^{\theta g\wb{q}}}
        = \inf_\theta e^{(\wb{q} e^\theta -\wb{q} - \theta g\wb{q})},
    \end{align*}
where we use the fact that $\sum_{i=1}^t p_{i,t} \leq 1$.
The choice of $\theta$ minimizing the previous expression is obtained as
    \begin{align*}
        \frac{d}{d\theta}e^{( \wb{q} e^\theta -\wb{q} - \theta g\wb{q})}
        = e^{( \wb{q}e^\theta -\wb{q} - \theta g\wb{q})}(\wb{q}e^\theta - g \wb{q}) = 0,
    \end{align*}
    and thus $\theta = \log (g)$. Finally
    \begin{align*}
        &\probability\left(  \sum_{i=1}^t z_i \geq g\wb{q} \right)
        \leq \inf_\theta e^{ \wb{q}(e^\theta -1 - \theta g)}
        = e^{ \wb{q}(g  -1 -  g \log g)} \leq e^{-\wb{q}g   (\log g - 1)}.
    \end{align*}
    Taking $g = e^{2}$, we have $\log(g) = 2$ and that gives us
    \begin{align*}
        \probability\left(  \sum_{i=1}^t z_i \geq e^2\wb{q} \right)
        \leq& \exp\left\{-e^2\wb{q}(2 - 1)\right\}\\
        \leq& \exp\left\{-e^2\frac{28\alpha\beta\deff{\gamma}_{t}}{\varepsilon^{2}}\log\left(\frac{4t}{\delta}\right)\right\} = \left(\frac{\delta}{4t}\right)^{\left(e^2\frac{28\alpha\beta\deff{\gamma}_{t}}{\varepsilon^{2}}\right)} \leq \frac{\delta}{2t}.
    \end{align*}
    Therefore
    \begin{align*}
        \probability\left(\norm{\:\Psi\:\Psi^\transp - \:\Psi\selmatrix_t \selmatrix_t^\transp\:\Psi^\transp}{2} \geq \varepsilon \cup Q_t \geq 8\wb{q} \right) \leq \frac{\delta}{2t} + \frac{\delta}{2t} \leq \frac{\delta}{t},
    \end{align*}
    and this concludes the proof.

\end{proof}
\begin{proof}[Proof or Corollary \ref{cor:our-kern-reg-general}]
    To simplify the proof, we first introduce the quantities $\gamma = t\gamma'$
    and $\mu = t\mu'$ and $\gamma/\mu = \gamma'/\mu'$.
    We begin by decomposing the generalization error in a bias and variance part,
    \begin{align*}
&\mathcal{R}(\wt{\:w}_t)=\mathrm{E}_{\psi}\Vert \akermatrix_t(\akermatrix_t+t\mu' I)^{-1}(\:f^{*}+\sigma^{2}\xi)-\:f^{*}\Vert_{2}^{2}\\
&=\Vert(\akermatrix_t(\akermatrix_t+t\mu' \:I)^{-1} - \:I)\:f^{*}\Vert_{2}^{2}+\sigma^{2}\mathrm{E}_{\psi}\Vert \akermatrix_t(\akermatrix_t+t\mu' \:I)^{-1}\xi\Vert_{2}^{2}\\
&=t^2\mu'^{2}\Vert(\akermatrix_t+t\mu' \:I)^{-1}\:f^{*}\Vert_{2}^{2} + \sigma^{2} \Tr (\akermatrix_t^{2}(\akermatrix_t+t\mu' \:I)^{-2})\\
&:= \mbox{bias} (\akermatrix_t)^{2} + \mbox{variance} (\akermatrix_t)
    \end{align*}
    From the proof of \citet[App. A, Lemma 2]{alaoui2014fast}, we have
    \begin{align*}
        \Vert(\akermatrix_t+t\mu' \:I)^{-1}\:f^{*}\Vert_{2}
        \leq& \Vert(\kermatrix_t+t\mu' \:I)^{-1}\:f^{*}\Vert_{2}\left( 1 + \frac{t\gamma'}{1-\varepsilon}\Vert(\akermatrix_t+t\mu' \:I)^{-1}\Vert_{2}\right)\\
        \leq& \Vert(\kermatrix_t+t\mu' \:I)^{-1}\:f^{*}\Vert_{2}\left( 1 + \frac{\gamma'/\mu'}{1-\varepsilon}\right)
    \end{align*}
    Therefore
    \begin{align*}
         \mbox{bias} (\akermatrix_t)^{2} \leq \left( 1 + \frac{\gamma'/\mu'}{1-\varepsilon}\right)^{2}\mbox{bias}(\kermatrix_t)^2
    \end{align*}
    It is easy to see that the variance decreases if we use $\akermatrix_t$
    instead of $\kermatrix_t$.
    We can rewrite the variance as
    \begin{align*}
        \mbox{variance} (\kermatrix_t) = \Tr(\kermatrix_t^2(\kermatrix_t + t\mu\:I)^{-2}) = \sum_{i=1}^{t} \frac{\lambda_i^{2}}{(\lambda_{i} + t\mu')^{2}},
    \end{align*}
    where $\lambda_i$ are the eigenvalues of the kernel matrix and it shows that the variace is strictly increasing in $\lambda_{i}$. Because
    $\akermatrix_t \preceq \kermatrix_t$, the same ordering applies to each eigenvalue
    of the two matrices, and therefore
    \begin{align*}
        \mbox{variance}(\akermatrix_t) \leq \mbox{variance}(\kermatrix_t).
    \end{align*}
    Putting it all together
    \begin{align*}
\mathcal{R}(\wt{\:w}_t)=& \mbox{bias} (\akermatrix_t)^{2} + \mbox{variance} (\akermatrix_t)\\
\leq& \left( 1 + \frac{\gamma'/\mu'}{1-\varepsilon}\right)^{2}\mbox{bias} (\kermatrix_t)^{2} + \mbox{variance} (\kermatrix_t)\\
\leq& \left( 1 + \frac{\gamma'/\mu'}{1-\varepsilon}\right)^{2}(\mbox{bias} (\kermatrix_t)^{2} + \mbox{variance} (\kermatrix_t)) = \left( 1 + \frac{\gamma'/\mu'}{1-\varepsilon}\right)^{2}\mathcal{R}(\wh{\:w}_t)\\
=& \left( 1 + \frac{\gamma/\mu}{1-\varepsilon}\right)^{2}\mathcal{R}(\wh{\:w}_t)
    \end{align*}

\end{proof}

\section{Extended proofs of Section \ref{sec:incremental-oracle}}\label{sec:app:incremental-oracle}

\begin{proof}[Proof of Lemma \ref{lem:fast-rls}]
    Subtracting $\kermatrix_{t+1} - \overline{\kermatrix}_{t+1}$ and recalling the block matrix multiplication formula we have
    \begin{align*}
        &\left[
    \begin{array}{c|c}
        \:x_{t+1}^\transp & x_{t+1}
\end{array}\right]
        \left[
    \begin{array}{c|c}
        \kermatrix_t - \akermatrix_{t}& \:0 \\
    \hline
    \:0^\transp & 0
    \end{array}
    \right] 
        \left[
    \begin{array}{c}
        \:x_{t+1} \\
        \hline
        x_{t+1}
    \end{array}
\right] = \:x_{t+1}^\transp (\kermatrix_t - \bkermatrix) \:x_{t+1} \leq \frac{\gamma}{1-\varepsilon}\:x_{t+1}^\transp \:x_{t+1}
\end{align*}
Therefore, $\bkermatrix_{t+1}$ satisfies  $\eqref{cond:alaoui-nyst-guar}$.
For the remainder of the proof we drop the dependency on time $t+1$,
and simply write $\kermatrix$ and $\bkermatrix$.
Let $\eta = \frac{2-\varepsilon}{1-\varepsilon}$, from Proposition~\ref{prop:alaoui-nyst-guar}, we derive
    \begin{align*}
    (\kermatrix + \gamma\:I)^{-1}  \succeq (\bkermatrix + \eta\gamma\:I)^{-1} \succeq \left(\kermatrix + \eta\gamma\:I\right)^{-1} \succeq \frac{1}{\eta}(\kermatrix + \gamma\:I)^{-1}
    \end{align*}
    We need to prove something along the lines of
\begin{align*}
    \tau_i =& \:k_i^\transp \left(\kermatrix + \gamma\:I\right)^{-1} \:e_i =\:e_i^\transp \kermatrix \left(\kermatrix + \gamma\:I\right)^{-1} \:e_i = \:e_i^\transp \kermatrix^{1/2} \left(\kermatrix + \gamma\:I\right)^{-1}\kermatrix^{1/2} \:e_i\\
    &\geq \:e_i^\transp \kermatrix^{1/2} \left(\bkermatrix + \eta\gamma\:I\right)^{-1}\kermatrix^{1/2} \:e_i \geq \frac{1}{\eta}\:e_i^\transp \kermatrix^{1/2} \left(\kermatrix + \gamma\:I\right)^{-1}\kermatrix^{1/2} \:e_i = \frac{1}{\eta}\tau_i
\end{align*}
where the middle line would be our estimator.
The problem is that we do not have access to $\kermatrix^{1/2}$ (it would
take too much time and space to compute), and we do not want our bound
to depend on the smallest eigenvalue.

We can proceed as follows. Differently from \citet{alaoui2014fast} we
will only look to approximate leverage scores for columns in $\selmatrix$,
or in other words only entries strictly on the diagonal
of $\kermatrix(\kermatrix + \gamma)^{-1}$, and only for columns that we fully
store.

\noindent We begin by noting

\begin{align*}
    &\tau_i = \:k_i(\kermatrix + \gamma\:I)^{-1}\:e_i = \frac{1}{\gamma}\:e_i \kermatrix(\kermatrix + \gamma\:I)^{-1}(\gamma\:I)\:e_i =  \frac{1}{\gamma}\:e_i \kermatrix(\kermatrix + \gamma\:I)^{-1}(\kermatrix -\kermatrix + \gamma\:I)\:e_i\\
    &  \frac{1}{\gamma}\:e_i \kermatrix(\kermatrix + \gamma\:I)^{-1}(\kermatrix + \gamma\:I)\:e_i - \frac{1}{\gamma}\:e_i \kermatrix(\kermatrix + \gamma\:I)^{-1}\kermatrix\:e_i = \frac{1}{\gamma}(k_{i,i} - \:k_i (\kermatrix + \gamma\:I)^{-1}\:k_i)
\end{align*}

\noindent We can easily see that
\begin{align*}
    \tau_i =& \frac{1}{\gamma}(k_{i,i} - \:k_i (\kermatrix + \gamma\:I)^{-1}\:k_i) \\
    &\leq \frac{1}{\gamma}\left(k_{i,i} - \:k_i \left(\bkermatrix + \eta\gamma\:I\right)^{-1}\:k_i\right)
    \leq \frac{1}{\gamma}\left(k_{i,i} - \:k_i \left(\kermatrix + \eta\gamma\:I\right)^{-1}\:k_i\right)
\end{align*}
Now
\begin{align*}
&k_{i,i} - \:k_i (\kermatrix + \eta\gamma\:I)^{-1}\:k_i
= \:e_i^\transp \kermatrix \:e_i - \:e_i^\transp \kermatrix\left(\kermatrix + \eta\gamma\:I\right)^{-1}\kermatrix\:e_i\\
&= \:e_i^\transp \kermatrix\left(\:I - \kermatrix\left(\kermatrix + \eta\gamma\:I\right)^{-1}\right)\:e_i
= \eta\gamma\:e_i^\transp \kermatrix\left(\kermatrix + \eta\gamma\:I\right)^{-1}\:e_i\\
&\leq \eta\gamma\:e_i^\transp \kermatrix(\kermatrix + \gamma\:I)^{-1}\:e_i = \eta\gamma\tau
\end{align*}
Putting it all together
    \begin{align*}
    \tau \leq \frac{1}{\gamma}\left(k_{i,i} - \:k_i \left(\bkermatrix + \eta\gamma\:I\right)^{-1}\:k_i\right)\leq \frac{1}{\gamma}\left(k_{i,i} - \:k_i \left(\kermatrix + \eta\gamma\:I\right)^{-1}\:k_i\right) \leq \eta\frac{\gamma}{\gamma}\tau
    \end{align*}
\end{proof}


\begin{proof}[Proof of Lemma \ref{lem:fast-deff}]
    This proof proceeds in two steps. First, we will find upper and lower bounds
    for the term reported in Equation~\ref{eq:deff-increase-exact}. Then we will use induction, and the
    fact that we can compute $\deff{\gamma}_0$ exactly to prove the claim.
    Let $\eta = \frac{2-\varepsilon}{1-\varepsilon}$
    We begin by reminding that as a consequence of Proposition~\ref{prop:alaoui-nyst-guar},
    and of properties of nonsingular PSD matrices,
    we have
    \begin{align*}
    (\kermatrix + \gamma\:I)^{-1}  \succeq \left(\akermatrix_t + \eta\gamma_{\varepsilon}\:I\right)^{-1} \succeq \left(\kermatrix + \eta\gamma\:I\right)^{-1} \succeq \frac{1}{\eta}(\kermatrix + \gamma\:I)^{-1}
    \end{align*}
    We remind that $\kermatrix_t = \featkermatrix_t^\transp\featkermatrix_t = \:U\:\Sigma^\transp\:\Sigma\:U^\transp$
    and $\wb{\:k}_{t+1} = \featkermatrix_t^\transp\:\phi$
    with $\featkermatrix_t = \:V\:\Sigma\:U^\transp$.
    Similarly, we introduce $\akermatrix_t = \wt{\:U}\wt{\:\Sigma}^\transp\wt{\:\Sigma}\wt{\:U}^\transp$
    we have now
    \begin{align*}
        (\:\Sigma^\transp\:\Sigma + \gamma\:I)^{-1} \succeq \:U^\transp\wt{\:U}\left(\wt{\:\Sigma}^\transp\wt{\:\Sigma} + \eta\gamma\:I\right)^{-1}\wt{\:U}^\transp\:U \succeq \left(\:\Sigma^\transp\:\Sigma +\eta\gamma\:I\right)^{-1}
    \end{align*}
    We need a bound on
    \begin{align*}
        & \wt{\Delta}_t = \frac{\left( k_{t+1} - 
                \wb{\:k}_{t+1}^\transp \left(\akermatrix_{t}^{\gamma} + \eta\gamma\:I\right)^{-1}\wb{\:k}_{t+1}
        - \frac{(1-\varepsilon)^2}{4}\gamma\wb{\:k}_{t+1}^\transp (\akermatrix_{t}^{\gamma} + \gamma\:I)^{-2}\wb{\:k}_{t+1}\right)
}{\frac{1}{\eta}(k_{t+1} +\gamma - \wb{\:k}_{t+1}^\transp \left(\akermatrix_{t}^{\gamma} + \eta\gamma\:I\right)^{-1}\wb{\:k}_{t+1})}
    \end{align*}
    We will first upper and lower bound the denumerator
    \begin{align*}
        &k_{t+1} +\gamma - \wb{\:k}_{t+1}^\transp (\kermatrix_{t} + \gamma\:I)^{-1}\wb{\:k}_{t+1}\\
        &\leq k_{t+1} +\gamma - \wb{\:k}_{t+1}^\transp \left(\akermatrix_{t}^{\gamma} +\eta\gamma\:I\right)^{-1}\wb{\:k}_{t+1}\\
        &= \gamma + \phi^\transp \:V\:V^\transp \phi  - \:\phi^\transp \:V \:\Sigma \:U^\transp\wt{\:U}\left(\wt{\:\Sigma}^\transp\wt{\:\Sigma} + \eta\gamma\:I\right)^{-1}\wt{\:U}^\transp\:U\:\Sigma^\transp\:V^\transp\:\phi\\
        &\leq \gamma + \phi^\transp \:V\:V^\transp \phi  - \:\phi^\transp \:V \:\Sigma \left(\:\Sigma^\transp\:\Sigma + \eta\gamma\:I\right)^{-1}\:\Sigma^\transp\:V^\transp\:\phi\\
        &= \gamma + \:\phi^\transp \:V \left(\:I - \:\Sigma \left(\:\Sigma^\transp\:\Sigma + \eta\gamma\:I\right)^{-1}\:\Sigma^\transp\right)\:V^\transp\:\phi
    \end{align*}
    The last expression corresponds to a block diagonal matrix, where
    for all $i>t$, only the diagonal remains, with 1 on the diagonal, that we can easily upper bound with 2, because $1<2$
    for all choiches of 1 and 2.
    We want to study the $i$-th entry in the diagonal matrix
    \begin{align*}
        1-\frac{\sigma^{2}}{\sigma^{2} + \eta\gamma} = \eta\frac{\gamma}{\sigma^{2} + \eta\gamma}\leq  \eta\frac{\gamma}{\sigma_{2} + \gamma} =\eta\left(1-\frac{\sigma^{2}}{\sigma^{2} + \gamma}\right)
    \end{align*}
    Thus,
    \begin{align*}
        &\gamma + \:\phi^\transp \:V \left(\:I - \:\Sigma \left(\:\Sigma^\transp\:\Sigma + \eta\gamma\:I\right)^{-1}\:\Sigma^\transp\right)\:V^\transp\:\phi\\
        & \leq \gamma + \eta\:\phi^\transp \:V (\:I - \:\Sigma (\:\Sigma^\transp\:\Sigma + \gamma\:I)^{-1}\:\Sigma^\transp)\:V^\transp\:\phi\\
        & \leq \eta(\gamma + \:\phi^\transp \:V (\:I - \:\Sigma (\:\Sigma^\transp\:\Sigma + \gamma\:I)^{-1}\:\Sigma^\transp)\:V^\transp\:\phi)\\
        & = \eta(k_{t+1} +\gamma - \wb{\:k}_{t+1}^\transp (\kermatrix_{t} + \gamma\:I)^{-1}\wb{\:k}_{t+1})
    \end{align*}
    We should now bound the numerator.
    We will also need bounds on the squares of the inequalities we used this far. In particular we will
    begin with the one we have for free,
    \begin{align*}
        (\kermatrix_t - \akermatrix_{t})^{2} \preceq \frac{\gamma^{2}}{(1-\varepsilon)^{2}}\:I,
    \end{align*}
    because $\:I$ commutes with everything.
    Given PD matrices $\:A$ and $\:B$ we have $\:A^2 \geq \:B^2$ if and only if
    the largest singular value of $\:A^{-1}\:B$ is smaller or equal than 1.
    We begin with
    \begin{align*}
        (\kermatrix_t + \gamma\:I)^2 \preceq \left(\frac{2}{1-\varepsilon}(\akermatrix_t + \gamma\:I)\right)^2.
    \end{align*}
    The singular values of matrix $\:A$ are the square root of the eigenvalues
    of $\:A^\transp\:A$ or $\:A\:A^\transp$. They can be also defined
    as $\normsmall{\:A}_{2}$. In our case, we want to show
    \begin{align*}
        \norm{(\kermatrix_t + \gamma\:I)\left(\frac{2}{1-\varepsilon}(\akermatrix_t + \gamma\:I)\right)^{-1}}{2} \leq 1.
    \end{align*}
    We have
    \begin{align*}
        &\norm{(\kermatrix_t + \gamma\:I)\left(\frac{2}{1-\varepsilon}(\akermatrix_t + \gamma\:I)\right)^{-1}}{2}
        = \norm{(\kermatrix_t - \akermatrix_t + \akermatrix_t + \gamma\:I)\left(\frac{2}{1-\varepsilon}(\akermatrix_t + \gamma\:I)\right)^{-1}}{2}\\
        &\leq \norm{(\kermatrix_t - \akermatrix)\left(\frac{2}{1-\varepsilon}(\akermatrix_t + \gamma\:I)\right)^{-1}}{2} + \norm{(\akermatrix_t + \gamma\:I)\left(\frac{2}{1-\varepsilon}(\akermatrix_t + \gamma\:I)\right)^{-1}}{2}
    \end{align*}
    Because $(\kermatrix_t - \akermatrix)^{2} \preceq \gamma^{2}/(1-\varepsilon)^2\:I$
    holds for the squares, we can use it on the first term and obtain
    \begin{align*}
        \norm{(\kermatrix_t + \gamma\:I)\left(\frac{2}{1-\varepsilon}(\akermatrix_t + \gamma\:I)\right)^{-1}}{2} \leq  \frac{(1-\varepsilon)}{2}\left(\max_{i}\frac{\gamma}{(1-\varepsilon)(\wt{\lambda}_{i} + \gamma)} + \max_{i}\frac{\wt{\lambda}_{i} + \gamma}{\wt{\lambda}_{i} + \gamma}\right) \leq \frac{1}{2} {2}
    \end{align*}
    Similarly for
    \begin{align*}
        (\akermatrix_t + \gamma\:I)^2 \preceq \left(\frac{2}{1-\varepsilon}(\kermatrix_t + \gamma\:I)\right)^2 
    \end{align*}
    We have
    \begin{align*}
        &\norm{(\akermatrix_t + \gamma\:I)\left(\frac{2}{1-\varepsilon}(\kermatrix_t + \gamma\:I)\right)^{-1}}{2} = \norm{(\akermatrix_t - \kermatrix_t  + \kermatrix + \gamma\:I)\left(\frac{2}{1-\varepsilon}(\kermatrix_t + \gamma\:I)\right)^{-1}}{2}\\
        &\leq \norm{(\akermatrix_t - \kermatrix_t)\left(\frac{2}{1-\varepsilon}(\kermatrix_t + \gamma\:I)\right)^{-1}}{2} + \norm{(\kermatrix + \gamma\:I)\left(\frac{2}{1-\varepsilon}(\kermatrix_t + \gamma\:I)\right)^{-1}}{2}
    \end{align*}
    And
    \begin{align*}
        \norm{(\akermatrix_t + \gamma\:I)\left(\frac{2}{1-\varepsilon}(\kermatrix_t + \gamma\:I)\right)^{-1}}{2} \leq  \frac{1-\varepsilon}{2}\left(\max_{i}\frac{\gamma}{(1-\varepsilon)(\lambda_{i} + \gamma)} + \max_{i}\frac{\lambda_i+\gamma}{\lambda_{i} + \gamma}\right) \leq \frac{1}{2} {2}
    \end{align*}
    Therefore
    \begin{align*}
        \frac{(1-\varepsilon)^{4}}{16}(\kermatrix_t + \gamma\:I)^{-2} \preceq  \frac{(1-\varepsilon)^{2}}{4}(\akermatrix_t + \gamma\:I)^{-2} \preceq (\kermatrix_t + \gamma\:I)^{-2}
    \end{align*}
    Similarly to the denominator, we derive
    \begin{align*}
        &k_{t+1} - \wb{\:k}_{t+1}^\transp (\kermatrix_{t} + \gamma\:I)^{-1}\wb{\:k}_{t+1}- \gamma\wb{\:k}_{t+1}^\transp (\kermatrix_{t} + \gamma\:I)^{-2}\wb{\:k}_{t+1}\\
        \leq& k_{t+1} - \wb{\:k}_{t+1}^\transp \left(\akermatrix_{t}^{\gamma} + \eta\gamma\:I\right)^{-1}\wb{\:k}_{t+1}- \frac{(1-\varepsilon)^2}{4}\gamma\wb{\:k}_{t+1}^\transp (\akermatrix_{t}^{\gamma} + \gamma\:I)^{-2}\wb{\:k}_{t+1}\\
        =& \:\phi^\transp \:V\:V^\transp \:\phi -
        \:\phi^\transp \:V \:\Sigma \:U^\transp\wt{\:U}(\wt{\:\Sigma}^\transp\wt{\:\Sigma} + \eta\gamma\:I)^{-1}\wt{\:U}^\transp\:U\:\Sigma^\transp\:V^\transp\:\phi
        - \frac{(1-\varepsilon)^2}{4}\gamma\:\phi^\transp \:V \:\Sigma \:U^\transp\wt{\:U}(\wt{\:\Sigma}^\transp\wt{\:\Sigma} + \gamma\:I)^{-2}\wt{\:U}^\transp\:U\:\Sigma^\transp\:V^\transp\:\phi\\
        \leq& \:\phi^\transp \:V\:V^\transp \:\phi - \:\phi^\transp \:V \:\Sigma \left(\:\Sigma^\transp\:\Sigma + \eta\gamma\:I\right)^{-1}\:\Sigma^\transp\:V^\transp\:\phi
        - \frac{(1-\varepsilon)^4}{16}\gamma\:\phi^\transp \:V \:\Sigma (\:\Sigma^\transp\:\Sigma + \gamma\:I)^{-2}\:\Sigma^\transp\:V^\transp\:\phi\\
        =& \:\phi^\transp \:V \left(\:I - \:\Sigma \left(\:\Sigma^\transp\:\Sigma + \eta\gamma\:I\right)^{-1}\:\Sigma^\transp - \frac{(1-\varepsilon)^4}{16}\gamma\:\Sigma (\:\Sigma^\transp\:\Sigma + \gamma\:I)^{-2}\:\Sigma^\transp\right)\:V^\transp\:\phi
    \end{align*}
    Again, we study the object
    \begin{align*}
        &1-\frac{\sigma^{2}}{\sigma^{2} + \eta\gamma} - \frac{(1-\varepsilon)^4}{16}\frac{\gamma\sigma^{2}}{(\sigma^{2} + \gamma)^2}
        \leq 1-\frac{\sigma^{2}}{\sigma^{2} + \eta\gamma} - \frac{(1-\varepsilon)^4}{16}\frac{\gamma\sigma^{2}}{(\sigma^{2} + \eta\gamma)(\sigma^{2} + \gamma)}\\
        &=\frac{\sigma^4 +(1+\eta)\gamma\sigma^2 +\eta\gamma^2 - \sigma^4 -\gamma\sigma^{2} - ((1-\varepsilon)^{4}/16)\gamma\sigma^{2}}{(\sigma^{2} + \eta\gamma)(\sigma^{2} + \gamma)}\\
        &\leq\frac{\eta\gamma\sigma^{2} +\eta\gamma^2}{(\sigma^{2} + \gamma)^2}
        \leq \eta\left(1+ \frac{\sigma^2}{\gamma}\right)\frac{\gamma^2}{(\sigma^{2} + \gamma)^2}
        \leq \eta\left(1+ \frac{\lambda_{\max}}{\gamma}\right)\frac{\gamma^2}{(\sigma^{2} + \gamma)^2}
    \end{align*}
    Therefore
    \begin{align*}
        \Delta_{t} \leq& \wt{\Delta}_{t} \leq \eta^2\left(1+ \frac{\lambda_{\max}}{\gamma}\right)\Delta_{t}
    \end{align*}
    Now assume the inductive hypotheses that $\adeff{\gamma}_t$ is $\beta$-approximate
    holds, with $\beta = \eta^{2}\left(1+ \frac{\lambda_{\max}}{\gamma}\right)$,
    and we can see that
    \begin{align*}
        &\adeff{\gamma}_{t+1} = \adeff{\gamma}_{t} + \wt{\Delta}_t \leq \beta(\deff{\gamma}_{t} + {\Delta}_t) = \beta\deff{\gamma}_{t+1}\\
        &\adeff{\gamma}_{t+1} = \adeff{\gamma}_{t} + \wt{\Delta}_t \geq \deff{\gamma}_{t} + {\Delta}_t= \deff{\gamma}_{t+1}
    \end{align*}
    Therefore $\adeff{\gamma}_{t+1}$ is also a $\beta$-approximation. Because
    at the first iteration we can compute $\adeff{\gamma}_{0}$ exactly,
    we can prove Lemma \ref{lem:fast-deff} by induction.
\end{proof}


\end{document}